\let\cite\citep
\newcommand{\probust}{\texorpdfstring{$p^\mathrm{robust}_{\sigma}$}{probust}}
\newcommand{\pmc}{\texorpdfstring{$p^\mathrm{mc}_{\sigma}$}{pmc}}
\newcommand{\ptaylor}{\texorpdfstring{$p^\mathrm{taylor}_{\sigma}$}{ptaylor}}
\newcommand{\ptaylormvs}{\texorpdfstring{$p^\mathrm{taylor\_mvs}_{\sigma}$}{ptaylormvs}}
\newcommand{\pmmse}{\texorpdfstring{$p^\mathrm{mmse}_{\sigma}$}{pmmse}}
\newcommand{\pmmsemvs}{\texorpdfstring{$p^\mathrm{mmse\_mvs}_{\sigma}$}{pmmsemvs}}
\newcommand{\psoftmax}{\texorpdfstring{$p^\mathrm{softmax}_{T}$}{psoftmax}}
\newcommand{\probustwsigma}[1]{\texorpdfstring{$p^\mathrm{robust}_{\sigma= {#1}}$}{probust}}
\newcommand{\pmmsewsigma}[1]{\texorpdfstring{$p^\mathrm{mmse}_{\sigma = {#1}}$}{pmmse}}
\title{Characterizing Data Point Vulnerability via Average-Case Robustness}
\author[1]{\hspace{2em}Tessa Han*}
\author[2]{\hspace{3em}Suraj Srinivas*}
\author[3]{Himabindu Lakkaraju}
\affil[1,2,3]{Harvard University, Cambridge, MA}
\affil[1]{\texttt{than@g.harvard.edu, $^2$ssrinivas@seas.harvard.edu, $^3$hlakkaraju@hbs.edu}}
\begin{document}
\maketitle
\footnotetext{Accepted at Conference on Uncertainty in AI (UAI) 2024}
\footnotetext{*Equal contribution}

\begin{abstract}
Studying the robustness of machine learning models is important to ensure consistent model behaviour across real-world settings. To this end, adversarial robustness is a standard framework, which views robustness of predictions through a binary lens: either a worst-case adversarial misclassification exists in the local region around an input, or it does not. However, this binary perspective does not account for the degrees of vulnerability, as data points with a larger number of misclassified examples in their neighborhoods are more vulnerable. In this work, we consider a complementary framework for robustness, called average-case robustness, which measures the fraction of points in a local region that provides consistent predictions. However, computing this quantity is hard, as standard Monte Carlo approaches are inefficient especially for high-dimensional inputs. In this work, we propose the first analytical estimators for average-case robustness for multi-class classifiers. We show empirically that our estimators are accurate and efficient for standard deep learning models and demonstrate their usefulness for identifying vulnerable data points, as well as quantifying robustness bias of models. Overall, our tools provide a complementary view to robustness, improving our ability to characterize model behaviour. 
\end{abstract}

\section{Introduction}

A desirable attribute of machine learning models is robustness to perturbations of input data. A popular notion of robustness is adversarial robustness, the ability of a model to maintain its prediction when presented with adversarial perturbations, i.e., perturbations designed to cause the model to change its prediction. Although adversarial robustness identifies whether a misclassified example exists in a local region around an input, it fails to capture the degree of vulnerability of that example, indicated by the difficulty in finding an adversary. For example, if the model geometry is such that 99\% of the local region around an example (say, point $A$) contains correctly classified examples, this makes it harder to find an adversarial example as compared to the case where only 1\% of the local region (say, for point $B$) contains correctly classified examples, where even random perturbations may be misclassified. However, from the adversarial robustness perspective, the prediction at a point is declared either robust or not, and thus both points $A$ and $B$ are considered equally non-robust (see Figure \ref{fig:main-fig} for an illustrative example). The ease of obtaining a misclassification, or \textit{data point vulnerability}, is captured by another kind of robustness: \emph{average-case robustness}, i.e., the fraction of points in a local region around an input for which the model provides consistent predictions. \footnote{In addition to the size of the misclassified region, another factor that affects the ease of finding misclassified examples is the specific optimization method used. In this study, we aim to study model robustness in a manner agnostic to the specific optimization used, and thus, we only focus on the size of the misclassified region. We believe this study can form the basis for future studies looking into the properties (e.g., “ease of identifying misclassified examples”) of specific optimization methods.} If this fraction is less than one, then an adversarial perturbation exists. The smaller this fraction, the easier it is to find a misclassified example. 
While adversarial robustness is motivated by model security, average-case robustness is better suited for model and dataset understanding, and debugging.

Standard approaches to computing average-case robustness involve Monte-Carlo sampling, which is computationally inefficient especially for high-dimensional data. For example, \citet{cohen2019certified} use $n=100,000$ Monte Carlo samples per data point to compute this quantity. 
In this paper, we propose to compute average-case robustness via analytical estimators, reducing the computational burden, while simultaneously providing insight into model decision boundaries. Our estimators are exact for linear models and well-approximated for non-linear models, especially those having a small local curvature \cite{moosavi2019robustness, srinivas2022efficient}. Overall, our work makes the following contributions:

\begin{enumerate}
    \item We derive novel analytical estimators to efficiently compute the average-case robustness of multi-class classifiers. We also provide estimation error bounds for these estimators that characterizes approximation errors for non-linear models. 

    \item We empirically validate our analytical estimators on standard deep learning models and datasets, demonstrating that these estimators accurately and efficiently estimate average-case robustness.

    \item We demonstrate the usefulness of our estimators in two case studies: identifying vulnerable samples in a dataset and measuring class-level robustness bias \cite{nanda2021fairness}, where we find that standard models exhibit significant robustness bias among classes. 
\end{enumerate}

To our knowledge, this work is the first to investigate analytical estimation of average-case robustness for the multi-class setting. In addition, the efficiency of these estimators makes the computation of average-case robustness practical, especially for large deep neural networks.

\begin{figure}
    \centering
    \includegraphics[width=0.9\linewidth]{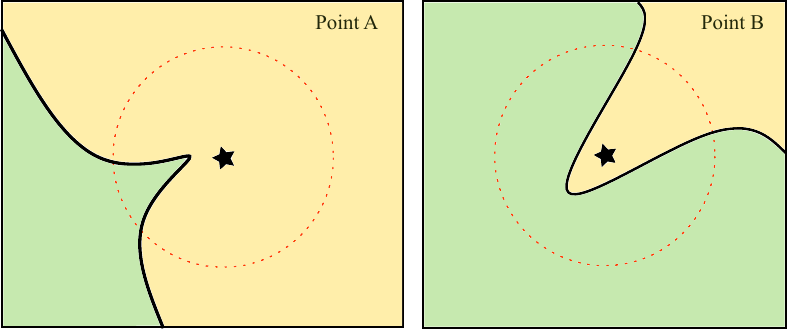}
    \caption{Consider a binary classifier (green vs. yellow) and points $A$ (left) and $B$ (right), both correctly classified to the yellow class. The dotted red circles represent $\epsilon$-balls around the data points. Although adversarial robustness rightly considers the model non-robust at both points (due to the existence of adversarial examples within the $\epsilon$-ball), it fails to discern that point $B$ has a larger fraction of misclassified points in its neighborhood, making it more vulnerable than point $A$, an aspect exactly captured by average-case robustness.}
    \label{fig:main-fig}
\end{figure}

\section{Related Work}

\textbf{Adversarial robustness.} Prior works have proposed methods to generate adversarial attacks \cite{carlini2017towards, goodfellow2014explaining, moosavi2016deepfool}, which find adversarial perturbations in a local region around a point. In contrast, this work investigates average-case robustness, which calculates the probability that a model’s prediction remains consistent in a local region around a point. Prior works have also proposed methods to certify model robustness \cite{cohen2019certified, carlini2022certified}, guaranteeing the lack of adversarial examples for a given $\epsilon$-ball under certain settings. Specifically, \citet{cohen2019certified} propose randomized smoothing, which involves computing class-wise average-case robustness, which is taken as the output probabilities of the randomized smoothed model. However, they estimate these probabilities via Monte Carlo sampling with $n=100,000$ samples, which is computationally expensive. Viewing average-case robustness from the lens of randomized smoothing, our estimators can also be seen as providing an analytical estimate of randomized smoothed models. However, in this work, we focus on their applications for model understanding and debugging as opposed to improving robustness.

\textbf{Probabilistic robustness.} Prior works have explored notions of probabilistic and average-case robustness.
For instance, \citet{fazlyab2019probabilistic, kumar2020certifying, mangal2019robustness} focus on certifying robustness of real-valued outputs to input perturbations. In contrast, this work focuses on only those output changes that cause misclassification. Like our work, \citet{franceschi2018robustness} also considers misclassifications. However, \citet{franceschi2018robustness} aims to find the smallest neighborhood with no adversarial example, while we compute the probability of misclassification in a given neighborhood. \citet{robey2022probabilistically, rice2021robustness} also aim to compute average-case robustness. However, they do so by computing the average loss over the neighborhood, while we use the misclassification rate. Closest to our work is the work by \citet{weng2019proven} which aims to certify the binary misclassification rate (with respect to a specific class to misclassify to) using lower and upper linear bounds. In contrast, our work estimates the multi-class misclassification rate, as opposed to bounding the quantity in a binary setting. A crucial contribution of our work is its applicability to multi-class classification and the emphasis on estimating, rather than bounding, robustness.

\textbf{Robustness to distributional shifts.} 
Prior works have explored the performance of models under various distributions shifts~\cite{taori2020measuring, ovadia2019can}. From the perspective of distribution shift, average-case robustness can be seen as a measure of model performance under Gaussian noise, a type of natural distribution shift. In addition, in contrast to works in distributional robustness which seek to build models that are robust to distributions shifts~\cite{thulasidasan2021effective, moayeri2022explicit}, this work focuses on measuring the vulnerability of existing models to Gaussian distribution shifts.

\section{Average-Case Robustness Estimation}
\label{sec:methods}

\newcommand{\E}{\mathop{\mathbb{E}}}
\newcommand{\R}{\mathbb{R}}
\newcommand{\X}{\mathbf{x}}
\newcommand{\W}{\mathbf{w}}
\newcommand{\U}{\mathbf{u}}
\newcommand{\matU}{\mathbf{U}}

\newcommand{\grad}{\nabla_{\X}}
\newcommand{\cdf}{\Phi_{\matU \matU^\top}}

\newtheorem{defn}{Definition}
\newtheorem{thm}{Proposition}
\newtheorem{lemma}{Lemma}
\newtheorem{remark}{Remark}

\AtAppendix{\counterwithin{lemma}{section}}
\AtAppendix{\counterwithin{thm}{section}}

\newenvironment{hproof}{%
  \renewcommand{\proofname}{Proof Idea}\proof}{\endproof}

In this section, we first describe the mathematical problem of average-case robustness estimation. Then, we present the naïve estimator based on Monte Carlo sampling and derive more efficient analytical estimators. 

\subsection{Notation and Preliminaries}

Assume that we have a neural network $f: \R^d \rightarrow \R^C$ with $C$ output classes and that the classifier predicts class $t \in [1,..., C]$ for a given input $\X \in \R^d$, i.e., $t~=~\arg \max_{i=1}^{C} f_i(\X)$, where $f_i$ denotes the logits for the $i^{th}$ class. Given this classifier, the average-case robustness estimation problem is to compute the probability of consistent classification (to class $t$) under noise perturbation of the input. 

\begin{defn} We define the \textbf{average-case robustness} of a classifier $f$ to noise $\mathcal{R}$ at a point $\X$ as
\begin{align*}
    p^\mathrm{robust}(\X, t) = P_{\epsilon \sim R} \left[ \arg\max_i f_i(x+ \epsilon) = t \right]
\end{align*}
\end{defn}

The more robust the model is in the local neighborhood around $\X$, the larger the average-case robustness measure $p^\text{robust}$($\X, t$). In this paper, given that robustness is always measured with respect to the predicted class~$t$ at $\X$, we henceforth suppress the dependence on $t$ in the notation. We also explicitly show the dependence of $p^\text{robust}$ on the noise scale $\sigma$ by denoting it as $p^\text{robust}_{\sigma}$. 

In this work, we shall consider $\mathcal{R}$ as an isotropic Normal distribution, i.e., $\mathcal{R} = \mathcal{N}(0, \sigma^2)$. However, as we shall discuss in the next section, it is possible to accommodate both non-isotropic and non-Gaussian distributions in our method. Note that for high-dimensional data ($d \rightarrow \infty$), the isotropic Gaussian distribution converges to the uniform distribution on the surface of the sphere with radius $r = \sigma \sqrt{d}$ \footnote{Alternately, if $\epsilon \sim \mathcal{N}(0, \sigma^2 / d)$, then $r = \sigma$} due to the concentration of measure phenomenon \cite{vershynin2018high}. 

Observe that when the domain of the input noise is restricted to an $\ell_p$ ball, $p^\text{robust}_{\sigma}$ generalizes the corresponding $\ell_p$ adversarial robustness. In other words, adversarial robustness is concerned with the quantity $\mathbf{1}(p^\text{robust}_{\sigma} < 1)$, i.e., the indicator function that average-case robustness is less than one (which indicates the presence of an adversarial perturbation), while this work focuses on computing the quantity \probust{} itself. In the rest of this section, we derive estimators for \probust{}.

\paragraph{The Monte-Carlo estimator.}
A naïve estimator of average-case robustness is the Monte-Carlo estimator \pmc{}. It computes the robustness of a classifier $f$ at input $\X$ by generating $M$ noisy samples of $\X$ and then calculating the fraction of these noisy samples that are classified to the same class as $\X$. In other words,

\begin{align*}
    p_{\sigma}^\text{robust}(\X) &=P_{\epsilon \sim \mathcal{N}(0,\sigma^2)} \left[ \arg\max_i f_i(\X+ \epsilon) = t \right] \\
    &= \E_{\epsilon \sim \mathcal{N}(0,\sigma^2)} \left[ \mathbf{1}_{\arg\max_i f_i(\X+ \epsilon) = t} \right] \\
    &\approx \frac{1}{M} \sum_{j=1}^{M} \left[ \mathbf{1}_{\arg\max_i f_i(\X+ \epsilon_j) = t} \right]
    = p_{\sigma}^\text{mc}(\X)
\end{align*}

\pmc{} replaces the expectation with the sample average of the $M$ noisy samples of $\X$ and has been used in prior work \citep{nanda2021fairness}. Technically, the error for the Monte-Carlo estimator is independent of dimensionality and is given by $\mathcal{O}(1/ \sqrt{M})$ \citep{vershynin2018high}. However, in practice, for neural networks, \pmc{} requires a large number of random samples to converge to the underlying expectation. For example, for MNIST and CIFAR10 CNNs, it takes around $M = 10,000$ samples per point for \pmc{} to converge, which is computationally expensive, and further, provides little information regarding the decision boundaries of the underlying model. Thus, we set out to address this problem by developing more efficient and informative analytical estimators of average-case robustness.

\subsection{Robustness Estimation via Linearization}

Before deriving analytical robustness estimators for non-linear models, we first consider the simpler problem of deriving this quantity for linear models. This is challenging, especially for multi-class classifiers. For example, given a linear model for a three-class classification problem with weights $w_1, w_2, w_3$ and biases $b_1, b_2, b_3$, such that $y = \arg \max_i \{w_i^\top\X + b_i \mid i \in [1,2,3] \}$, the decision boundary function between classes $i$ and $j$ is given by $y_{ij} = (w_i - w_j)^\top \X + (b_i - b_j)$. If the predicted label at $\X$ is $y = 1$, the relevant decision boundary functions are $y_{12}, y_{13}$ which characterize the decision boundaries of misclassifications from class $1$ to classes $2, 3$ respectively. To compute the total probability of misclassification, we must compute the probability of decision boundaries $y_{12}, y_{13}$ being crossed separately. Crucially, it is important not to ``double count'' the probability of both $y_{12}$ and $y_{13}$ being simultaneously crossed. Computing the probability of falling into this problematic region is non-trivial, as it depends on the relative orientations of $y_{12}$ and $y_{13}$. If they are orthogonal, then this problem is avoided, as the probability of crossing $y_{12}$ and $y_{13}$ are independent random variables. However, this is not true in general for non-orthogonal decision boundaries.
Further, this ``double counting'' problem increases in complexity with an increasing number of classes, stemming from a corresponding increase in the number of such pairwise decision boundaries. Lemma \ref{estimator-linear-models} provides an elegant solution to this combinatorial problem via the multivariate Gaussian CDF.

\textbf{Notation}: For clarity, we represent tensors by collapsing along the "class" dimension, i.e., $a_i ~ \big\vert_{i=1}^C := (a_1, a_2, ... a_i, ... a_c)$, where for an order-$t$ tensor $a_i$, the expansion $a_i~ \big\vert_{i=1}^C$ is an order-$(t+1)$ tensor. 

\newcommand{\tensor}{~\bigg\vert_{\substack{i = 1\\i \neq t}}^{C}}

\begin{lemma}
The local robustness of a multi-class linear model $f(\X) = \mathbf{w}^\top \X + b$ (with $\mathbf{w} \in \R^{d \times C}$ and $b \in \R^C$) at point $\X$ with respect to a target class $t$ is given by the following. Define weights $\U_i = \W_t - \W_i \in \R^d, \forall i \neq t$, where $\W_t, \W_i$ are rows of $\mathbf{w}$ and biases $c_i = {\U_i}^\top\X + (b_t - b_i) \in \R$. Then, 
\begin{align*}
    p^\mathrm{robust}_\sigma(\X) = \cdf \left( \frac{c_i}{\sigma \| \U_i \|_2} \tensor \right)\\
    \mathrm{where}~~\matU = \frac{\U_i}{\| \U_i \|_2} \tensor \in \R^{(C-1) \times d}
\end{align*}
and $\cdf$ is the ($C-1$)-dimensional Normal CDF with zero mean and covariance $\matU \matU^\top$.
\label{estimator-linear-models}
\end{lemma}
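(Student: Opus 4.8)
The plan is to rewrite the event $\{\arg\max_i f_i(\X+\epsilon) = t\}$ as a finite system of linear inequalities in the noise vector $\epsilon$, and then recognize the probability of that system as a single $(C-1)$-dimensional Gaussian orthant probability, i.e. a multivariate Normal CDF. First I would use the definition of $\arg\max$: the prediction at $\X+\epsilon$ is $t$ iff $f_t(\X+\epsilon) \ge f_i(\X+\epsilon)$ for all $i \ne t$ (ties form a null event under $\mathcal{N}(0,\sigma^2 I)$, so strict versus non-strict inequalities do not matter). For the linear model $f_i(\X) = \W_i^\top \X + b_i$ this is equivalent to $(\W_t-\W_i)^\top(\X+\epsilon) + (b_t-b_i) \ge 0$, i.e. $\U_i^\top \epsilon \ge -c_i$ for every $i \ne t$, with $\U_i = \W_t - \W_i$ and $c_i = \U_i^\top\X + (b_t-b_i)$ as in the statement. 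Hence $p^\mathrm{robust}_\sigma(\X) = P_{\epsilon\sim\mathcal{N}(0,\sigma^2 I)}\!\left[\,\U_i^\top\epsilon \ge -c_i,\ \forall i\ne t\,\right]$.

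Next I would identify the joint law of the linear functionals $\{\U_i^\top\epsilon\}_{i\ne t}$. Since $\epsilon$ is Gaussian, the stacked vector $\big(\U_i^\top\epsilon\big)_{i\ne t} \in \R^{C-1}$ is centered Gaussian with covariance $\sigma^2\mathbf{U}_0\mathbf{U}_0^\top$, where $\mathbf{U}_0$ is the matrix with rows $\U_i^\top$. Rescaling coordinate $i$ by $1/(\sigma\|\U_i\|_2)$ produces a centered Gaussian vector $Z$ with unit-variance coordinates whose covariance — hence correlation — matrix has $(i,j)$ entry $\U_i^\top\U_j / (\|\U_i\|_2\|\U_j\|_2)$, which is exactly the $(i,j)$ entry of $\matU\matU^\top$ for the row-normalized matrix $\matU$ defined in the statement. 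Under this rescaling the inequality $\U_i^\top\epsilon \ge -c_i$ becomes $Z_i \le c_i/(\sigma\|\U_i\|_2)$, so the probability above equals $P\big[Z_i \le c_i/(\sigma\|\U_i\|_2)\ \forall i \ne t\big] = \cdf\left( \frac{c_i}{\sigma\|\U_i\|_2} \tensor \right)$, which is the claimed formula. (As a sanity check, $t = \arg\max_i f_i(\X)$ forces $c_i \ge 0$, so as $\sigma \to 0$ all arguments tend to $+\infty$ and the CDF tends to $1$.)

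The only things requiring care are the degenerate cases, not the core argument: I would check that each hyperplane $\{\epsilon : \U_i^\top\epsilon = -c_i\}$ is Gaussian-null (so replacing ``$>$'' by ``$\ge$'' is harmless) and that the normalization is well defined, both of which hold as long as $\U_i \ne 0$, i.e. $\W_t \ne \W_i$; a class with $\W_i = \W_t$ contributes a constant-sign constraint $c_i \ge 0$ that can simply be dropped. I would also add a remark that this computation is precisely what resolves the ``double counting'' concern raised before the lemma: rather than applying inclusion--exclusion over the pairwise decision boundaries, the multivariate Gaussian CDF integrates directly over the intersection of the half-space events, and all dependence between the boundary-crossing events is encoded in the off-diagonal entries of $\matU\matU^\top$ (which vanish exactly when the $\U_i$ are mutually orthogonal). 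In short, there is no serious technical obstacle here; the content of the lemma is the observation that the combinatorial issue collapses into a single correlated-Gaussian orthant probability.
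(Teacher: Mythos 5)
Your proposal is correct and follows essentially the same route as the paper's own proof: rewrite the consistent-classification event as the simultaneous satisfaction of the $C-1$ half-space constraints $\U_i^\top\epsilon \ge -c_i$, observe that the stacked linear functionals of Gaussian noise are jointly Gaussian with correlation matrix $\matU\matU^\top$ after per-coordinate rescaling by $\sigma\|\U_i\|_2$, and read off the result as a $(C-1)$-dimensional Normal CDF. Your additional care about null boundary events and degenerate classes with $\W_i=\W_t$, and your correct use of the intersection of the constraint events (where the paper's displayed proof writes $\bigcup$ but clearly intends the joint event defining the CDF), are minor polish rather than a different argument.
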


\begin{hproof}
    The proof involves constructing decision boundary functions $g_i(\X) = f_t(\X) - f_i(\X)$ and computing the probability $p^\text{robust}_{\sigma}(\X) = P_{\epsilon}(\bigcup_{\substack{i=1\\i \neq t}}^{C} g_i(\X + \epsilon) > 0)$. For Gaussian $\epsilon$, we observe that $\frac{\U}{\sigma \| \U \|_2}^\top \epsilon \sim \mathcal{N}(0, 1)$ is also a Gaussian, which applied vectorially results in our usage of $\Phi$. As convention, we represent $\matU$ in a normalized form to ensure that its rows are unit norm.
\end{hproof}

The proof is in Appendix~\ref{app:proofs}. Thus, the multivariate Gaussian CDF provides an elegant solution to the previously mentioned ``double counting'' problem. Here, the matrix $\matU$ exactly captures the linear decision boundaries, and the covariance matrix $\matU \matU^\top$ encodes the alignment between pairs of decision boundaries of different classes. 

\textbf{Remark.} For the binary classification case, we get $\matU \matU^\top = 1$ (a scalar), and $p^\text{robust}_{\sigma}(\X) = \phi(\frac{c}{\sigma \| \U \|_2} )$, where $\phi$ is the CDF of the scalar standard normal, which was previously also shown by \citet{weng2019proven, pawelczyk2022probabilistically}. Hence Lemma \ref{estimator-linear-models} is a multi-class generalization of these works.

If the decision boundary vectors $\U_i$ are all orthogonal to each other, then the covariance matrix $\matU \matU^\top$ is the identity matrix. For diagonal covariance matrices, the multivariate Normal CDF (\emph{mvn-cdf}) can be written as the product of univariate Normal CDFs, which is easy to compute. However, in practice, we find that the covariance matrix is strongly non-diagonal, indicating that the decision boundaries are not orthogonal to each other. This non-diagonal nature of covariance matrices in practice leads to the resulting \emph{mvn-cdf} not having a closed form solution, and thus needing to be approximated via sampling \cite{botev2017normal, SciPy}. However, this sampling is performed in the $(C-1)$-dimensional space as opposed to the $d$-dimensional space that \pmc{} samples from. In practice, for classification problems, we often have $C << d$, making sampling in $(C-1)$-dimensions more efficient. We would like to stress here that the expression in Lemma \ref{estimator-linear-models} represents the simplest expression to compute the average-case robustness: the usage of the multi-variate Gaussian CDF cannot be avoided due to the computational nature of this problem.
We now discuss the applicability of Lemma \ref{estimator-linear-models} to non-Gaussian noise. 

\begin{lemma} \label{lemma:universality}(\textbf{Application to non-Gaussian noise})
    For high-dimensional data ($d \rightarrow \infty$), Lemma \ref{estimator-linear-models} generalizes to any coordinate-wise independent noise distribution that satisfies Lyapunov's condition. 
\end{lemma}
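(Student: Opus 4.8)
The plan is to notice that the proof of Lemma~\ref{estimator-linear-models} uses the Gaussian hypothesis in exactly one place: to assert that the vector of noise projections onto the normalized decision-boundary directions, $Z^{(d)} = \big( -\U_i^\top \epsilon / (\sigma \|\U_i\|_2) \big)\tensor \in \R^{C-1}$, is jointly Gaussian with mean zero and covariance $\matU\matU^\top$. For a general coordinate-wise independent noise $\epsilon = (\epsilon_1,\dots,\epsilon_d)$ with zero mean and per-coordinate variance $\sigma^2$, this equality of laws fails at finite $d$, so I would instead recover it asymptotically as $d \to \infty$ via a multivariate central limit theorem, and then push the conclusion back through to \probust{} using the Portmanteau (continuous mapping) theorem. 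Everything else in the proof of Lemma~\ref{estimator-linear-models} is unchanged.

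First I would record that the covariance structure is already exact for every $d$: by coordinate-wise independence, $\E\, Z^{(d)}_i = 0$ and $\mathrm{Cov}(Z^{(d)}_i, Z^{(d)}_k) = \U_i^\top \U_k / (\|\U_i\|_2\|\U_k\|_2) = (\matU\matU^\top)_{ik}$, so only the \emph{shape} of the joint law needs the limit. Next I would apply the Cram\'er--Wold device: it suffices to show that for each fixed $a \in \R^{C-1}$ one has $a^\top Z^{(d)} \Rightarrow \mathcal{N}(0,\, a^\top \matU\matU^\top a)$. Writing $a^\top Z^{(d)} = \tilde{\U}^\top \epsilon / \sigma$ with $\tilde{\U} = -\sum_{i \neq t} (a_i / \|\U_i\|_2)\,\U_i \in \R^d$ (implicitly indexed by $d$), this is a sum of $d$ independent, mean-zero random variables $\tilde{\U}_j \epsilon_j / \sigma$ whose total variance is $s_d^2 = \|\tilde\U\|_2^2 = a^\top \matU\matU^\top a$.

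Then I would invoke the Lyapunov central limit theorem for triangular arrays: if there is $\delta > 0$ with $s_d^{-(2+\delta)} \sum_{j=1}^d \E\, |\tilde{\U}_j \epsilon_j / \sigma|^{2+\delta} \to 0$, then $a^\top Z^{(d)} \Rightarrow \mathcal{N}(0,\, a^\top\matU\matU^\top a)$, and with Cram\'er--Wold this gives $Z^{(d)} \Rightarrow \mathcal{N}(0,\matU\matU^\top)$. Finally, since (for non-degenerate $\matU\matU^\top$) the limit law assigns zero mass to the boundary of the box $\{ z : z_i \le c_i / (\sigma\|\U_i\|_2)\ \forall i \neq t \}$, the Portmanteau theorem yields
\[
p^\mathrm{robust}_\sigma(\X) = P\!\left[ \textstyle\bigcap_{i \neq t} \bigl\{ Z^{(d)}_i \le \tfrac{c_i}{\sigma\|\U_i\|_2} \bigr\} \right] \;\longrightarrow\; \cdf\!\left( \frac{c_i}{\sigma \|\U_i\|_2} \tensor \right),
\]
i.e., the formula of Lemma~\ref{estimator-linear-models}; equivalently, the joint CDF of $Z^{(d)}$ converges pointwise to that of $\mathcal{N}(0,\matU\matU^\top)$ at every continuity point.

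The main obstacle is that Lyapunov's condition is not a property of the noise alone but a joint condition on the noise and the decision-boundary geometry: it requires the combined weight vectors $\tilde{\U}$ to be spread over many coordinates so that no single coordinate dominates the sum. It holds, for instance, whenever the noise coordinates have uniformly bounded normalized $(2+\delta)$-moments and $\|\U_i\|_\infty/\|\U_i\|_2 \to 0$, since then $\|\tilde\U\|_{2+\delta}^{2+\delta}/\|\tilde\U\|_2^{2+\delta} \le (\|\tilde\U\|_\infty/\|\tilde\U\|_2)^{\delta} \to 0$; for generic high-dimensional models this is satisfied, but making the hypothesis precise is exactly what the phrase ``satisfies Lyapunov's condition'' in the statement abstracts away, so that is where care is needed. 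A secondary, minor point is the possible rank-deficiency of $\matU\matU^\top$ when some decision boundaries are linearly dependent, which is handled identically after restricting to the support of the limiting law.
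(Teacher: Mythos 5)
Your proposal is correct and follows the same route as the paper: apply Lyapunov's central limit theorem to the projections $\U_i^\top \epsilon/(\sigma\|\U_i\|_2)$ and then proceed exactly as in the Gaussian case of Lemma~\ref{estimator-linear-models}. In fact your write-up is more careful than the paper's, which invokes the univariate Lyapunov CLT for each coordinate and simply asserts that $\matU^\top\epsilon$ is jointly Gaussian with covariance $\matU\matU^\top$; your use of the Cram\'er--Wold device to upgrade marginal to joint convergence, the Portmanteau step to pass from weak convergence to convergence of the box probability, and your remark that the Lyapunov condition is really a joint condition on the noise and the decision-boundary geometry all make explicit what the paper leaves implicit.
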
 

\begin{hproof}
    Applying Lyupanov's central limit theorem \cite{patrick1995probability}, given $\epsilon \sim \mathcal{R}$ is sampled from some distribution $\mathcal{R}$, we have $\frac{\U}{\sigma \| \U \|_2}^\top \epsilon = \sum_{j=1}^{d} \frac{\U_j}{\sigma\| \U \|_2} \epsilon_j ~~\substack{d\\\longrightarrow} ~~\mathcal{N}(0, 1)$, which holds as long as the sequence $\{\frac{\U_j}{\| \U \|_2} \epsilon_j\}$ are independent random variables and satisfy the Lyapunov condition, which encodes the fact that higher-order moments of such distributions progressively shrink. 
\end{hproof}

Thus, as long as the input noise distribution is ``well-behaved'', the central limit theorem ensures that the distribution of high-dimensional dot products is Gaussian, thus motivating our use of the \emph{mvn-cdf} more generally beyond Gaussian input perturbations. We note that it is also possible to easily generalize Lemma \ref{estimator-linear-models} to \textbf{non-isotropic} Gaussian perturbations with a covariance matrix $\mathcal{C}$, which only changes the form of the covariance matrix of the \emph{mvn-cdf} from $\matU\matU^\top \rightarrow \matU \mathcal{C} \matU^\top$, which we elaborate in Appendix \ref{app:proofs}. In the rest of this paper, we focus on the isotropic case. 

\subsubsection{Estimator 1: The Taylor Estimator} 

Using the estimator derived for multi-class linear models in Lemma \ref{estimator-linear-models}, we now derive the Taylor estimator, a local robustness estimator for non-linear models.

\begin{defn}
    The \textbf{Taylor estimator} for the local robustness of a classifier $f$ at point $\X$ with respect to target class $t$ is given by linearizing $f$ around $\X$ using a first-order Taylor expansion, with decision boundaries $g_i(\X) = f_t(\X) - f_i(\X)$, $\forall i \neq t$, leading to
    \begin{align*}
        p^\mathrm{taylor}_{\sigma}(\X) = \cdf \left( \frac{g_i(\X)}{\sigma \|\grad g_i(\X)\|_2} \tensor \right) 
    \end{align*}
    with $\matU$ and $\Phi$ defined as in the linear case.
\label{eqn:taylor-estimator}
\end{defn}

The proof is in Appendix~\ref{app:proofs}. It involves locally linearizing non-linear decision boundary functions $g_i(\X)$ using a Taylor series expansion. We expect this estimator to have a small error when the underlying model is well-approximated by a locally linear function in the local neighborhood. We formalize this intuition by computing the estimation error for a quadratic classifier. 

\begin{thm} The \textbf{estimation error} of the Taylor estimator for a classifier with a quadratic decision boundary $g_i(\X) = \X^\top A_i \X + \U_i^\top \X + c_i$ and positive-semidefinite $A_i$ is upper bounded by
    \begin{align*}
        | p^\mathrm{robust}_{\sigma}(\X) - p^\mathrm{taylor}_{\sigma}(\X) | \leq k \sigma^{C-1} \prod_{\substack{i=1\\i\neq t}}^{C} \frac{\lambda_{\max}^{A_i}}{\| \U_i \|_2} 
    \end{align*}
    for noise $\epsilon \sim \mathcal{N}(0, \sigma^2 / d)$, in the limit of $d \rightarrow \infty$. Here, $\lambda_{\max}^{A_i}$ is the max eigenvalue of $A_i$, and $k$ is a small problem dependent constant.
\end{thm}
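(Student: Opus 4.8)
The plan is to exploit the fact that for a quadratic decision boundary the first-order Taylor remainder is \emph{exactly} a quadratic form, with no higher-order terms. Writing $L_i(\epsilon) = g_i(\X) + \nabla g_i(\X)^\top \epsilon$ for the linearization of $g_i$ used by the Taylor estimator (so that $p^\mathrm{taylor}_\sigma$ is the $\matU\matU^\top$-Gaussian measure of $\{\epsilon : L_i(\epsilon) > 0 \ \forall i \neq t\}$), we have $g_i(\X + \epsilon) = L_i(\epsilon) + \epsilon^\top A_i \epsilon$. Since $A_i \succeq 0$, the remainder $\epsilon^\top A_i \epsilon$ is nonnegative, so $\{L_i(\epsilon) > 0\} \subseteq \{g_i(\X + \epsilon) > 0\}$ for every $i$; intersecting over $i \neq t$ gives $p^\mathrm{taylor}_\sigma(\X) \le p^\mathrm{robust}_\sigma(\X)$. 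Hence the absolute value can be dropped, and the estimation error equals the Gaussian measure of the ``curved shell'' $\mathcal{S} = \{\epsilon : g_i(\X + \epsilon) > 0 \ \forall i\} \setminus \{\epsilon : L_i(\epsilon) > 0 \ \forall i\}$: the set of perturbations satisfying all paraboloid constraints but violating at least one linear one, which is exactly the set where some $L_j(\epsilon) \in (-\epsilon^\top A_j \epsilon,\, 0]$ while $L_i(\epsilon) > -\epsilon^\top A_i \epsilon$ for all other $i$.

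Next I would use high-dimensional concentration to replace the random remainders by deterministic thresholds. For $\epsilon \sim \mathcal{N}(0, \sigma^2/d \cdot I)$ one has $0 \le \epsilon^\top A_i \epsilon \le \lambda_{\max}^{A_i} \| \epsilon \|_2^2$, and as $d \to \infty$ both $\| \epsilon \|_2^2 \to \sigma^2$ and, more precisely, $\epsilon^\top A_i \epsilon \to \tfrac{\sigma^2}{d}\mathrm{tr}(A_i) \le \sigma^2 \lambda_{\max}^{A_i}$ with vanishing fluctuations. Thus, up to an error negligible relative to the target bound, $\mathcal{S}$ is sandwiched inside $\bigcup_{j \neq t} \{ -\sigma^2 \lambda_{\max}^{A_j} < L_j(\epsilon) \le 0,\ L_i(\epsilon) > -\sigma^2 \lambda_{\max}^{A_i}\ \forall i \neq j \}$, a union of thin Gaussian shells of thickness $\sigma^2 \lambda_{\max}^{A_j}$ around the $(C-1)$ facets of the linearized region.

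It then remains to bound the Gaussian measure of this shell set. The vector $(L_i(\epsilon))_{i \neq t}$ is jointly Gaussian with mean $(g_i(\X))_i$ and covariance $\tfrac{\sigma^2}{d}\, \nabla g_i(\X)^\top \nabla g_j(\X)$, so its $j$-th marginal density is bounded by $\mathcal{O}\!\big(\sqrt{d}\,/(\sigma \| \nabla g_j(\X) \|_2)\big)$, which in the small-curvature regime relevant to the Taylor estimator is $\mathcal{O}(\sqrt{d}/(\sigma \| \U_j \|_2))$. Integrating the joint density over each shell --- treating the simultaneous constraints via an iterated integral in a basis adapted to the gradients $\nabla g_i(\X)$ --- each facet contributes one factor of (shell thickness) $\times$ (marginal density bound), of order $\sigma \lambda_{\max}^{A_j} / \| \U_j \|_2$, and the $(C-1)$ facets together (along with the dimension-dependent prefactors from the concentration step and the combinatorics of the intersection) are folded into the single problem-dependent constant $k$, giving $k\, \sigma^{C-1} \prod_{i \neq t} \lambda_{\max}^{A_i} / \| \U_i \|_2$.

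The main obstacle is precisely this last multivariate-integral step: bounding the measure of the intersection of the curved constraints tightly enough to obtain the stated \emph{product} (rather than a crude facet-by-facet union bound, which would produce a sum of such terms), and pinning down how $d$, $C$, and the orientation of the boundaries enter $k$. A secondary technical point is making the $d \to \infty$ concentration argument uniform over $\mathcal{S}$ --- showing that the events on which $\| \epsilon \|_2^2$ or some $\epsilon^\top A_i \epsilon$ deviates substantially from its mean contribute negligibly compared with the claimed bound --- and checking that the small-curvature approximation $\nabla g_i(\X) \approx \U_i$ used in the density bound can be made rigorous or absorbed into $k$.
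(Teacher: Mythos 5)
Your setup is the paper's setup almost line for line: the paper also (i) centers at $\X=0$, (ii) uses positive semidefiniteness to reduce the error to the probability of the ``shell'' event where the linearized constraint fails while the quadratic one holds, (iii) invokes $\|\epsilon\|_2^2\to\sigma^2$ for $\epsilon\sim\mathcal{N}(0,\sigma^2/d)$, $d\to\infty$, to replace the random remainder $\epsilon^\top A_i\epsilon$ by the deterministic threshold $\lambda_{\max}^{A_i}\sigma^2$, and (iv) passes to the standardized variables $z_i=(\U_i^\top\epsilon+c_i)/(\sigma\|\U_i\|_2)$ with covariance $\matU\matU^\top$. The step you flag as ``the main obstacle'' is handled in the paper as follows: it writes the error event as $\bigcup_i\{0>z_i>-\mathcal{C}_i\sigma\}$ with $\mathcal{C}_i=\lambda_{\max}^{A_i}/\|\U_i\|_2$, identifies its probability with the iterated integral of the joint density over the rectangle $\prod_i[-\mathcal{C}_i\sigma,\,0]$, and bounds that integral by the maximum of the density times the rectangle's volume, with $k=\max\mathrm{pdf}=(2\pi)^{-(C-1)/2}\det(\matU\matU^\top)^{-1/2}$. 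That is the entire source of the product form and of $k$; there is no adapted-basis integration or finer combinatorial argument for you to find.

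Your hesitation at that step is well founded, and you should not paper over it. The exact set difference $\bigcap_i\{z_i>-r_i\}\setminus\bigcap_i\{z_i>0\}$ is the union over $j$ of facet shells $\{-r_j<z_j\le0\}\cap\bigcap_{i\ne j}\{z_i>-r_i\}$; this union \emph{contains} the rectangle $\bigcap_i\{-r_i<z_i\le 0\}$ but is not contained in it (a point with $z_1\in(-r_1,0]$ and $z_2\gg 0$ lies in the difference but not in the rectangle). A rigorous facet-by-facet bound therefore yields a sum $\sum_{j\neq t}O(\sigma\lambda_{\max}^{A_j}/\|\U_j\|_2)$, i.e.\ $O(\sigma)$, rather than the stated $O(\sigma^{C-1})$ product; the paper's passage from the shell event to the rectangle integral bounds only a subset of the error event. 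So the gap in your proposal is real, but it is the same gap present in the paper's own argument: to reproduce the paper you would simply assert the rectangle form, and to be airtight you would either settle for the sum bound or add hypotheses under which the facet shells collapse to the rectangle. Your secondary refinement --- using $\epsilon^\top A_i\epsilon\to\sigma^2\,\mathrm{tr}(A_i)/d=\sigma^2\lambda^{A_i}_{\mathrm{mean}}$ instead of the $\lambda_{\max}$ bound --- appears in the paper only as an informal remark that its bound is ``pessimistic'' (and is what drives the MMSE error bound).
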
 

The proof is in Appendix~\ref{app:proofs}. This statement formalizes two key intuitions with regards to the Taylor estimator: (1) the estimation error depends on the size of the local neighborhood $\sigma$ (the smaller the local neighborhood, the more locally linear the model, and the smaller the estimator error), and (2) the estimation error depends on the extent of non-linearity of the underlying function, which is given by the ratio of the max eigenvalue of $A$ to the Frobenius norm of the linear term. This measure of non-linearity of a function, called normalized curvature, has also been independently proposed by previous work \cite{srinivas2022efficient}. Notably, if the max eigenvalue is zero, then the function $g_i(\X)$ is exactly linear, and the estimation error is zero, reverting back to the linear case in Lemma \ref{estimator-linear-models}.

\subsubsection{Estimator 2: The MMSE Estimator} 

While the Taylor estimator is more efficient than the Monte Carlo estimator, it has a drawback: its linearization is only faithful at perturbations close to the data point and not necessarily for larger perturbations. To mitigate this issue, we use a form of linearization that is faithful over larger noise perturbations. Linearization has been studied in feature attribution research, which concerns itself with approximating non-linear models with linear ones to produce model explanations \cite{han2022explanation}. In particular, the SmoothGrad \cite{smilkov2017smoothgrad} technique has been described as the MMSE (minimum mean-squared error) optimal linearization of the model \cite{han2022explanation, agarwal2021towards} in a Gaussian neighborhood around the data point. Using a similar idea, we propose the MMSE estimator \pmmse{} as follows.

\begin{defn}
    The \textbf{MMSE estimator} for the local robustness of a classifier $f$ at point $\X$ with respect to target class $t$ is given by an MMSE linearization $f$ around $\X$, for decision boundaries $g_i(\X) = f_t(\X) - f_i(\X)$, $\forall i \neq t$, leading to
    \begin{align*}
        &p^\mathrm{mmse}_{\sigma}(\X) = \cdf \left( \frac{ \Tilde{g}_i(\X)}{\sigma \| \grad \Tilde{g}_i(\X)\|_2} \tensor \right) \\
        &\mathrm{where}~~\Tilde{g}_i(\X) = \frac{1}{N}\sum_{j=1}^{N} g_i(\X + \epsilon) ~,~ \epsilon \sim \mathcal{N}(0, \sigma^2)
    \end{align*}
    with $\matU$ and $\Phi$ defined as in the linear case, and $N$ is the number of perturbations. 
\end{defn}

The proof is in Appendix~\ref{app:proofs}. It involves creating a randomized smooth model \cite{cohen2019certified} from the base model and computing the decision boundaries of this smooth model. Note that this estimator also involves drawing noise samples like the Monte Carlo estimator. However, unlike the Monte Carlo estimator, we find that the MMSE estimator converges fast (around $N = 5$), leading to an empirical advantage. We now compute the estimation error of the MMSE estimator.

\newcommand{\mn}{\text{mean}}

\begin{thm} The \textbf{estimation error} of the MMSE estimator for a classifier with a quadratic decision boundary $g_i(\X) = \X^\top A_i \X + \U_i^\top \X + c_i$ and positive-semidefinite $A_i$ is upper bounded by
    \begin{align*}
        | p^\mathrm{robust}_{\sigma}(\X) - p^\mathrm{mmse}_{\sigma}(\X) | \leq k \sigma^{C-1} \prod_{\substack{i=1\\i\neq t}}^C \frac{\lambda_{\max}^{A_i} - \lambda_{\mn}^{A_i}}{\| \U_i \|_2}  
    \end{align*}
    for noise $\epsilon \sim \mathcal{N}(0, \sigma^2 / d)$, in the limit of $d \rightarrow \infty$ and $N \rightarrow \infty$. Here, $\lambda_{\max}^{A_i}, \lambda_{\mn}^{A_i}$ are the maximum and mean eigenvalue of $A_i$ respectively, and $k$ is a small problem dependent constant. 
\end{thm}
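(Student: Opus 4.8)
The plan is to follow the proof of the Taylor estimation-error bound (the preceding Proposition) almost verbatim, changing only the single quantity in which the MMSE linearization differs from the first-order Taylor linearization. The first step is to evaluate the MMSE-linearized decision boundary for a quadratic $g_i(\X) = \X^\top A_i \X + \U_i^\top \X + c_i$. Since $\epsilon \sim \mathcal{N}(0,\sigma^2/d)$ has zero mean, the cross terms vanish in expectation and $\E_\epsilon[\epsilon^\top A_i \epsilon] = \tfrac{\sigma^2}{d}\mathrm{tr}(A_i) = \sigma^2\lambda_{\mn}^{A_i}$; hence, in the $N\to\infty$ limit, $\Tilde{g}_i(\X) = \E_\epsilon[g_i(\X+\epsilon)] = g_i(\X) + \sigma^2\lambda_{\mn}^{A_i}$, while $\grad\Tilde{g}_i(\X) = \E_\epsilon[\grad g_i(\X+\epsilon)] = \grad g_i(\X) = 2A_i\X + \U_i$ is unchanged. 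Thus the MMSE estimator feeds the \emph{same} gradient as the Taylor estimator into $\cdf$, but with a numerator shifted by the ``mean-curvature'' correction $\sigma^2\lambda_{\mn}^{A_i}$.

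Next I would write the exact residual of each linearized boundary along a perturbation $\epsilon$. The true value is $g_i(\X+\epsilon) = g_i(\X) + \grad g_i(\X)^\top\epsilon + \epsilon^\top A_i\epsilon$: the Taylor surrogate drops the quadratic term entirely, whereas the MMSE surrogate replaces it by its mean. So the MMSE per-boundary residual is $\epsilon^\top A_i\epsilon - \sigma^2\lambda_{\mn}^{A_i}$, which is \emph{centered}, in contrast to the Taylor residual $\epsilon^\top A_i\epsilon$. Consequently the two estimators differ only on the event that some $g_i(\X+\epsilon)$ and its MMSE surrogate straddle $0$, a ``slab'' around the $i$-th decision boundary whose width is $|\epsilon^\top A_i\epsilon - \sigma^2\lambda_{\mn}^{A_i}|$. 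Using $A_i\succeq 0$ together with $\|\epsilon\|_2^2 \to \sigma^2$ (so that $0 \le \epsilon^\top A_i\epsilon \le \sigma^2\lambda_{\max}^{A_i}$ in the limit), this width is controlled by the spectral gap $\sigma^2(\lambda_{\max}^{A_i} - \lambda_{\mn}^{A_i})$, up to a factor absorbed in $k$ — replacing the $\sigma^2\lambda_{\max}^{A_i}$ of the Taylor case and producing the sharpened product.

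Finally I would propagate these per-boundary slab widths through the $(C-1)$-dimensional Gaussian CDF exactly as in the Taylor proof: $|\probust_{\sigma}(\X) - \pmmse_{\sigma}(\X)|$ is bounded by the variation of $\cdf$ when its argument $\big(g_i(\X)/(\sigma\|\grad g_i(\X)\|_2)\big)$ is perturbed coordinatewise by the normalized residual, of scale $\sigma^2(\lambda_{\max}^{A_i}-\lambda_{\mn}^{A_i})/(\sigma\|\U_i\|_2)$ in the $d\to\infty$ limit. Collecting one factor of $\sigma$ per coordinate produces the $\sigma^{C-1}$ prefactor and the product $\prod_{i\neq t}(\lambda_{\max}^{A_i}-\lambda_{\mn}^{A_i})/\|\U_i\|_2$, with $k$ absorbing the bounds on the CDF's density and the $o(1)$ concentration errors. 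I expect the main obstacle to be the same as in the Taylor case: bounding the modulus of continuity of the multivariate Gaussian CDF in its integration limits tightly enough to obtain the clean product form, while correctly handling the joint (intersection-of-halfspaces) structure of the event rather than treating the $C-1$ boundary crossings as independent. The only genuinely new ingredient relative to the Taylor proof is the identity $\Tilde{g}_i(\X) = g_i(\X) + \sigma^2\lambda_{\mn}^{A_i}$ and the cancellation of the residual's mean that it produces.
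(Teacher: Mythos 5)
Your proposal matches the paper's proof in both structure and substance: the key identity $\Tilde{g}_i(\X) = g_i(\X) + \sigma^2\lambda_{\mathrm{mean}}^{A_i}$ (via the trace/mean-eigenvalue computation, with the gradient unchanged) is exactly the paper's step, and the subsequent reduction to a slab event around each decision boundary, bounded by the maximum of the $(C-1)$-dimensional Gaussian density times the product of slab widths $\sigma(\lambda_{\max}^{A_i}-\lambda_{\mathrm{mean}}^{A_i})/\|\U_i\|_2$, is precisely how the paper obtains the $k\,\sigma^{C-1}$ prefactor and the product form. The only cosmetic difference is that you phrase the last step as a modulus-of-continuity bound on the CDF, whereas the paper integrates the density over the box directly, but these are the same estimate.
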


The proof is in Appendix~\ref{app:proofs}. The result above highlights two aspects of the MMSE estimator: (1) it incurs a smaller estimation error than the Taylor estimator, and (2) even in the limit of large number of samples $N \rightarrow \infty$, the error of the MMSE estimator is non-zero, except when $\lambda_{\mn}^{A_i} = \lambda_{\max}^{A_i}$. For PSD matrices, this becomes zero when $A_i$ is a multiple of the identity matrix \footnote{When $d \rightarrow \infty$, $\epsilon^\top A \epsilon = \lambda \| \epsilon \|^2 = \lambda \sigma^2$ is a constant, and thus an isotropic quadratic function resembles a linear one in this neighborhood. }, reverting back to the linear case in Lemma \ref{estimator-linear-models}.

\subsubsection{(Optionally) Approximating \emph{mvn-cdf}: Connecting Robustness Estimation with Softmax}

\paragraph{Approximation with Multivariate Sigmoid.} One drawback of the Taylor and MMSE estimators is their use of the \emph{mvn-cdf}, which does not have a closed form solution and can cause the estimators to be slow for settings with a large number of classes $C$. In addition, the \emph{mvn-cdf} makes these estimators non-differentiable, which is inconvenient for applications which require differentiating \probust{}. To alleviate these issues, we approximate the \emph{mvn-cdf} with an analytical closed-form expression. As CDFs are monotonically increasing functions, the approximation should also be monotonically increasing.

To this end, it has been previously shown that the \emph{univariate} Normal CDF $\phi$ is well-approximated by the sigmoid function \cite{hendrycks2016gaussian}. It is also known that when $\matU \matU^\top = I$, \emph{mvn-cdf} is given by $\Phi(\X) = \prod_i\phi(\X_i)$, i.e., it is given by the product of the univariate normal CDFs. Thus, we may choose to approximate $\Phi(\X) = \prod_i \text{sigmoid}(\X)$. However, when the inputs are small, this can be simplified as follows:

\begin{align*}
    &\Phi_{I}(\X) = \prod_i \phi(\X_i) \approx \prod_i \frac{1}{1 + \exp(-\X_i)}\\
    &= \frac{1}{1 + \sum_i \exp(-\X_i) + \sum_{j,k} \exp(-\X_j - \X_k) + ...} \\
    &\approx \frac{1}{1 + \sum_i \exp(-\X_i)} ~~~(\text{for} ~~\X_i \rightarrow \infty~~ \forall i)
\end{align*}


We call the final expression the ``multivariate sigmoid'' (\emph{mv-sigmoid}) which serves as our approximation of \emph{mvn-cdf}, especially at the tails  of the distribution. While we expect estimators using \emph{mv-sigmoid} to approximate ones using \emph{mvn-cdf} only when $\matU \matU^\top = \mathbf{I}$, we find experimentally that the approximation works well even for practical values of the covariance matrix $\matU\matU^\top$. Using this approximation to substitute \emph{mv-sigmoid} for \emph{mvn-cdf} in the \ptaylor{} and \pmmse{} estimators yields the \ptaylormvs{} and \pmmsemvs{} estimators, respectively. We present further analysis on the multivariate sigmoid in Appendix \ref{app:experiments}.

\paragraph{Approximation with Softmax.} A common method to estimate the confidence of model predictions is to use the softmax function applied to the logits $f_i(\X)$ of a model. We note that softmax is identical to \emph{mv-sigmoid} when directly applied to the logits of neural networks: 

\begin{align*}
    &\text{softmax}_t\left( f_i(\X) ~\Big\vert_{\substack{i = 1}}^{C} \right) = \frac{\exp(f_t(\X))}{\sum_{i=1}^C \exp(f_i(\X))} = \\& \frac{1}{1 + \sum\limits_{\substack{i=1\\i \neq t}}^{C} \exp(f_i(\X) - f_t(\X))} = \text{mv-sigmoid}\left( g_i(\X) ~\Big\vert_{\substack{i = 1\\i\neq t}}^{C} \right)
\end{align*}

Recall that $g_i(\X) = f_t(\X) - f_i(\X)$ is the decision boundary function. Note that this equivalence only holds for the specific case of logits, and cannot be applied to approximate the Taylor estimator, for instance. Nonetheless, given this similarity, it is reasonable to ask whether softmax applied to logits (henceforth $p^\text{softmax}_{T}$ for softmax with temperature $T$) itself can be a ``good enough'' estimator of $p^\text{robust}_{\sigma}$ in practice. In other words, does $p^\text{softmax}_T$ well-approximate $p^\text{robust}_{\sigma}$ in certain settings?
In Appendix \ref{app:proofs}, we provide a theoretical result for a restricted linear setting where softmax can indeed match the behavior of \ptaylormvs{}, which happens precisely when $\matU \matU^\top = \mathbf{I}$ and all the class-wise gradients are equal. In the next section, we demonstrate empirically that the softmax estimator $p^{\text{softmax}}_T$ is a poor estimator of average-case robustness in practice.

\section{Empirical Evaluation}
\label{sec:exp}

\begin{figure*}[h!]
    \centering
    \begin{subfigure}{0.6\textwidth}
        \centering
        \includegraphics[width=\textwidth]{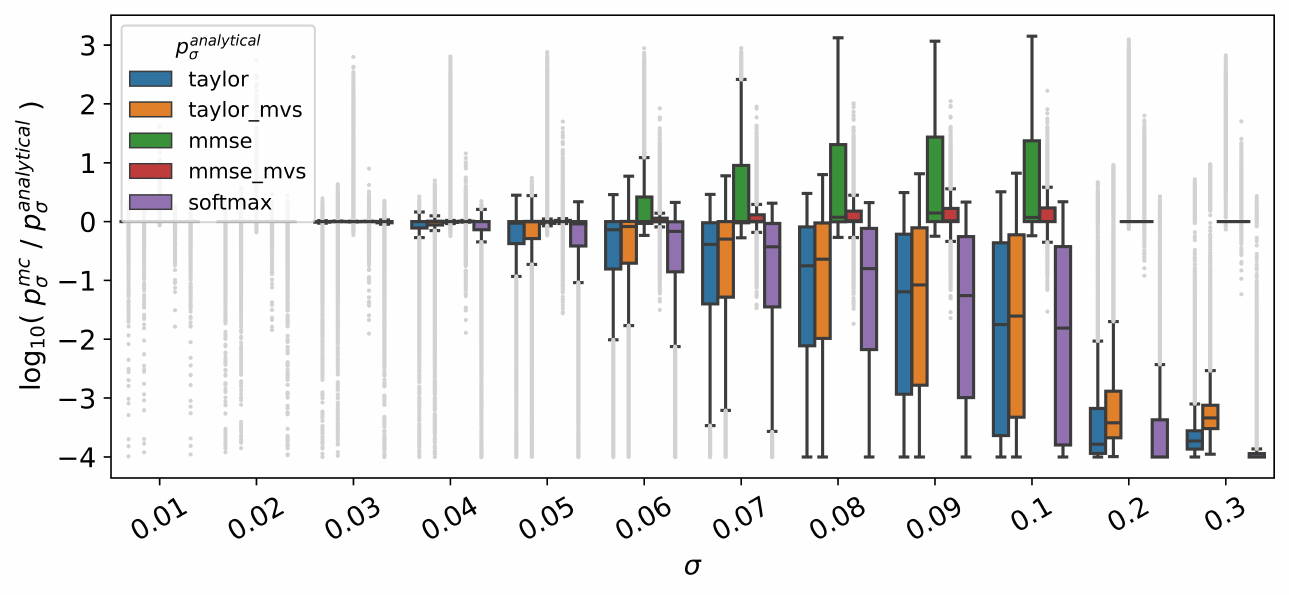}
        \captionsetup{justification=centering}
        \caption{CIFAR10, ResNet18 \\ Comparing estimator errors}
        \label{fig1a:method-works-over-sigma}
    \end{subfigure}
    \hspace{1em}
    \begin{subfigure}{0.35\textwidth}
        \centering
        \includegraphics[width=\textwidth]{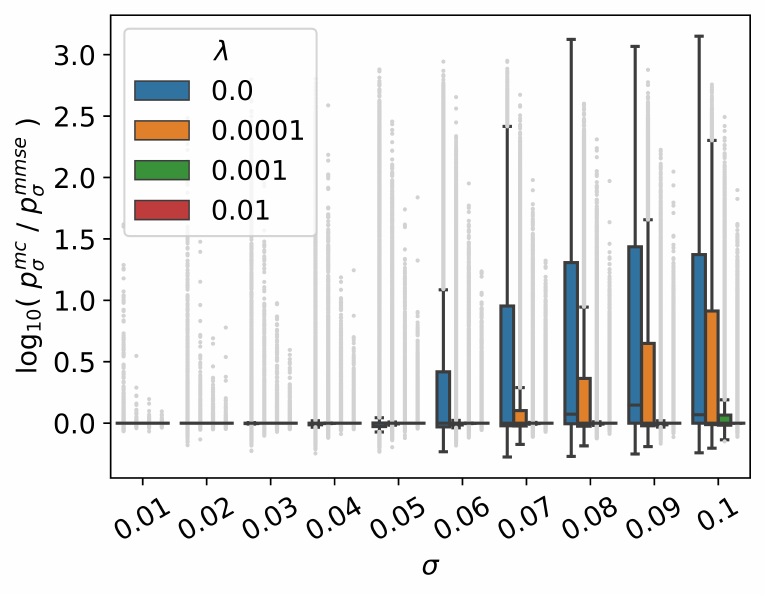}
        \captionsetup{justification=centering}
        \caption{CIFAR10, ResNet18 \\ Varying model robustness}
        \label{fig1b:method-works-robust}
    \end{subfigure}
    \caption{Empirical evaluation of analytical estimators. (a) The smaller the noise neighborhood $\sigma$, the more accurately the estimators compute \probust{}. \pmmse{} and \pmmsemvs{} are the best estimators of \probust{}, followed closely by \ptaylormvs{} and \ptaylor{}, trailed by \psoftmax{}. (b) For more robust models, the estimators compute \probust{} more accurately over a larger $\sigma$. Together, these results indicate that the analytical estimators accurately compute \probust{}.}
    \label{fig1:method-works}
\end{figure*}

In this section, we first evaluate the estimation errors and computational efficiency of the analytical estimators, and then evaluate the impact of robustness training within models on these estimation errors. Then, we analyze the relationship between average-case robustness and softmax probability. Lastly, we demonstrate the usefulness of local robustness for model and dataset understanding with two case studies. Key results are discussed in this section and full results are in Appendix~\ref{app:experiments}.

\textbf{Datasets and models.}
We evaluate the estimators on four datasets: MNIST \citep{deng2012mnist}, FashionMNIST \citep{xiao2017fashion}, CIFAR10 \citep{krizhevsky2009learning}, and CIFAR100 \citep{krizhevsky2009learning}. For MNIST and FashionMNIST, we train linear models and CNNs. For CIFAR10 and CIFAR100, we train Transformer models. We also train ResNet18 models \citep{he2016deep} using varying levels of gradient norm regularization~\cite{srinivas2018knowledge, srinivas2024models} to obtain models with varying levels of robustness. 
For gradient norm regularization, the objective function is $\ell(f(x), y) + \lambda \|\nabla_x f(x)\|_2^2$, where $\lambda$ is the regularization constant. The larger $\lambda$ is, the more robust the model.
Note that gradient norm regularization is equivalent to Gaussian data augmentation with an infinite number of augmented samples~\cite{srinivas2018knowledge} and is different from adversarial training.
Unless otherwise noted, the experiments below use each dataset's test set which consists of 10,000 points. Additional details about the datasets and models are described in Appendix~\ref{app:datasets} and \ref{app:models}.

\subsection{Evaluation of the estimation errors of analytical estimators}
\label{sec:exp_correctness}

\textbf{The analytical estimators accurately compute local robustness.}
To empirically evaluate the estimation error of our estimators, we calculate \probust{} for each model using \pmc{}, \ptaylor{}, \pmmse{}, \ptaylormvs{}, \pmmsemvs{}, and \psoftmax{} for different $\sigma$ values. For \pmc{}, \pmmse{}, and \pmmsemvs{}, we use a sample size at which these estimators have converged ($n=10000, 500, \text{and } 500$, respectively). (Convergence analyses are in Appendix~\ref{app:experiments}.) We take the Monte-Carlo estimator as the gold standard estimate of $p^{robust}_{\sigma}$), and compute the absolute and relative difference between \pmc{} and the other estimators to evaluate their estimation errors. 

The performance of the estimators for the CIFAR10 ResNet18 model is shown in Figure~\ref{fig1a:method-works-over-sigma}. The results indicate that \pmmsemvs{} and \pmmse{} are the best estimators of \probust{}, followed closely by \ptaylormvs{} and \ptaylor{}, trailed by \psoftmax{}. This is consistent with the theory in Section~\ref{sec:methods}, where the analytical estimation errors of $p^{mmse}_{\sigma}$ are lower than $p^{taylor}_{\sigma}$.

The results also confirm that the smaller the noise neighborhood $\sigma$, the more accurately the estimators compute \probust{}. For the MMSE and Taylor estimators, this is because their linear approximation of the model around the input is more faithful for smaller $\sigma$. As expected, when the model is linear, \ptaylor{} and \pmmse{} accurately compute \probust{} for all $\sigma$'s (Appendix~\ref{app:experiments}). For the softmax estimator, \psoftmax{} values are constant over $\sigma$ and this particular model has high \psoftmax{} values for most points. Thus, for small $\sigma$'s where \probust{} is near one, \psoftmax{} happens to approximate \probust{} for this model. Examples of images with varying levels of noise ($\sigma$) are in Appendix~\ref{app:experiments}.

\textbf{Impact of robust training on estimation errors.} 
The performance of \pmmse{} for CIFAR10 ResNet18 models of varying levels of robustness is shown in Figure~\ref{fig1b:method-works-robust}. The results indicate that the estimator is more accurate for more robust models (larger $\lambda$) over a larger $\sigma$. This is because robust training leads to models that are more locally linear \cite{moosavi2019robustness}, making the estimator's linear approximation of the model around the input more accurate over a larger $\sigma$, making its \probust{} values more accurate.

\textbf{Evaluating estimation error of mv-sigmoid.} To examine \emph{mv-sigmoid}'s approximation of \emph{mvn-cdf}, we compute both functions using the same inputs ($z~=~ \frac{g_i(\X)}{\sigma \|\grad g_i(\X)\|_2} \vert_{\substack{i=1\\i\neq t}}^C$, as described in Proposition~\ref{eqn:taylor-estimator}) for the CIFAR10 ResNet18 model for different $\sigma$. The plot of \emph{mv-sigmoid(z)} against \emph{mvn-cdf(z)} for $\sigma=0.05$ is shown in Appendix~\ref{app:experiments} (Figure~\ref{fig2:mvsig-mvncdf}). The results indicate that the two functions are strongly positively correlated with low approximation error, suggesting that \emph{mv-sigmoid} approximates the \emph{mvn-cdf} well in practice.

\subsection{Evaluation of computational efficiency of analytical estimators}

\textbf{The analytical estimators are more efficient than the naïve estimator.}
We examine the efficiency of the estimators by measuring their runtimes when calculating \probustwsigma{0.1} for the CIFAR10 ResNet18 model for 50 points. Runtimes are displayed in Table~\ref{table:runtimes}. They indicate that \ptaylor{} and \pmmse{} perform 35x and 17x faster than \pmc{}, respectively. Additional runtimes are in Appendix~\ref{app:experiments}.

\begin{table}[ht!]
\centering
\begin{tabular}{l|l|l|l }
    Estimator   & \thead{Number of \\Samples ($n$)}   & \thead{CPU\\Runtime\\(h:m:s)}  & \thead{GPU\\Runtime\\(h:m:s)} \\
    \toprule
    \pmc{}   & \begin{tabular}[c]{@{}l@{}}  $n=10,000$\end{tabular}               
             & \begin{tabular}[c]{@{}l@{}}  1:41:11\end{tabular}                 
             & \begin{tabular}[c]{@{}l@{}}  0:19:56\end{tabular}  \\
    \ptaylor{}   & N/A
                 & 0:00:08                                                                   
                 & 0:00:02  \\
    \pmmse{}   & \begin{tabular}[c]{@{}l@{}} $n=5$\end{tabular} 
               & \begin{tabular}[c]{@{}l@{}} 0:00:41\end{tabular} 
               & \begin{tabular}[c]{@{}l@{}} 0:00:06\end{tabular} \\              
\end{tabular}
\vspace{0.2cm}
\caption{Runtimes of \probust{} estimators. Each estimator computes \probustwsigma{0.1} for the CIFAR10 ResNet18 model for 50 data points. Estimators that use sampling use the minimum number of samples necessary for convergence. Runtimes are in the format of hour:minute:second. The GPU used was a Tesla V100. The analytical estimators (\ptaylor{} and \pmmse{}) are more efficient than the naïve estimator (\pmc{}).} 
\vspace{-0.5cm}
\label{table:runtimes}
\end{table}

We also examine the efficiency of the analytical estimators in terms of memory usage. The backward pass is observed to take about twice the amount of floating-point operations (FLOPs) as a forward pass~\cite{flops}. In addition, we performed an experiment and found that a forward and backward pass uses about twice the peak memory of a single forward pass. Thus, each iteration of \pmmse{} (which consists of a forward and backward pass) is roughly 3x the number of FLOPs and twice the peak memory of a single iteration of \pmc{} (which consists of one forward pass). However, \pmmse{} requires 5 iterations for convergence while \pmc{} requires about 10,000. Thus, overall, \pmmse{} is more memory-efficient than \pmc{}.

\subsection{Case Studies}
\label{subsec:case-studies}

\textbf{Identifying non-robust data points.} While robustness is typically viewed as the property of a model, the average-case robustness perspective compels us to view robustness as a joint property of both the model and the data point. In light of this, we can ask, given the same model, which samples are robustly and non-robustly classified? We evaluate whether \probust{} can distinguish such images better than \psoftmax{}. To this end, we train a simple CNN to distinguish between images with high and low \pmmse{} and the same CNN to also distinguish between images with high and low \psoftmax{} (additional setup details described in Appendix~\ref{app:experiments}). Then, we compare the performance of the two models. For CIFAR10, the test set accuracy for the \pmmse{} CNN is $\mathbf{92\%}$ while that for the \psoftmax{} CNN is $\textbf{58\%}$. These results indicate that \probust{} better identifies images that are robust to and vulnerable to random noise than \psoftmax{}.

We also present visualizations of images with the highest and lowest \pmmse{} in each class for each model. For comparison, we do the same with \psoftmax{}. Example CIFAR10 images are shown in Figure~\ref{fig4:topk-vs-bottomk-main}. We observe that images with low \probust{} tend to have neutral colors, with the object being a similar color as the background (making the prediction likely to change when the image is slightly perturbed), while images with high \probust{} tend to be brightly-colored, with the object strongly contrasting with the background (making the prediction likely to stay constant when the image is slightly perturbed). Recall that points with small \probust{} are close to the decision boundary, while those farther away have a high \probust{}. Thus, high \probust{} points may be thought of as ``canonical'' examples of the underlying class, while low \probust{} examples are analogous to ``support vectors'', that are critical to model learning. These results showcase the utility of average-case robustness for dataset exploration and analysis, particularly in identifying canonical and outlier examples.

\begin{figure}[htbp!]
    \centering
    \begin{flushleft}
        \hspace{-0.1cm}\rotatebox{90}{\hspace{-9.4cm} \hspace{3cm}Car \hspace{2.9cm}Boat}
        \hspace{1.1cm}Lowest \pmmsewsigma{0.1}
        \hspace{1.5cm} Highest \pmmsewsigma{0.1}
    \end{flushleft}
         
    \begin{subfigure}{0.22\textwidth}
        \includegraphics[width=\linewidth, trim={0.2cm, 0.2cm, 0.2cm, 0.2cm}]{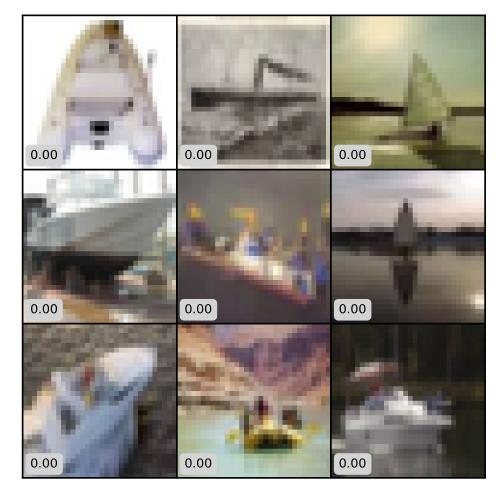}
    \end{subfigure}
    \begin{subfigure}{0.22\textwidth}
        \includegraphics[width=\linewidth, trim={0.2cm, 0.2cm, 0.2cm, 0.2cm}]{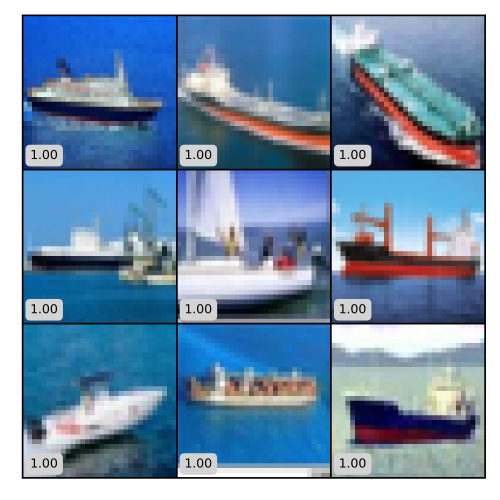}
    \end{subfigure}
    \begin{subfigure}{0.22\textwidth}
        \includegraphics[width=\linewidth, trim={0.2cm, 0.2cm, 0.2cm, 0.2cm}]{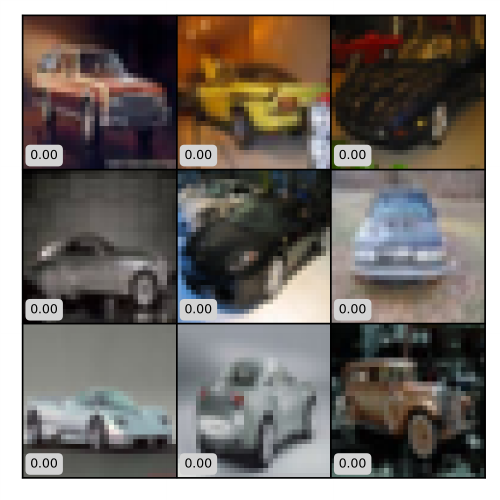}
    \end{subfigure}
    \begin{subfigure}{0.22\textwidth}
        \includegraphics[width=\linewidth, trim={0.2cm, 0.2cm, 0.2cm, 0.2cm}]{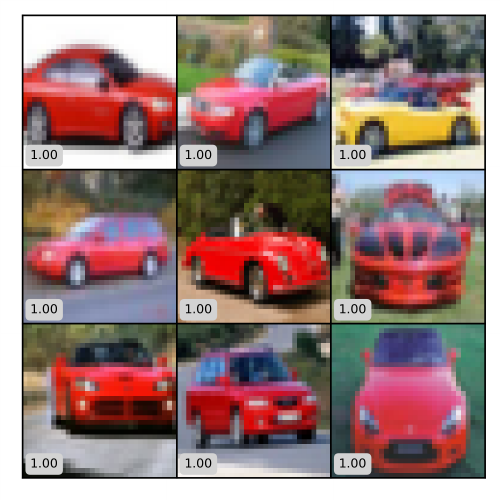}
    \end{subfigure}
    \caption{Example ranking of \probust{} among CIFAR10 classes. Images with high \probust{} are farther away from the decision boundary, and tend to be brighter and have stronger object-background contrast than those with low \probust{}, which are closer to the decision boundary, and thus easily misclassified.}
    \label{fig4:topk-vs-bottomk-main}
\end{figure}

\begin{figure}[h]
    \centering
    \begin{subfigure}{0.35\textwidth}
        \centering
        \includegraphics[width=\linewidth,  trim={1cm, 0.5cm, 0.8cm, 0cm}]{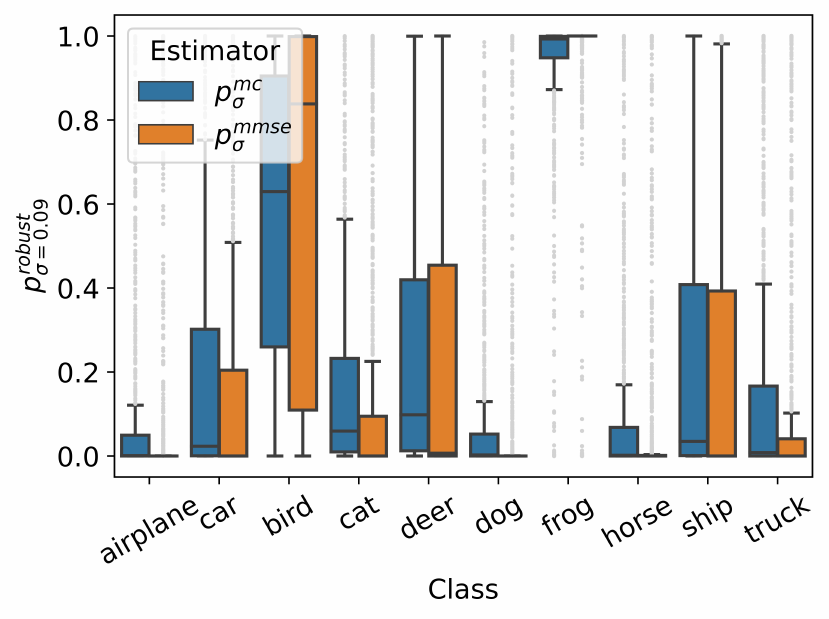}
        \caption{CIFAR10, ResNet18}
        \vspace{0.25cm}
    \end{subfigure}
    \begin{subfigure}{0.35\textwidth}
            \centering
          \includegraphics[width=\linewidth, trim={0.9cm, 0.5cm, 1.1cm, 0cm}]{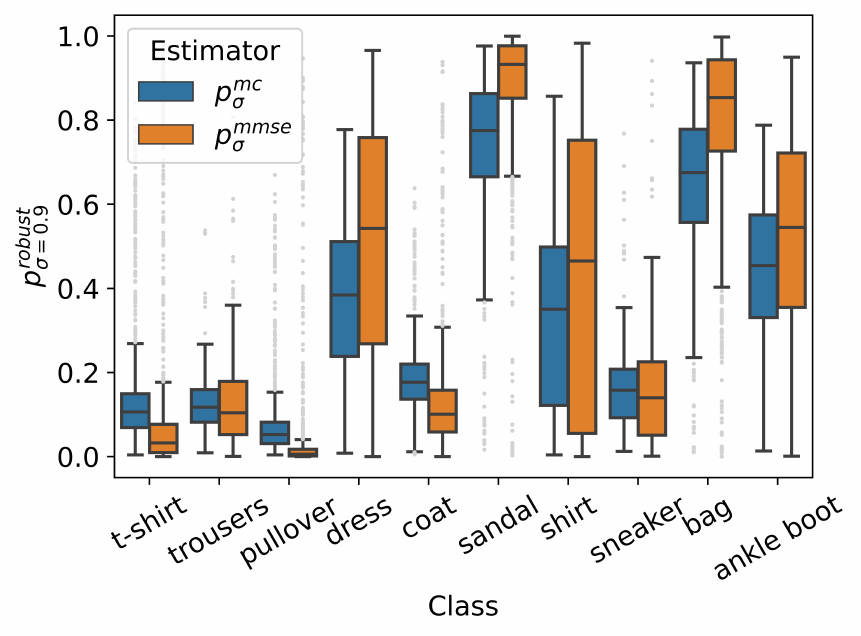}
         \caption{FMNIST, CNN}
    \end{subfigure}
    \caption{Computing robustness bias among classes for the (a) ResNet18 CIFAR10 model, and (b) for the CNN FMNIST model. \probust{} reveals that the model robustness varies significantly across classes, revealing a marked class-wise bias within standard models. The analytical estimator \pmmse{} accurately captures this model bias.}
    \label{fig5:robustness-bias}
\end{figure}

\textbf{Detecting robustness bias among classes: Is the model differently robust for different classes?} We also demonstrate that \probust{} can detect bias in local robustness \cite{nanda2021fairness} by examining its distribution for each class for each model and test set over different $\sigma$'s. Results for the CIFAR10 ResNet18 model are in plotted in Figure~\ref{fig5:robustness-bias}. The results show that different classes have significantly different \probust{} distributions, i.e., the model is significantly more robust for some classes (e.g., frog) than for others (e.g., airplane). Similarly for the FMNIST CNN case in Figure~\ref{fig5:robustness-bias}, we find that the pullover class is much less robust than the sandal class. This observation indicates a disparity in outcomes for these different classes, and underscores the importance of evaluating per-class and per-datapoint robustness metrics before deploying models in the wild.
The results also show that \pmc{} and \pmmse{} have very similar distributions, further indicating that the latter well-approximates the former. \probust{} detects robustness bias across all other models and datasets too: MNIST CNN, and CIFAR100 ResNet18 (Appendix~\ref{app:experiments}). Thus, \probust{} can be applied to detect robustness bias among classes, which is critical when models are deployed in high-stakes, real-world settings.

\section{Conclusion}

In this work, we find that adversarial robustness does not provide a comprehensive picture of model behavior, and to this end, we propose the usage of average-case robustness. While adversarial robustness is more suited for applications in model security, average-case robustness is suited for model understanding and debugging. 
To our knowledge, this work is the first to investigate analytical estimators for average-case robustness in a multi-class setting. The analytical aspect of these estimators helps understand average-case robustness via model decision boundaries, and also connect to ideas such as Randomized Smoothing (via the MMSE estimator) and softmax probabilities.

Future research directions include exploring additional applications of average-case robustness, such as training average-case robust models that minimize the probability of misclassification and debugging-oriented applications such as detecting model memorization, and dataset outliers.

\vspace{0.2cm}
\textbf{Code availability.}
Code is available at \url{https://github.com/AI4LIFE-GROUP/average-case-robustness}.

\bibliography{references.bib}

\appendix
\onecolumn
\section{Appendix}

\subsection{Proofs}
\label{app:proofs}

\begin{lemma} \label{proof:lemma}
The local robustness of a multi-class linear model $f(\X) = \mathbf{w}^\top \X + b$ (with $\mathbf{w} \in \R^{d \times C}$ and $b \in \R^C$) at point $\X$ with respect to a target class $t$ is given by the following. Define weights $\U_i = \W_t - \W_i \in \R^d, \forall i \neq t$, where $\W_t, \W_i$ are rows of $\mathbf{w}$ and biases $c_i = {\U_i}^\top\X + (b_t - b_i) \in \R$. Then, 
\begin{align*}
    p^\text{robust}_\sigma(\X) = \cdf \left( \frac{c_i}{\sigma \| \U_i \|_2} \tensor \right)\\
    \mathrm{where}~~\matU = \frac{\U_i}{\| \U_i \|_2} \tensor \in \R^{(C-1) \times d}
\end{align*}
and $\cdf$ is the ($C-1$)-dimensional Normal CDF with zero mean and covariance $\matU \matU^\top$.
\end{lemma}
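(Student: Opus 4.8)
The plan is to rewrite the event $\{\arg\max_i f_i(\X+\epsilon) = t\}$ as a finite intersection of half-space events and then recognize the resulting probability as a multivariate Gaussian CDF, so that the ``double counting'' worry raised in the main text is handled automatically. First I would note that, after discarding the probability-zero event that the $\arg\max$ is attained by more than one class, the prediction at $\X+\epsilon$ equals $t$ precisely when $g_i(\X+\epsilon) := f_t(\X+\epsilon) - f_i(\X+\epsilon) \ge 0$ for \emph{every} $i \ne t$ simultaneously; hence $p^\text{robust}_\sigma(\X) = P_{\epsilon \sim \mathcal{N}(0,\sigma^2 I_d)}\left( \bigcap_{i \ne t} \{ g_i(\X+\epsilon) \ge 0 \} \right)$ (I read the $\bigcup$ in the proof sketch as a typo for $\bigcap$, since $\{\arg\max = t\}$ is the intersection over $i\ne t$ of the events ``the $i$-th decision-boundary function stays non-negative'').

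Next I would use linearity: $g_i(\X+\epsilon) = (\W_t - \W_i)^\top(\X+\epsilon) + (b_t - b_i) = c_i + \U_i^\top \epsilon$, so $g_i(\X+\epsilon) \ge 0$ is equivalent, after dividing by $\sigma\|\U_i\|_2 > 0$, to $-\,\U_i^\top\epsilon/(\sigma\|\U_i\|_2) \le c_i/(\sigma\|\U_i\|_2)$. Collecting these into the random vector $Z \in \R^{C-1}$ with coordinates $Z_i = -\,\U_i^\top\epsilon/(\sigma\|\U_i\|_2)$, the target event becomes the coordinatewise inequality $Z_i \le c_i/(\sigma\|\U_i\|_2)$ for all $i\ne t$. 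The key computation is the law of $Z$: since $\epsilon$ is centered isotropic Gaussian and each $Z_i$ is a linear functional of $\epsilon$, $Z$ is a centered jointly Gaussian vector, and a one-line second-moment calculation gives $\mathrm{Cov}(Z_i,Z_j) = \U_i^\top\U_j/(\|\U_i\|_2\,\|\U_j\|_2)$, which is exactly the $(i,j)$ entry of $\matU\matU^\top$ for $\matU$ the matrix whose rows are the normalized vectors $\U_i/\|\U_i\|_2$ (in particular $\mathrm{Var}(Z_i)=1$). Hence $Z \sim \mathcal{N}(0,\matU\matU^\top)$ and $p^\text{robust}_\sigma(\X) = \cdf\left( \frac{c_i}{\sigma\|\U_i\|_2} \tensor \right)$, as claimed; equivalently one can work with $Z_i = \U_i^\top\epsilon/(\sigma\|\U_i\|_2)$ and invoke the symmetry that $Z$ and $-Z$ have the same distribution.

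I do not expect a genuine computational obstacle here; the real content is the reduction in the first step, namely that because $\{\arg\max = t\}$ is literally an intersection of half-spaces, no inclusion--exclusion over pairs of decision boundaries is needed — the joint CDF $\Phi_{\matU\matU^\top}$ already accounts for all pairwise and higher-order overlaps, with $\matU\matU^\top$ encoding exactly the geometric alignment of the boundaries. The remaining points are bookkeeping: (i) ties in the $\arg\max$ occur with probability zero and can be ignored; (ii) if some $\U_i = 0$ (classes $i$ and $t$ have identical weight rows) the normalization is vacuous and that class contributes a trivial factor, so one may assume $\U_i \ne 0$; and (iii) $\matU\matU^\top$ can be singular (e.g.\ when $C-1 > d$ or the $\U_i$ are linearly dependent), in which case $\cdf$ is interpreted as the CDF of the corresponding degenerate Gaussian. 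Finally, the non-isotropic extension mentioned in the text follows verbatim with $\mathrm{Cov}(Z_i,Z_j) = \U_i^\top\mathcal{C}\,\U_j/(\|\U_i\|_2\|\U_j\|_2)$, i.e.\ replacing $\matU\matU^\top$ by $\matU\mathcal{C}\matU^\top$.
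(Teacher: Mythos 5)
Your proposal is correct and follows essentially the same route as the paper's proof: linearize the decision-boundary functions $g_i$, standardize the Gaussian so each normalized projection $\U_i^\top\epsilon/(\sigma\|\U_i\|_2)$ has unit variance, and identify the joint event with the $(C-1)$-dimensional Gaussian CDF with covariance $\matU\matU^\top$. Your reading of the paper's $\bigcup$ as an intersection is the intended one, and your added bookkeeping (ties, $\U_i=0$, possibly singular $\matU\matU^\top$) only tightens details the paper leaves implicit.
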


\begin{proof}
First, we rewrite \probust{} in the following manner, by defining $g_i(\X) = f_t(\X) - f_i(\X) > 0$, which is the ``decision boundary function".

\begin{align*}
    p_\sigma^\text{robust} = P_{\epsilon \sim \mathcal{N}(0,\sigma^2)} \left[ \max_{i} f_i(\X + \epsilon) < f_t(\X + \epsilon) \right] \\= P_{\epsilon \sim \mathcal{N}(0,\sigma^2)} \left[ \bigcup_{i=1; i \neq t}^C g_i(\X + \epsilon) > 0 \right]
\end{align*}

Now, assuming that $f,g$ are linear such that $g_i(\X) = {\U'_i}^\top \X + g(0)$, we have $g_i(\X + \epsilon) = g_i(\X) + {\U_i}^\top \epsilon$, and obtain

\begin{align}
p_\sigma^\text{robust} &= P_{\epsilon \sim \mathcal{N}(0,\sigma^2)}\left[ \bigcup_{i=1; i \neq t}^C {\U_i}^{\top}\epsilon > -g_i(\X) \right] \\
&= P_{z \sim \mathcal{N}(0,I_d)} \left[ \bigcup_{i=1; i \neq t}^C \frac{\U_i}{\| \U_i \|_2}^{\top}z > - \frac{g_i(\X)}{\sigma \| \U_i \|_2} \right] \label{appeqn:key_step}
\end{align}

This step simply involves rescaling and standardizing the Gaussian to be unit normal. We now make the following observations:
\begin{itemize}
    \item For any matrix $\matU \in \R^{C-1 \times d}$ and a d-dimensional Gaussian random variable $z \sim \mathcal{N}(0, I_d) \in \R^d$, we have $\matU^\top z \sim \mathcal{N}(0, \matU \matU^\top)$, i.e., an (C-1) -dimensional Gaussian random variable. 
    \item CDF of a multivariate Gaussian RV is defined as $P_z [\bigcup_i z_i < t_i]$ for some input values $t_i$
\end{itemize}

Using these observations, if we construct $\matU = \frac{\U_i}{\| \U_i \|_2} \tensor \in \R^{(C-1) \times d}$, and obtain

\begin{align*}
p^\text{robust}_{\sigma} &= P_{r \sim \mathcal{N}(0, \matU\matU^\top)} \left[ \bigcup_{i=1; i \neq t}^C r_i < \frac{g_i(\X)}{\sigma \| \U_i \|_2} \right] \\
&= \text{CDF}_{\mathcal{N}(0, UU^{\top})} \left( \frac{g_i(\X)}{\sigma \| \U_i \|_2} \tensor \right)
\end{align*}

where $g_i(\X) = {\U_i}^\top \X + g_i(0) = {(\W_t - \W_i)}^\top\X + (b_t - b_i)$

\end{proof}

\vspace{1cm}

\begin{lemma} (\textbf{Extension to non-Gaussian noise})
    For high-dimensional data ($d \rightarrow \infty$), Lemma \ref{estimator-linear-models} generalizes to any coordinate-wise independent noise distribution that satisfies Lyapunov's condition. 
\end{lemma}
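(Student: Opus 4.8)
The plan is to reduce Lemma~\ref{lemma:universality} to the proof of Lemma~\ref{estimator-linear-models} by showing that the only place Gaussianity of $\epsilon$ is used there is in the step (Eq.~\ref{appeqn:key_step}) where the scalar projection $\frac{\U_i}{\sigma\|\U_i\|_2}^\top \epsilon$ is claimed to be a standard normal, and more generally where the joint vector of such projections is claimed to be multivariate normal with covariance $\matU\matU^\top$. So the proof amounts to establishing that, under the stated hypotheses, these projections are \emph{asymptotically} standard normal (and jointly asymptotically multivariate normal) as $d\to\infty$, which is exactly a central-limit statement. I would state this as the goal up front and then carry out the three steps below.

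First, fix a single decision boundary index $i$ and write $S_d = \sum_{j=1}^d a_{j} \epsilon_j$ where $a_j = \frac{(\U_i)_j}{\sigma\|\U_i\|_2}$, so that $\sum_j a_j^2 = 1/\sigma^2$ is the variance of $S_d$ (assuming $\epsilon_j$ are independent, mean zero, unit variance; a rescaling handles general coordinatewise variances). The summands $a_j\epsilon_j$ are independent but not identically distributed, so the classical Lindeberg--L\'evy CLT does not apply; instead I invoke Lyapunov's CLT \cite{patrick1995probability}: if there exists $\delta>0$ with $\frac{1}{s_d^{2+\delta}}\sum_{j=1}^d \E|a_j\epsilon_j|^{2+\delta} \to 0$, where $s_d^2 = \mathrm{Var}(S_d)$, then $S_d/s_d \Rightarrow \mathcal{N}(0,1)$. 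This is precisely the ``Lyapunov condition'' referenced in the statement, and it encodes that no single coordinate dominates and that higher moments of $\mathcal{R}$ are controlled. Hence $\frac{\U_i}{\sigma\|\U_i\|_2}^\top\epsilon \xrightarrow{d} \mathcal{N}(0,1)$.

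Second, I would upgrade this to a joint statement over all $C-1$ projections via the Cram\'er--Wold device: for any fixed $\alpha \in \R^{C-1}$, the linear combination $\alpha^\top \big(\frac{\U_i}{\sigma\|\U_i\|_2}^\top\epsilon\big)_{i\neq t}$ is again a sum of independent terms $\big(\sum_{i} \alpha_i \frac{(\U_i)_j}{\sigma\|\U_i\|_2}\big)\epsilon_j$ of the same form, so Lyapunov's CLT again gives asymptotic normality of every one-dimensional projection, with limiting variance $\alpha^\top (\matU\matU^\top) \alpha / \sigma^2$ (up to the $\sigma$ normalization already folded into $\matU$). By Cram\'er--Wold, the random vector converges in distribution to $\mathcal{N}(0,\matU\matU^\top)$. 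Plugging this limiting law into the union-probability computation of Lemma~\ref{estimator-linear-models} in place of the exact Gaussian, and using that the multivariate normal CDF is continuous (so that convergence in distribution implies convergence of the CDF evaluated at the fixed thresholds $\frac{g_i(\X)}{\sigma\|\U_i\|_2}$, which are continuity points), yields $p^\mathrm{robust}_\sigma(\X) \to \cdf\!\left(\frac{c_i}{\sigma\|\U_i\|_2}\tensor\right)$ in the limit $d\to\infty$, which is the claim.

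The main obstacle is not any single step but the care needed around the $d\to\infty$ normalization and what is held fixed: one must take the weights $\U_i$ to come from a sequence (e.g., with coordinates drawn so that $\|\U_i\|_2$ and the pairwise inner products stabilize, or simply assume the empirical covariance $\matU\matU^\top$ converges), and one must use the scaling $\epsilon \sim \mathcal{N}(0,\sigma^2/d)$-analogue (coordinatewise variance $\sigma^2/d$) so that $s_d^2$ stays bounded and the Lyapunov ratio genuinely vanishes — otherwise the condition is vacuous or false. Making precise the regularity assumption on the sequence $\{\U_i\}$ under which the Lyapunov ratio $\to 0$ (essentially a $\max_j |a_j| / (\sum_j a_j^2)^{1/2} \to 0$ Lindeberg-type uniform-smallness condition, automatically satisfied when coordinates are comparable in magnitude) is the delicate part; everything else is a routine application of Lyapunov's theorem plus Cram\'er--Wold plus continuity of the Gaussian CDF.
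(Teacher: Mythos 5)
Your proposal follows essentially the same route as the paper's own proof: the paper likewise observes that Gaussianity is used only at the projection step (equation~\ref{appeqn:key_step}) and invokes Lyapunov's CLT to recover asymptotic normality of $\frac{\U_i^\top \epsilon}{\sigma\|\U_i\|_2}$, then proceeds exactly as in the linear Gaussian case. Your write-up is in fact more careful than the paper's, which jumps directly from marginal asymptotic normality of each projection to the joint statement $\matU^\top z \sim \mathcal{N}(0,\matU\matU^\top)$; you supply the missing Cram\'er--Wold step, the continuity-of-the-limiting-CDF argument needed to pass from convergence in distribution to convergence of the probability, and the caveat about how the weight sequence must be normalized for the Lyapunov ratio to vanish as $d\to\infty$.
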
 

\begin{proof}
    Applying Lyupanov's central limit theorem, given $\epsilon \sim \mathcal{R}$ is sampled from some distribution $\mathcal{R}$ to equation \ref{appeqn:key_step} in the previous proof, we have we have $\frac{\U}{\sigma \| \U \|_2}^\top \epsilon = \sum_{j=1}^{d} \frac{\U_j}{\sigma\| \U \|_2} \epsilon_j ~~\substack{d\\\longrightarrow} ~~\mathcal{N}(0, 1)$, which holds as long as the sequence $\{\frac{\U_j}{\| \U \|_2} \epsilon_j\}$ are independent random variables and satisfy the Lyapunov condition. In particular, this implies that $\matU^\top z \sim \mathcal{N}(0, \matU \matU^\top)$, and the proof proceeds as similar to the Gaussian case after this step.
\end{proof}

\vspace{1cm}

\begin{lemma} (\textbf{Extension to non-isotropic Gaussian}) Lemma \ref{estimator-linear-models} can be extended to the case of $\epsilon \sim \mathcal{N}(0, \mathcal{C})$ for an arbitrary positive definite covariance matrix $\mathcal{C}$:

\begin{align*}
    p^\text{robust}_\sigma(\X) = \Phi_{\matU \mathcal{C} \matU^\top} \left( \frac{c_i}{\| \U_i \|_2} \tensor \right)
\end{align*}
    
\end{lemma}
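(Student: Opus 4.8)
The plan is to replay the proof of Lemma~\ref{proof:lemma} almost verbatim, modifying only the standardization step to accommodate the general covariance $\mathcal{C}$. First I would rewrite \probust{} in terms of the decision boundary functions $g_i(\X) = f_t(\X) - f_i(\X)$, so that $p^\text{robust}_\sigma(\X) = P_{\epsilon \sim \mathcal{N}(0,\mathcal{C})}\big[\bigcup_{i \neq t} g_i(\X+\epsilon) > 0\big]$. Using linearity of $f$ (hence of each $g_i$), we have $g_i(\X+\epsilon) = g_i(\X) + \U_i^\top\epsilon = c_i + \U_i^\top\epsilon$, so the $i$-th event is the halfspace $\{\U_i^\top\epsilon > -c_i\}$; dividing through by $\|\U_i\|_2$ puts it in the form $\{\frac{\U_i}{\|\U_i\|_2}^\top\epsilon > -\frac{c_i}{\|\U_i\|_2}\}$.

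The key step is the affine-transformation property of Gaussians: for $\epsilon \sim \mathcal{N}(0,\mathcal{C})$ and any matrix $V \in \R^{(C-1)\times d}$, we have $V\epsilon \sim \mathcal{N}(0, V\mathcal{C}V^\top)$. Applying this with $V = \matU = \frac{\U_i}{\|\U_i\|_2}\tensor$ gives $\matU\epsilon \sim \mathcal{N}(0, \matU\mathcal{C}\matU^\top)$, a $(C-1)$-dimensional Gaussian. Identifying the probability of the union of the halfspace events with the multivariate Gaussian CDF, $P_{r \sim \mathcal{N}(0,\matU\mathcal{C}\matU^\top)}\big[\bigcup_{i \neq t} r_i < \frac{c_i}{\|\U_i\|_2}\big]$, then yields $p^\text{robust}_\sigma(\X) = \Phi_{\matU\mathcal{C}\matU^\top}\big(\frac{c_i}{\|\U_i\|_2}\tensor\big)$, as claimed. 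An equivalent route is to whiten: write $\epsilon = \mathcal{C}^{1/2}z$ with $z \sim \mathcal{N}(0,I_d)$ and apply Lemma~\ref{proof:lemma} directly with modified weight vectors $\mathcal{C}^{1/2}\U_i$; both paths amount to the same computation.

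The only place the isotropic argument does not transfer mechanically is the normalization convention, which is where I would be most careful. In Lemma~\ref{proof:lemma}, scaling the $i$-th row by $\sigma\|\U_i\|_2$ was chosen precisely so that the resulting covariance matrix has unit diagonal, i.e.\ is a correlation matrix; with a general $\mathcal{C}$ this is no longer true, since the $i$-th diagonal entry of $\matU\mathcal{C}\matU^\top$ equals $\U_i^\top\mathcal{C}\U_i/\|\U_i\|_2^2$. Hence the statement should be read with $\Phi_{\matU\mathcal{C}\matU^\top}$ denoting the CDF of a general zero-mean Gaussian (positive definite since $\mathcal{C}$ is and, generically, $\matU$ has full row rank), rather than a standardized one, and the overall scale $\sigma$ is absorbed into $\mathcal{C}$ (or, keeping it explicit, one replaces $\mathcal{C}$ by $\sigma^2\mathcal{C}$ and the thresholds acquire a factor $1/\sigma$). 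Beyond consistently bookkeeping this convention, I do not anticipate any substantive obstacle.
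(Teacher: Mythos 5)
Your proposal is correct and follows essentially the same route as the paper: both compute the covariance of $\matU\epsilon$ as $\matU\mathcal{C}\matU^\top$ via the affine-transformation property of Gaussians and then substitute this into the key standardization step of the linear-model lemma. Your additional remark about the normalization convention (the diagonal of $\matU\mathcal{C}\matU^\top$ no longer being unit, with $\sigma$ absorbed into $\mathcal{C}$) is a careful and accurate reading of why the $\sigma$ disappears from the threshold in the stated formula.
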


\begin{proof}
    We observe that the Gaussian random variable $\frac{\U_i}{\| \U_i \|}^\top \epsilon \vert_{\substack{i=1\\t \neq t}}^C = \matU^\top \epsilon$ has mean zero as $\epsilon$ is mean zero. Computing its covariance matrix, we have $\E_\epsilon \matU^\top \epsilon \epsilon^\top \matU  = \matU^\top \E_\epsilon (\epsilon \epsilon^\top) \matU = \matU^\top \mathcal{C} \matU$. We use this result after equation \ref{appeqn:key_step} in the proof of Lemma \ref{estimator-linear-models}.
\end{proof}

\vspace{1cm}

\begin{thm}
    The \textbf{Taylor estimator} for the local robustness of a classifier $f$ at point $\X$ with respect to target class $t$ is given by linearizing $f$ around $\X$ using a first-order Taylor expansion, with decision boundaries $g_i(\X) = f_t(\X) - f_i(\X)$, $\forall i \neq t$, leading to
    \begin{align*}
        p^\text{taylor}_{\sigma}(\X) = \cdf \left( \frac{g_i(\X)}{\sigma \|\grad g_i(\X)\|_2} \tensor \right) 
    \end{align*}
    with $\matU$ and $\Phi$ defined as in the linear case.
\end{thm}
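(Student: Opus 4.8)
The plan is to obtain the statement as a one-line corollary of Lemma~\ref{estimator-linear-models}, by substituting the first-order Taylor approximation of each decision boundary function. First I would recall, exactly as in the proof of Lemma~\ref{estimator-linear-models}, that $p^\mathrm{robust}_\sigma(\X) = P_{\epsilon \sim \mathcal{N}(0,\sigma^2)}\big[\, g_i(\X+\epsilon) > 0 \ \ \forall\, i \neq t \,\big]$, where $g_i(\X) = f_t(\X) - f_i(\X)$ is the decision boundary function separating the predicted class $t$ from class $i$. The Taylor estimator is, by definition, this quantity evaluated when each $g_i$ is replaced by its first-order Taylor expansion about $\X$, i.e.\ $g_i(\X+\epsilon) \approx g_i(\X) + \grad g_i(\X)^\top \epsilon$, noting that $\grad g_i(\X) = \grad f_t(\X) - \grad f_i(\X)$.

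Next I would observe that after this substitution the integrand is precisely that of a multi-class \emph{linear} model: the map $\epsilon \mapsto g_i(\X) + \grad g_i(\X)^\top\epsilon$ is affine in $\epsilon$ with ``weight vector'' $\U_i := \grad g_i(\X)$ and ``bias'' $c_i := g_i(\X)$. Applying Lemma~\ref{estimator-linear-models} verbatim with the substitutions $\U_i \leftarrow \grad g_i(\X)$ and $c_i \leftarrow g_i(\X)$ then gives $p^\mathrm{taylor}_\sigma(\X) = \cdf\!\left( \frac{g_i(\X)}{\sigma \|\grad g_i(\X)\|_2} \tensor \right)$ with $\matU = \frac{\grad g_i(\X)}{\|\grad g_i(\X)\|_2}\tensor$ and $\Phi$ the $(C-1)$-dimensional Normal CDF with zero mean and covariance $\matU\matU^\top$ — which is exactly the claimed formula.

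The bulk of the real content lives in Lemma~\ref{estimator-linear-models} itself (the standardization making $\grad g_i(\X)^\top \epsilon / (\sigma\|\grad g_i(\X)\|_2)$ a standard normal, together with the vectorial fact $\matU^\top z \sim \mathcal{N}(0,\matU\matU^\top)$ for $z \sim \mathcal{N}(0, I_d)$, which collapses the joint halfspace event into an \emph{mvn-cdf}), so at this point there is no substantive obstacle. The one thing I would state carefully is that $p^\mathrm{taylor}_\sigma$ is genuinely an \emph{estimator} and not an identity: the linearization step is exact only when the $g_i$ are affine, so the proposition should be read as \emph{defining} $p^\mathrm{taylor}_\sigma$ as the exact average-case robustness of the linearized surrogate model, with the discrepancy $|p^\mathrm{robust}_\sigma - p^\mathrm{taylor}_\sigma|$ controlled separately by the curvature bound established later. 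Minor care is also needed that $\grad g_i(\X) \neq 0$ so the normalization is well defined, which I would record as a standing assumption.
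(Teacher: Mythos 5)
Your proposal is correct and matches the paper's own proof essentially verbatim: the paper likewise linearizes $g_i(\X+\epsilon) \approx g_i(\X) + \grad g_i(\X)^\top \epsilon$ and plugs $\U_i = \grad g_i(\X)$, $c_i = g_i(\X)$ into Lemma~\ref{estimator-linear-models}. Your additional remarks — that the result should be read as a definition of the linearized surrogate's exact robustness rather than an identity, and that $\grad g_i(\X) \neq 0$ is needed for the normalization — are sensible clarifications but do not change the argument.
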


\begin{proof}
    Using the notations from the previous Lemma \ref{proof:lemma}, we can linearize $g(\X + \epsilon) \approx g(\X) + \grad g(\X)^\top \epsilon$ using a first order Taylor series expansion.
    Thus we use $\U_i = \grad g_i(\X)$ and $c_i = g_i(\X)$, and plug it into the result of Lemma \ref{proof:lemma}.
\end{proof}

\begin{thm} The \textbf{estimation error} of the Taylor estimator for a classifier with a quadratic decision boundary $g_i(\X) = \X^\top A_i \X + \U_i^\top \X + c_i$ for positive-definite $A_i$, is upper bounded by
    \begin{align*}
        | p^{robust}_{\sigma}(\X) - p^{taylor}_{\sigma}(\X) | \leq k \sigma^{C-1} \prod_{\substack{i=1\\i\neq t}}^{C} \frac{\lambda_{\max}^{A_i}}{\| \U_i \|_2}  
    \end{align*}
    for noise $\epsilon \sim \mathcal{N}(0, \sigma^2 / d)$, in the limit of $d \rightarrow \infty$. 
\end{thm}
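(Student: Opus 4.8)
The plan is to use the fact that for a quadratic decision-boundary function the first-order Taylor expansion has an exact, closed-form remainder, and then to argue that this remainder is asymptotically deterministic and small in high dimensions. Writing $p^{\mathrm{robust}}_\sigma(\X) = P_\epsilon\big[\, g_i(\X+\epsilon) > 0 \ \ \forall\, i\neq t \,\big]$ and using $g_i(\X) = \X^\top A_i\X + \U_i^\top\X + c_i$, we have the exact identity $g_i(\X+\epsilon) = g_i(\X) + \grad g_i(\X)^\top\epsilon + \epsilon^\top A_i\epsilon$, with $\grad g_i(\X) = 2A_i\X + \U_i$. By Proposition~\ref{eqn:taylor-estimator}, $p^{\mathrm{taylor}}_\sigma(\X)$ is precisely the probability of the same intersection event with the quadratic remainder $R_i(\epsilon) := \epsilon^\top A_i\epsilon$ deleted. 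So $|p^{\mathrm{robust}}_\sigma(\X) - p^{\mathrm{taylor}}_\sigma(\X)|$ is the amount by which the probability of $\bigcap_{i\neq t}\{\, g_i(\X) + \grad g_i(\X)^\top\epsilon > 0 \,\}$ moves when each threshold is perturbed by $R_i(\epsilon)$.

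Next I would bound $R_i(\epsilon)$ by concentration of measure. For $\epsilon\sim\mathcal{N}(0,\sigma^2/d)$, the law of large numbers gives $\|\epsilon\|_2^2 \to \sigma^2$ almost surely as $d\to\infty$; since $A_i$ is positive semidefinite, this confines $R_i(\epsilon)$ to the interval $[\,0,\ \lambda_{\max}^{A_i}\|\epsilon\|_2^2\,]$, which shrinks onto $[\,0,\ \sigma^2\lambda_{\max}^{A_i}\,]$. Hence, in the limit, $p^{\mathrm{robust}}_\sigma(\X)$ is squeezed between the Taylor probability and the same probability with each threshold relaxed by $\sigma^2\lambda_{\max}^{A_i}$. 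Both of these are multivariate-normal CDFs of the form in Lemma~\ref{estimator-linear-models}, evaluated at arguments that differ coordinatewise by $\delta_i = \sigma^2\lambda_{\max}^{A_i}\big/\big(\sigma\,\|\grad g_i(\X)\|_2\big)$; folding the ratio $\|\grad g_i(\X)\|_2/\|\U_i\|_2$ and any remaining purely $\X$-dependent prefactor into the problem-dependent constant $k$ (equivalently, taking $\X$ in the regime where $2A_i\X$ is dominated by $\U_i$), we get $\delta_i = \Theta\big(\sigma\,\lambda_{\max}^{A_i}/\|\U_i\|_2\big)$.

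It then remains to bound the increment of the $(C-1)$-dimensional normal CDF between the two argument vectors. A telescoping/union bound over coordinates bounds this by $\sum_{i\neq t} P\big[\, a_i \in (b_i,\, b_i+\delta_i] \,\big]$, where $a_i$ is the standardized Gaussian built from $g_i(\X) + \grad g_i(\X)^\top\epsilon$; since each one-dimensional Gaussian density is at most $(2\pi)^{-1/2}$, every summand is $O(\delta_i)$. Keeping the joint structure instead — each band event occurs together with the surviving half-space constraints on the other coordinates — and bounding the resulting simultaneous-band event coarsely collapses the sum to the product $k\,\prod_{i\neq t}\delta_i = k\,\sigma^{C-1}\prod_{i\neq t}\lambda_{\max}^{A_i}/\|\U_i\|_2$, which is the stated bound; when every $\lambda_{\max}^{A_i}=0$ the bound is zero, recovering the exact linear case of Lemma~\ref{estimator-linear-models}.

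The step I expect to be the real obstacle is making these two reductions rigorous rather than heuristic. First, the remainder $R_i(\epsilon)$ is correlated with the very linear term $\grad g_i(\X)^\top\epsilon$ whose law determines where the Gaussian mass lies, so ``freezing $R_i$ at a constant'' cannot be done in isolation: it needs a joint tail estimate (e.g.\ Hanson--Wright for the deviation of $R_i(\epsilon)$ from $\frac{\sigma^2}{d}\mathrm{tr}\,A_i$) combined with a conditioning argument, plus an explicit scaling assumption ensuring $\grad g_i(\X)^\top\epsilon$ retains $\Theta(\sigma)$ spread as $d\to\infty$. Second, the passage from the honest \emph{sum} of $C-1$ one-dimensional band probabilities to the advertised \emph{product} is exactly where the ``small problem-dependent constant'' $k$ and the high-dimensional regime are doing the work, and it is the part of the argument I would pin down most carefully.
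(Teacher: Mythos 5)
Your proposal follows essentially the same route as the paper's proof: write the Taylor remainder of the quadratic exactly as $\epsilon^\top A_i \epsilon$, use concentration of $\|\epsilon\|_2^2 \rightarrow \sigma^2$ for $\epsilon \sim \mathcal{N}(0,\sigma^2/d)$ to confine the remainder to $[0,\lambda_{\max}^{A_i}\sigma^2]$, and then bound the probability of the resulting band $-\lambda_{\max}^{A_i}\sigma^2 < \U_i^\top\epsilon + c_i \le 0$ by the maximum of the Gaussian density times the band widths, with $k = (2\pi)^{-(C-1)/2}\det(\matU\matU^\top)^{-1/2}$. The issue you raise about $\grad g_i(\X) = 2A_i\X + \U_i$ is handled in the paper by assuming WLOG $\X = 0$ (re-centering the quadratic), so the denominators are exactly $\|\U_i\|_2$; your alternative of folding the ratio into $k$ is equivalent in spirit.

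The step you single out as the real obstacle --- whether the error is a \emph{sum} of $C-1$ one-dimensional band probabilities or their \emph{product} --- is exactly the step the paper does not justify. The set on which the robust and Taylor events disagree is $\bigcup_{i\neq t}\bigl[(\bigcap_j R_j)\cap T_i^c\bigr]$, a union of events each of which contains only the $i$-th band (plus the surviving constraints on the other coordinates); the honest elementary bound is therefore $\sum_{i\neq t} O(\delta_i)$, of order $\sigma \max_i \lambda_{\max}^{A_i}/\|\U_i\|_2$. The paper instead writes the discrepancy directly as the integral of the $(C-1)$-dimensional density over the box $\prod_i(-\mathcal{C}_i\sigma,0]$ --- i.e., the probability that \emph{all} coordinates fall in their bands simultaneously --- and bounds that by $\max(\mathrm{pdf})\cdot\prod_i \mathcal{C}_i\sigma$, which is what produces the $\sigma^{C-1}$ product form. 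So your instinct is correct: the collapse from union to intersection is asserted rather than derived, and your version of the argument, kept rigorous, yields the (weaker in exponent, but valid) sum bound rather than the stated product. Apart from this --- which is a weakness of the paper's own proof, not a gap peculiar to your attempt --- your decomposition, use of concentration, and density-maximum bound match the paper's argument.
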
 

\begin{proof}
Without loss of generality, assume that $\X = 0$. For any other $\X_1 \neq 0$, we can simply perform a change of variables of the underlying function to center it at $\X_1$ to yield a different quadratic. We first write an expression for $p^{robust}_\sigma$ for the given quadratic classifier $g_i(\X)$ at $\X = 0$. 

\begin{align*}
    p^{robust}_\sigma(0) &= P_{\epsilon} \left( \bigcup_i g_i(\epsilon) > 0 \right) \\
                          &= P_{\epsilon} \left( \bigcup_i \U_i^\top \epsilon + c > - \epsilon^\top A_i \epsilon \right) 
\end{align*}

Similarly, computing, $p^{taylor}_{\sigma}$ we have $\grad g_i(0) = \U^\top$ and $g_i(0) = c_i$, resulting in

\begin{align*}
    p^{taylor}_{\sigma}(0) &= P_{\epsilon}\left(\bigcup_i g^{taylor}_i(\epsilon) > 0\right) \\
    &= P_{\epsilon}\left(\bigcup_i \U_i^\top \epsilon + c > 0 \right)
\end{align*}

Subtracting the two, we have 

\begin{align*}
    &|p^{robust}_\sigma(0) - p^{taylor}_\sigma(0)| \\
    &= \left| P \left(\bigcup_i 0 >  \U_i^\top \epsilon + c > - \epsilon^\top A_i \epsilon \right) \right| \\
    &= \left| P \left(\bigcup_i 0 >  \frac{\U_i^\top \epsilon + c}{\sigma \| \U_i \|_2} > - \frac{\epsilon^\top A_i \epsilon}{\sigma \| \U_i \|_2} \right) \right| 
\end{align*}

For high-dimensional Gaussian noise $\epsilon \sim \mathcal{N}(0, \sigma^2 / d)$, with $d \rightarrow \infty$, we have that $\| \epsilon \|^2 = \sum_i \epsilon_i^2 \rightarrow \sigma^2$ from the law of large numbers. See \cite{vershynin2018high} for an extended discussion. Thus we have $\epsilon^\top A \epsilon \leq \lambda_{\max}^A \| \epsilon \|^2 = \lambda_{\max}^A \sigma^2$.

Also let $z_i = \frac{\U_i^\top \epsilon + c}{\sigma \| \U_i \|_2}$ be a random variable. We observe that $z_i \vert_i$ is a tensor extension of $z_i$, has a covariance matrix of $\matU \matU^\top$ as before. Let us also define $\mathcal{C}_i = \frac{\lambda_{\max}^{A_i}}{\| \U_i \|_2}$.

\begin{align*}
    |&p^{robust}_\sigma(0) - p^{taylor}_\sigma(0)| \\&= \left|P \left( \bigcup_i 0 > z_i(\epsilon) > - \frac{\epsilon^\top A_i \epsilon}{\sigma \| \U_i \|_2} \right) \right| \\
    &\leq \left| P\left(\bigcup_i 0 > z_i > - \frac{\lambda_{\max}^{A_i}}{\| \U_i \|_2} \sigma\right) \right|~~~(\epsilon^\top A \epsilon < \lambda_{\max}^A \sigma^2) \\
    &= \left| \int ... \int^{0}_{-\mathcal{C}_i \sigma} \text{pdf}(z_i \vert_i)~ \mathrm{d}z_i \vert_i \right| ~~~(\text{Defn of mvn cdf}) \\
    &\leq \max_{z_i \vert_i} \text{pdf}(z_i \vert_i) ~ \prod_i |C_i \sigma |~~~(\text{Upper bound pdf with its max})\\
    &\leq (2\pi)^{-(C-1)/2} \det(\matU \matU^\top)^{-1/2} \prod_{\substack{i=1\\i\neq t}}^C C_i \sigma \\ &=  k \left(\sigma^{C-1} \prod_{\substack{i=1\\i\neq t}}^C \frac{\lambda_{\max}^{A_i}}{\| \U_i \|_2} \right)
\end{align*}

where $k = \max_z pdf(z) = (2 \pi)^{-(C-1)/ 2} \det(\matU \matU^\top)^{-1/2}$, which is the max value of the Gaussian pdf. Note that as the rows of $\matU$ are normalized, $\det(\matU) \leq 1$ and $\det(\matU \matU^\top) = \det(\matU)^2 \leq 1$.

\end{proof}

We note that these bounds are rather pessimistic, as in high-dimensions $\epsilon^\top A_i \epsilon \sim \lambda_{\mn}^{A_i} \leq \lambda_{\max}^{A_i}$, and thus in reality the errors are expected to be much smaller. 

\vspace{1cm}

\begin{thm}
    The \textbf{MMSE estimator} for the local robustness of a classifier $f$ at point $\X$ with respect to target class $t$ is given by an MMSE linearization $f$ around $\X$, for decision boundaries $g_i(\X) = f_t(\X) - f_i(\X)$, $\forall i \neq t$, leading to
    \begin{align*}
        &p^\text{mmse}_{\sigma}(\X) = \cdf \left( \frac{ \Tilde{g}_i(\X)}{\sigma \| \grad \Tilde{g}_i(\X)\|_2} \tensor \right) \\
        &\mathrm{where}~~\Tilde{g}_i(\X) = \frac{1}{N}\sum_{j=1}^{N} g_i(\X + \epsilon) ~,~ \epsilon \sim \mathcal{N}(0, \sigma^2)
    \end{align*}
    with $\matU$ and $\Phi$ defined as in the linear case, and $N$ is the number of perturbations. 

\end{thm}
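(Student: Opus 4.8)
The plan is to derive the formula by reducing to the Taylor estimator of Proposition~\ref{eqn:taylor-estimator}, but applied to a \emph{smoothed} surrogate of $f$ rather than to $f$ itself, and then to account for the Monte-Carlo approximation of the smoothing expectation. First I would recall the standard characterization of the MMSE-optimal affine fit in a Gaussian neighborhood: among all affine maps $\ell(\X+\epsilon) = b + a^\top\epsilon$, the minimizer of $\E_{\epsilon\sim\mathcal{N}(0,\sigma^2)}\big[(g_i(\X+\epsilon) - \ell(\X+\epsilon))^2\big]$ has, by the normal equations, intercept $b = \E_\epsilon[g_i(\X+\epsilon)]$ (since $\E_\epsilon[\epsilon]=0$) and slope $a = \E_\epsilon[\epsilon\, g_i(\X+\epsilon)]/\sigma^2$. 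By Stein's lemma (integration by parts against the Gaussian density), $\E_\epsilon[\epsilon\, g_i(\X+\epsilon)] = \sigma^2\,\grad \E_\epsilon[g_i(\X+\epsilon)]$, so the MMSE slope equals $\grad \bar g_i(\X)$ where $\bar g_i(\X) := \E_{\epsilon\sim\mathcal{N}(0,\sigma^2)}[g_i(\X+\epsilon)]$. This is exactly the SmoothGrad construction, whose MMSE-optimality has been established in prior work \cite{han2022explanation, agarwal2021towards}; I would state this as a short lemma, citing it or giving the two-line least-squares argument.

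Next I would define the smoothed decision-boundary functions $\bar g_i(\X) = \E_{\epsilon}[g_i(\X+\epsilon)]$ for all $i\neq t$, which are precisely the decision-boundary functions of the randomized-smoothed model built from $f$ \cite{cohen2019certified}. Replacing each $g_i$ by its MMSE linearization is then the same as replacing $g_i$ by the first-order Taylor expansion of $\bar g_i$ at $\X$, i.e.\ $g_i(\X+\epsilon) \approx \bar g_i(\X) + \grad\bar g_i(\X)^\top\epsilon$. Applying Lemma~\ref{estimator-linear-models} to this linear model with the identifications $\U_i \leftarrow \grad\bar g_i(\X)$ and $c_i \leftarrow \bar g_i(\X)$ immediately yields $\cdf\!\big(\frac{\bar g_i(\X)}{\sigma\|\grad\bar g_i(\X)\|_2}\tensor\big)$, with $\matU$ assembled from the normalized gradients $\grad\bar g_i(\X)/\|\grad\bar g_i(\X)\|_2$.

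Finally I would replace the intractable expectation $\bar g_i$ by its $N$-sample empirical average $\Tilde g_i(\X) = \frac1N\sum_{j=1}^N g_i(\X+\epsilon_j)$, noting that $\grad\Tilde g_i(\X) = \frac1N\sum_{j=1}^N \grad g_i(\X+\epsilon_j)$ is computable by automatic differentiation and is an unbiased estimator of $\grad\bar g_i(\X)$; substituting both into the display above gives the claimed expression, exact in the limit $N\to\infty$ and otherwise carrying an $\mathcal{O}(1/\sqrt N)$ sampling error in addition to the linearization error analyzed in the following proposition. The main obstacle I anticipate is the first step — rigorously identifying the MMSE linearization with the SmoothGrad pair $(\bar g_i(\X),\grad\bar g_i(\X))$ — since everything afterward is a direct plug-in to Lemma~\ref{estimator-linear-models}; the delicate points there are justifying the interchange of gradient and expectation (dominated convergence under mild growth assumptions on $g_i$) and the Stein identity $\E_\epsilon[\epsilon\, g_i(\X+\epsilon)] = \sigma^2\,\grad\bar g_i(\X)$.
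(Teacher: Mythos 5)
Your proposal is correct and follows essentially the same route as the paper's own proof: set up the MMSE affine-fit objective, solve the normal equations to get intercept $\E_\epsilon[g_i(\X+\epsilon)]$ and slope $\E_\epsilon[\epsilon\, g_i(\X+\epsilon)]/\sigma^2$, invoke Stein's lemma to identify the slope with the gradient of the smoothed function, and plug the resulting $(\U_i, c_i)$ into Lemma~\ref{estimator-linear-models}. Your additional remarks on the $N$-sample Monte-Carlo substitution and the gradient--expectation interchange are finer-grained than what the paper records, but they do not change the argument.
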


\begin{proof}
We would like to improve upon the Taylor approximation to $g(\X + \epsilon)$ by using an MMSE local function approximation. Essentially, we'd like the find $\U \in \R^d$ and $c \in \R$ such that 

\begin{align*}
    (\U^*(\X), c^*(\X)) = \arg\min_{\U,c} \E_{\epsilon \sim \mathcal{N}(0, \sigma^2)} (g(x+\epsilon) - \U^{\top} \epsilon - c)^2
\end{align*}

A straightforward solution by finding critical points and equating it to zero gives us the following:

\begin{align*}
    \U^*(\X) &= \E_\epsilon \left[ g(x + \epsilon) \epsilon^{\top} \right] / \sigma^2 \\&= \E_\epsilon \left[ \grad g(\X + \epsilon) \right] ~~~~~ (\text{Stein's Lemma}) \\
    c^*(\X) &= \E_\epsilon g(x + \epsilon)
\end{align*}

Plugging in these values of $U^*, c^*$ into Lemma \ref{proof:lemma}, we have the result.

\end{proof}

\vspace{1cm}

\begin{thm} The \textbf{estimation error} of the MMSE estimator for a classifier with a quadratic decision boundary $g(\X) = \X^\top A \X + \U^\top \X + c$, and positive definite $A$ is upper bounded by
    \begin{align*}
        | p^{robust}_{\sigma}(\X) - p^{mmse}_{\sigma}(\X) | \leq k \sigma^{C-1} \prod_{\substack{i=1\\i\neq t}}^C \frac{|\lambda_{\max}^{A_i} - \lambda_{\mathrm{mean}}^{A_i}|}{\| \U_i \|_2}
    \end{align*}
    for noise $\epsilon \sim \mathcal{N}(0, \sigma^2 / d)$, in the limit of $d \rightarrow \infty$ and $N \rightarrow \infty$.  
\end{thm}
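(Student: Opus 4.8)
The plan is to piggyback on the proof of the Taylor-estimator error bound, which already supplies all the machinery (change of variables to center the quadratic at $\X=0$, reduction of the probability gap to a ``shell'' integral, normalization of each boundary coordinate by $\sigma\|\U_i\|$, and upper-bounding the integrand by the maximum of the $(C-1)$-variate Gaussian density $(2\pi)^{-(C-1)/2}\det(\matU\matU^\top)^{-1/2}$). The only thing I would need to redo is identify how the MMSE surrogate differs from the first-order Taylor surrogate; that difference is exactly what turns $\lambda_{\max}^{A_i}$ into $\lambda_{\max}^{A_i}-\lambda_{\mathrm{mean}}^{A_i}$.

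First I would compute the MMSE linearization in the stated limits. Taking $\X=0$ without loss of generality and writing $g_i(\epsilon)=\epsilon^\top A_i\epsilon+\U_i^\top\epsilon+c_i$, let $N\to\infty$ so that $\widetilde g_i(\X)=\E_\epsilon\, g_i(\X+\epsilon)$. Using $\E[\epsilon\epsilon^\top]=(\sigma^2/d)I$ gives the intercept $\widetilde g_i(0)=c_i+\tfrac{\sigma^2}{d}\operatorname{tr}(A_i)=c_i+\sigma^2\lambda_{\mathrm{mean}}^{A_i}$, while (by Stein's lemma, as in the MMSE-estimator proof) $\grad\widetilde g_i(0)=\E_\epsilon\grad g_i(\epsilon)=\U_i$. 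So the MMSE surrogate boundary at the origin is the affine map $g_i^{\mathrm{mmse}}(\epsilon)=\U_i^\top\epsilon+c_i+\sigma^2\lambda_{\mathrm{mean}}^{A_i}$, and $p^{\mathrm{mmse}}_\sigma(0)=P_\epsilon\big(\bigcap_{i\neq t}\ \U_i^\top\epsilon+c_i > -\sigma^2\lambda_{\mathrm{mean}}^{A_i}\big)$, to be contrasted with $p^{\mathrm{robust}}_\sigma(0)=P_\epsilon\big(\bigcap_{i\neq t}\ \U_i^\top\epsilon+c_i > -\epsilon^\top A_i\epsilon\big)$.

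Next I would subtract these exactly as in the Taylor proof. The probability gap is controlled by the event that, for some $i$, the quantity $\U_i^\top\epsilon+c_i$ lies between $-\epsilon^\top A_i\epsilon$ and $-\sigma^2\lambda_{\mathrm{mean}}^{A_i}$. The point of the recentring is that in the $d\to\infty$ limit $\|\epsilon\|^2\to\sigma^2$, and since $A_i\succeq 0$ we have $\epsilon^\top A_i\epsilon\in[\sigma^2\lambda_{\min}^{A_i},\sigma^2\lambda_{\max}^{A_i}]$, an interval that already contains the offset $\sigma^2\lambda_{\mathrm{mean}}^{A_i}$; hence the half-width of this shell is $|\epsilon^\top A_i\epsilon-\sigma^2\lambda_{\mathrm{mean}}^{A_i}|\le\sigma^2|\lambda_{\max}^{A_i}-\lambda_{\mathrm{mean}}^{A_i}|$, as opposed to the Taylor case where the offset is $0$ and the width is the larger $\sigma^2\lambda_{\max}^{A_i}$. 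Normalizing by $\sigma\|\U_i\|$ and running the identical ``volume $\times$ maximum density'' estimate with band half-widths $\sigma|\lambda_{\max}^{A_i}-\lambda_{\mathrm{mean}}^{A_i}|/\|\U_i\|$ then yields $|p^{\mathrm{robust}}_\sigma-p^{\mathrm{mmse}}_\sigma|\le k\,\sigma^{C-1}\prod_{i\neq t}\frac{|\lambda_{\max}^{A_i}-\lambda_{\mathrm{mean}}^{A_i}|}{\|\U_i\|}$ with $k=(2\pi)^{-(C-1)/2}\det(\matU\matU^\top)^{-1/2}$.

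The hard part will be the last step: lining up the shell/combinatorial bookkeeping over the $C-1$ classes with the product form used in the Taylor proof, and honestly controlling $|\epsilon^\top A_i\epsilon-\sigma^2\lambda_{\mathrm{mean}}^{A_i}|$ — which relies on the concentration of $\|\epsilon\|^2$ at $\sigma^2$ under $\epsilon\sim\mathcal N(0,\sigma^2/d)$ as $d\to\infty$ to confine the quadratic form to $[\sigma^2\lambda_{\min}^{A_i},\sigma^2\lambda_{\max}^{A_i}]$, together with care that the ``lower'' side of the shell (of width at most $\sigma^2(\lambda_{\mathrm{mean}}^{A_i}-\lambda_{\min}^{A_i})\le\sigma^2\lambda_{\mathrm{mean}}^{A_i}$, using $A_i\succeq 0$) is absorbed into the same factor up to the problem-dependent constant. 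As in the Taylor case the bound is intentionally pessimistic: since $\epsilon^\top A_i\epsilon$ in fact concentrates at $\sigma^2\lambda_{\mathrm{mean}}^{A_i}$ rather than merely lying in an interval around it, the true error is far smaller, and when every $A_i$ has all its eigenvalues equal (e.g., $A_i\propto I$) the shell collapses and the error vanishes, reverting to the linear case of Lemma~\ref{proof:lemma}.
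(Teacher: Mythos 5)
Your proposal is correct and follows essentially the same route as the paper's proof: recenter at $\X=0$, compute the MMSE surrogate's intercept $c_i+\sigma^2\lambda_{\mathrm{mean}}^{A_i}$ via the trace identity and its slope $\U_i$ via Stein's lemma, reduce the error to the probability that the normalized linear statistic falls in a band of half-width $\sigma|\lambda_{\max}^{A_i}-\lambda_{\mathrm{mean}}^{A_i}|/\|\U_i\|_2$, and bound that by volume times the maximum of the $(C-1)$-variate Gaussian density. Your extra care about the lower side of the shell (the $[\lambda_{\min},\lambda_{\mathrm{mean}}]$ portion) is a point the paper's write-up glosses over, but it does not change the argument or the final bound.
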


\begin{proof}
We proceed similarly to the proof made for the Taylor estimator, and without loss of generality, assume that $\X = 0$. Computing, $p^{mmse}_{\sigma}$ we have $\E_{\epsilon} \grad g_i(\epsilon) = \U_i^\top$ and $\E_{\epsilon} g_i(\epsilon) = c + \E (\epsilon^\top A_i \epsilon) = c + \E(trace(\epsilon^\top A_i \epsilon)) = c + \E(trace(A_i \epsilon \epsilon^T)) = c + trace(A_i) \sigma^2 / d = c + \sigma^2 \lambda_{\mn}^{A_i}$, resulting in

\begin{align*}
    p^{mmse}_{\sigma}(0) &= P_{\epsilon}\left(\bigcup_i \hat{g}_i(\epsilon) > 0\right) \\
    &= P_{\epsilon}\left(\bigcup_i \U_i^\top \epsilon + c > - \sigma^2 \lambda_{\mn}^{A_i} \right)
\end{align*}

Subtracting the two, we have 

\begin{align*}
    &|p^{robust}_\sigma(0) - p^{mmse}_\sigma(0)| \\
    & \leq \left|P \left(\bigcup_i - \sigma^2 \lambda_\mn^{A_i} >  \U_i^\top \epsilon + c > - \sigma^2 \lambda_{\max}^{A_i} \right) \right| \\
    &= \left|P \left(\bigcup_i - \sigma \frac{\lambda_{\mn}^{A_i}}{\| \U_i \|_2} >  \frac{\U_i^\top \epsilon + c}{\sigma \| \U_i \|_2} > - \sigma \frac{\lambda_{\max}^{A_i}}{\| \U_i \|_2} \right) \right| 
\end{align*}

Similar to the previous proof, let $z_i = \U_i^\top \epsilon + c$ be a random variable, and that $z_i \vert_i$ is a tensor extension of $z_i$ from our previous notation.

\begin{align*}
    |&p^{robust}_\sigma(\X) - p^{mmse}_\sigma(\X)| \\
    &\leq \left| P\left(\bigcup_i  - \lambda_{\mn}^{A_i} \sigma^2 > z_i > - \lambda_{\max}^{A_i} \sigma^2\right) \right| \\
    &= \left| \int ... \int^{-\lambda_{\mn}^{A_i} \sigma^2}_{-\lambda_{\max}^{A_i} \sigma^2} \text{pdf}(z_i \vert_i)~ \mathrm{d}z_i \vert_i \right| \\
    &\leq \max_{z_i \vert_i} \text{pdf}(z_i \vert_i) ~ \sigma^{C-1} \prod_i \frac{|(\lambda_{\max}^{A_i} - \lambda_{\mn}^{A_i}) |}{\| \U_i \|_2}\\
    &= k \sigma^{C-1} \prod_i \frac{|\lambda_{\max}^{A_i} - \lambda_{\mn}^{A_i}|}{\| \U_i \|2}
\end{align*}

where $k = \max_z \text{pdf}(z_i \vert_i) = (2 \pi)^{-(C-1)/ 2} \det(\matU \matU^\top)^{-1/2}$ like in the Taylor case. Note that as the rows of $\matU$ are normalized, $\det(\matU) \leq 1$ and $\det(\matU \matU^\top) = \det(\matU)^2 \leq 1$.

\end{proof}

We note that these bounds are rather pessimistic, as in high-dimensions $\epsilon^\top A_i \epsilon \sim \lambda_{\mn}^{A_i} \leq \lambda_{\max}^{A_i}$, and thus in reality the errors are expected to be much smaller. 

\subsubsection{Approximating the Multivariate Gaussian CDF with mv-sigmoid}\label{app:mv-sigmoid-explain} 

One drawback of the Taylor and MMSE estimators is their use of the \emph{mvn-cdf}, which does not have a closed form solution and can cause the estimators to be slow for settings with a large number of classes $C$. In addition, the \emph{mvn-cdf} makes these estimators non-differentiable, which is inconvenient for applications which require differentiating \probust{}. To alleviate these issues, we approximate the \emph{mvn-cdf} with an analytical closed-form expression. As CDFs are monotonically increasing functions, the approximation should also be monotonically increasing.

To this end, it has been previously shown that the \emph{univariate} Normal CDF $\phi$ is well-approximated by the sigmoid function \cite{hendrycks2016gaussian}. It is also known that when $\matU \matU^\top = I$, \emph{mvn-cdf} is given by $\Phi(\X) = \prod_i\phi(\X_i)$, i.e., it is given by the product of the univariate normal CDFs. Thus, we may choose to approximate $\Phi(\X) = \prod_i \text{sigmoid}(\X)$. However, when the inputs are small, this can be simplified as follows:

\begin{align*}
    &\Phi_{I}(\X) = \prod_i \phi(\X_i) \approx \prod_i \frac{1}{1 + \exp(-\X_i)}\\
    &= \frac{1}{1 + \sum_i \exp(-\X_i) + \sum_{j,k} \exp(-\X_j - \X_k) + ...} \\
    &\approx \frac{1}{1 + \sum_i \exp(-\X_i)} ~~~(\text{for} ~~\X_i \rightarrow \infty~~ \forall i)
\end{align*}


We call the final expression the ``multivariate sigmoid'' (\emph{mv-sigmoid}) which serves as our approximation of \emph{mvn-cdf}, especially at the tails  of the distribution. While we expect estimators using \emph{mv-sigmoid} to approximate ones using \emph{mvn-cdf} only when $\matU \matU^\top = \mathbf{I}$, we find experimentally that the approximation works well even for practical values of the covariance matrix $\matU\matU^\top$. Using this approximation to substitute \emph{mv-sigmoid} for \emph{mvn-cdf} in the \ptaylor{} and \pmmse{} estimators yields the \ptaylormvs{} and \pmmsemvs{} estimators, respectively.


\subsubsection{Relationship between mv-sigmoid, softmax, and the Taylor estimator}
\label{app:softmax-explain}

A common method to estimate the confidence of model predictions is to use the softmax function applied to the logits $f_i(\X)$ of a model. We note that softmax is identical to \emph{mv-sigmoid} when directly applied to the logits of neural networks: 

\begin{align*}
    &\text{softmax}_t\left( f_i(\X) ~\Big\vert_{\substack{i = 1}}^{C} \right) = \frac{\exp(f_t(\X))}{\sum_{i=1}^C \exp(f_i(\X))} = \\& \frac{1}{1 + \sum\limits_{\substack{i=1\\i \neq t}}^{C} \exp(f_i(\X) - f_t(\X))} = \text{mv-sigmoid}\left( g_i(\X) ~\Big\vert_{\substack{i = 1\\i\neq t}}^{C} \right)
\end{align*}

Recall that $g_i(\X) = f_t(\X) - f_i(\X)$ is the decision boundary function. Note that this equivalence only holds for the specific case of logits. Comparing the expressions of softmax applied to logits above and the Taylor estimator, we notice that they are only different in that the Taylor estimator divides by the gradient norm, and uses the \emph{mvn-cdf} function instead of \emph{mv-sigmoid}. Given this similarity to the Taylor estimator, it is reasonable to ask whether softmax applied to logits (henceforth $p^\text{softmax}_{T}$ for softmax with temperature $T$) itself can be a ``good enough'' estimator of $p^\text{robust}_{\sigma}$ in practice. In other words, does $p^\text{softmax}_T$ well-approximate $p^\text{robust}_{\sigma}$ in certain settings?

In general, this cannot hold because softmax does not take in information about $\matU \matU^\top$, nor does it use the gradient information used in all of our estimators, although the temperature parameter $T$ can serve as a substitute for $\sigma$ in our expressions. In Appendix \ref{app:proofs}, we provide a theoretical result for a restricted linear setting where softmax can indeed match the behavior of \ptaylormvs{}, which happens precisely when $\matU \matU^\top = \mathbf{I}$ and all the class-wise gradients are equal. In the next section, we demonstrate empirically that the softmax estimator $p^{\text{softmax}}_T$ is a poor estimator of average robustness in practice.

\paragraph{The softmax estimator} We observe that for linear models with a specific noise perturbation $\sigma$, the common softmax function taken with respect to the output logits can be viewed as an estimator of \probust{}, albeit in a very restricted setting. Specifically,

\begin{lemma}
    For multi-class linear models $f(\X) = \mathbf{w}^\top \X + b$, such that the decision boundary weight norms $\| \U_i \|_2 = k, \forall i \in [1, C], i \neq t$,
    \begin{align*}
        p^\text{softmax}_{T} = p^\text{taylor\_mvs}_{\sigma}~~~~\text{where}~~~~T = \sigma k
    \end{align*}
\label{lemma:softmax}
\end{lemma}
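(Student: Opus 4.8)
The plan is to prove the identity by direct calculation, collapsing both sides to the same closed-form rational expression in the decision-boundary functions $g_i(\X) = f_t(\X) - f_i(\X)$, $i \neq t$.

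First I would instantiate the \ptaylormvs{} estimator for the linear model $f(\X) = \W^\top\X + b$. Each $g_i$ is affine, $g_i(\X) = \U_i^\top\X + (b_t - b_i)$, so $\grad g_i(\X) = \U_i$ and hence $\|\grad g_i(\X)\|_2 = \|\U_i\|_2 = k$ for every $i \neq t$ by hypothesis. Substituting $\grad g_i = \U_i$ into the definition of the Taylor estimator (Definition~\ref{eqn:taylor-estimator}) and replacing \emph{mvn-cdf} by \emph{mv-sigmoid} (Appendix~\ref{app:mv-sigmoid-explain}) gives
\begin{align*}
    p^\mathrm{taylor\_mvs}_{\sigma}(\X) = \text{mv-sigmoid}\!\left( \frac{g_i(\X)}{\sigma k}\,\bigg\vert_{\substack{i=1\\i\neq t}}^{C} \right) = \frac{1}{1 + \sum_{\substack{i=1\\i\neq t}}^{C} \exp\!\left(-\,g_i(\X)/(\sigma k)\right)} .
\end{align*}
Next I would expand $p^\mathrm{softmax}_{T}$, the softmax with temperature $T$ applied directly to the logits: dividing numerator and denominator by $\exp(f_t(\X)/T)$ and using $g_i(\X) = f_t(\X) - f_i(\X)$ yields
\begin{align*}
    p^\mathrm{softmax}_{T}(\X) = \frac{\exp(f_t(\X)/T)}{\sum_{i=1}^C \exp(f_i(\X)/T)} = \frac{1}{1 + \sum_{\substack{i=1\\i\neq t}}^{C} \exp\!\left(-\,g_i(\X)/T\right)} ,
\end{align*}
which is just the softmax $=$ \emph{mv-sigmoid}-on-logits identity recorded in Appendix~\ref{app:softmax-explain}. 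Comparing the two displays, the estimators agree term by term precisely when the single scalar $T$ is identified with the per-class factor $\sigma\|\U_i\|_2$; since all these norms equal $k$, the common value is $T = \sigma k$, which is the claim.

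There is no genuine obstacle here: the argument is a one-line substitution once both closed forms are written down. The only point worth flagging --- and the reason the equal-norm hypothesis $\|\U_i\|_2 = k$ is essential --- is that the Taylor estimator rescales the $i$-th decision boundary by the class-dependent quantity $1/(\sigma\|\U_i\|_2)$, whereas softmax offers only a single global temperature, so the two can be matched by a single $T$ only when all the $\|\U_i\|_2$ coincide; this is also the structural reason, noted in the main text, why softmax is a poor estimator of \probust{} in general. I would additionally remark that matching the \emph{exact} Taylor estimator \ptaylor{} rather than \ptaylormvs{} would further require $\matU\matU^\top = \mathbf{I}$, so that \emph{mvn-cdf} factors over coordinates and is itself well approximated by \emph{mv-sigmoid}.
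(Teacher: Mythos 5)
Your proof is correct and follows essentially the same route as the paper's: both reduce $p^\mathrm{softmax}_{T}$ and $p^\mathrm{taylor\_mvs}_{\sigma}$ to the same \emph{mv-sigmoid} expression in the $g_i(\X)$ and match the scaling factors under the equal-norm hypothesis. Your closing remarks on why a single temperature cannot absorb class-dependent norms are a nice addition but do not change the argument.
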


\begin{proof} Consider softmax with respect to the $t^{th}$ output class and define $g_i(\X) = f_t(\X) - f_i(\X)$, with $f$ being the linear model logits. Using this, we first show that softmax is identical to \emph{mv-sigmoid}:

\begin{align*}
        p^\text{softmax}_T(\X) &= \text{softmax}_t(f_1(\X)/T, ..., f_C(\X)/T) \\
        &= \frac{\exp(f_t(\X)/T)}{\sum_i \exp(f_i(\X)/T)} \\ 
        &= \frac{1}{1 + \sum_{i; i\neq t} \exp((f_i(\X) - f_t(\X))/T)} \\ 
        &= ~\text{mv-sigmoid} \left[ g_i(\X)/T \tensor \right]
\end{align*}

Next, by denoting $\U_i = \W_t - \W_i$, each row has equal norm $\| \U_i \|_2 = \| \U_j \|_2, \forall i,j,t \in [1,...C]$ which implies: 

\begin{align*}
        p^\text{taylor\_mvs}_\sigma(\X) &= \text{mv-sigmoid} \left[ \frac{g_i(\X)}{\sigma \| \U_i \|_2} \tensor \right]\\ 
        &= \text{mv-sigmoid} \left[ g_i(\X)/T \tensor \right]~~ (\because \text{$T = \sigma k $})\\ 
        & = p^\text{softmax}_T(\X)
\end{align*}
\end{proof}

Lemma~\ref{lemma:softmax} indicates that the temperature parameter $T$ of softmax roughly corresponds to the $\sigma$ of the added Normal noise with respect to which local robustness is measured. Overall, this shows that under the restricted setting where the local linear model consists of decision boundaries with equal weight norms, the softmax outputs can be viewed as an estimator of the \ptaylormvs{} estimator, which itself is an estimator of \probust{}. However, due to the multiple levels of approximation, we can expect the quality of \psoftmax{}'s approximation of \probust{} to be poor in general settings (outside of the very restricted setting), so much so that in general settings, \probust{} and \psoftmax{} would be unrelated.

\subsection{Datasets}
\label{app:datasets}

The MNIST dataset consists of images of gray-scale handwritten digits spanning 10 classes: digits 0 through 9. The FashionMNIST (FMNIST) dataset consists of gray-scale images of articles of clothing spanning 10 classes: t-shirt, trousers, pullover, dress, coat, sandal, shirt, sneaker, bag, and ankle boot. For MNIST and FMNIST, each image is 28 pixels x 28 pixels. For MNIST and FMNIST, the training set consists of 60,000 images and the test set consists of 10,000 images.

The CIFAR10 dataset consists of color images of common objects and animals spanning 10 classes: airplane, car, bird, cat, deer, dog, frog, horse, ship, and truck. The CIFAR100 dataset consists of color images of common objects and animals spanning 100 classes: apple, bowl, chair, dolphin, lamp, mouse, plain, rose, squirrel, train, etc. For CIFAR10 and CIFAR100, each image is 3 pixels x 32 pixels x 32 pixels. For CIFAR10 and CIFAR100, the training set consists of 50,000 images and the test set consists of 10,000 images.

\subsection{Models}
\label{app:models}

For the MNIST and FMNIST, we train a linear model and a convolutional neural network (CNN) to perform 10-class classification. The linear model consists of one hidden layer with 10 neurons. The CNN consists of four hidden layers: one convolutional layer with 5x5 filters and 10 output channels, one convolutional layer 5x5 filters and 20 output channels, and one linear layer with 50 neurons, and one linear layer 10 neurons. 

For CIFAR10 and CIFAR100, we train a Vision Transformer model to perform 10-class and 100-class classification, respectively, by fine-tuning a Vision Transformer that was pre-trained on ImageNet (\url{https://huggingface.co/google/vit-base-patch16-224-in21k}) on each dataset. For these models, the test set consists of 100 images. We chose this number of datapoints so that \pmc{} would run within a reasonable amount of time.
We also train a ResNet18 model to perform 10-class and 100-class classification, respectively. The model architecture is described in \citep{he2016deep}. For CIFAR10 and CIFAR100, we also train the ResNet18 models using varying levels of gradient norm regularization to obtain models with varying levels of robustness. The larger the weight of gradient norm regularization ($\lambda$), the more robust the model.

All models were trained using stochastic gradient descent. Hyperparameters were selected to achieve decent model performance. The emphasis is on analyzing the estimators’ estimates of local robustness of each model, not on high model performance. Thus, we do not focus on tuning model hyperparameters. All models were trained for 200 epochs. The test set accuracy for each model is shown in Table~\ref{table:app-model-acc}.

\begin{table*}[ht!]
    \centering
    \begin{tabular}{c|c|c|c}
    Dataset      & Model  & $\lambda$  & Test set accuracy \\
    \midrule
    MNIST        & Linear  & 0 & 92\%                         \\
    MNIST        & CNN     & 0 & 99\%                         \\
    \midrule
    FashionMNIST & Linear  & 0 & 84\%                         \\
    FashionMNIST & CNN     & 0 & 91\%                         \\
    \midrule
    CIFAR10      & Vision Transformer & 0 & 99\%                         \\
    CIFAR10      & ResNet18 & 0 & 94\%                         \\
    CIFAR10      & ResNet18 & 0.0001 & 93\%                         \\
    CIFAR10      & ResNet18 & 0.001 & 90\%                         \\
    CIFAR10      & ResNet18 & 0.01 & 85\%                         \\
    \midrule
    CIFAR100     & Vision Transformer & 0 & 91\%                        \\
    CIFAR100     & ResNet18 & 0 & 76\%                        \\
    CIFAR100     & ResNet18 & 0.0001 & 74\%                         \\
    CIFAR100     & ResNet18 & 0.001 & 69\%                         \\
    CIFAR100     & ResNet18 & 0.01 & 60\%                         
    \end{tabular}
    \vspace*{3mm}
    \caption{Test set accuracy of models.}
    \label{table:app-model-acc}
\end{table*}

\subsection*{Experiments} 
Due to file size constraints, Section A.4 can be found in the Supplementary material.

\clearpage

\subsection{Experiments}
\label{app:experiments}
In this section, we provide the following additional experimental results:

\begin{enumerate}
    \item Figure \ref{app:convergence} shows results on the convergence of \pmc{}. \pmc{} takes a large number of samples to converge and is computationally inefficient.
    \item Figure \ref{app:convergence_mmse} shows results on the convergence of \pmmse{}. \pmmse{} takes only a few samples to converge and is more computationally inefficient than \pmc{}.
    \item Figure \ref{app:distribution_probust} shows the distribution of \probust{} as a function of $\sigma$. Consistent with theory in Section \ref{sec:methods}, (1) as noise increases, \probust{} decreases, and (2) \pmmse{} accurately estimates \pmc{}.
    \item Table \ref{app:runtimes} presents estimator runtimes. Our analytical estimators are more efficient than the naïve estimator (\pmc{}).
    \item Figure \ref{app:accuracy_probust} shows the accuracy of the analytical robustness estimators as a function of $\sigma$. \pmmse{} and \pmmsemvs{} are the best estimators of \probust{}, followed closely by \ptaylormvs{} and \ptaylor{}, trailed by \psoftmax{}.
    \item Figure \ref{app:accuracy_robust} shows the accuracy of the analytical estimators for robust models. For more robust models, the estimators compute \probust{} more accurately over a larger $\sigma$.
    \item Figures \ref{fig2:mvsig-mvncdf} and \ref{app:mvsigmoid} shows that \emph{mv-sigmoid} well-approximates \emph{mvn-cdf} over $\sigma$.
    \item Figure \ref{fig4:probust-and-psoftmax} shows that \psoftmax{} is not a good approximator of \probust{}.
    \item Figure \ref{app:robustness_bias} shows the distribution of \probust{} among classes (measured by \pmmse{}), revealing that models display robustness bias among classes. 
    \item Figures \ref{fig-supp:topk-vs-bottomk} and \ref{fig6:topk-vs-bottomk} show the application of \pmmse{} and \psoftmax{} to identification of robust and non-robust points. \probust{} better identifies robust and non-robust points than \psoftmax{}. 
    \item Figures \ref{app:noisy_mnist}, \ref{app:noisy_fmnist}, \ref{app:noisy_cifar10}, and \ref{app:noisy_cifar100} show examples of noisy images with the level of noise analyzed in our paper. Overall, the noise levels seem visually significant.
 \end{enumerate}

\begin{figure*}[htbp!]
    \centering
    \begin{flushleft}
        \hspace{-0.1cm}\rotatebox{90}{\hspace{-5.9cm}Relative error \hspace{1.2cm}Absolute error}
        \hspace{1.4cm}MNIST CNN
        \hspace{1.8cm} FMNIST CNN
        \hspace{1.4cm} CIFAR10 ResNet18
        \hspace{1cm} CIFAR100 ResNet18
    \end{flushleft}
         
    \begin{subfigure}{0.23\textwidth}
        \includegraphics[width=\linewidth]{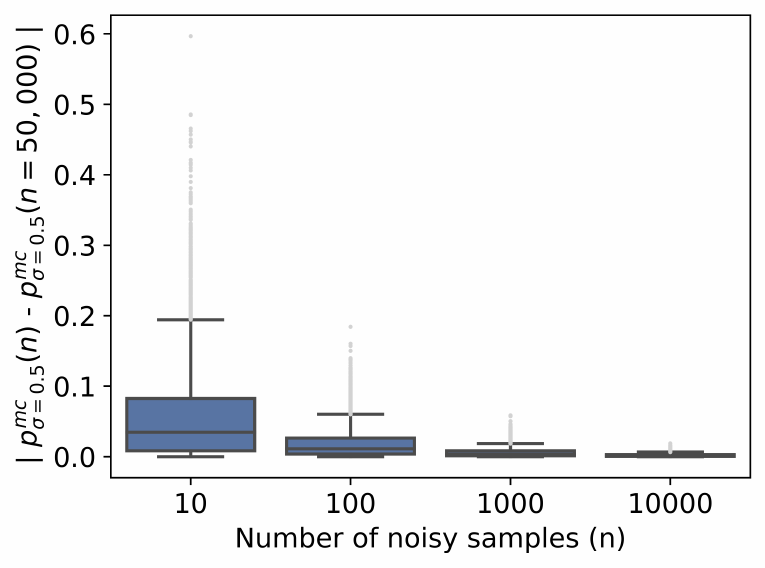}
    \end{subfigure}
    \begin{subfigure}{0.23\textwidth}
        \includegraphics[width=\linewidth]{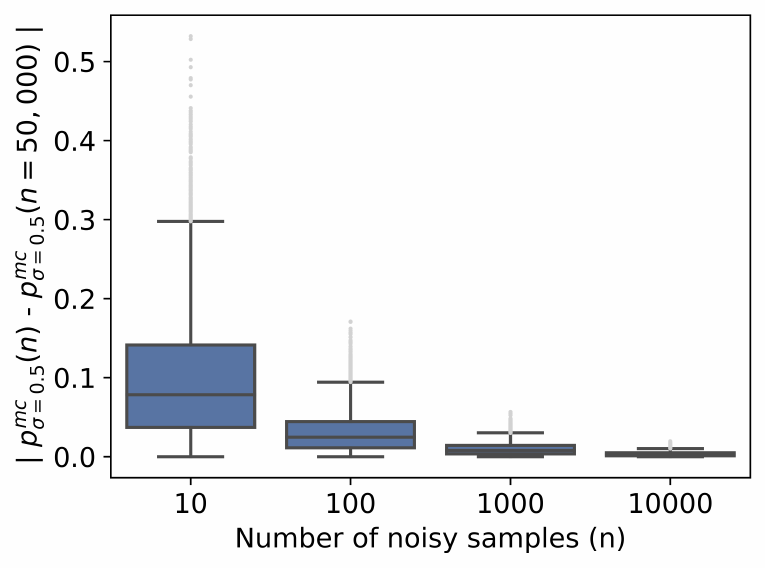}
    \end{subfigure}
    \begin{subfigure}{0.23\textwidth}
        \includegraphics[width=\linewidth]{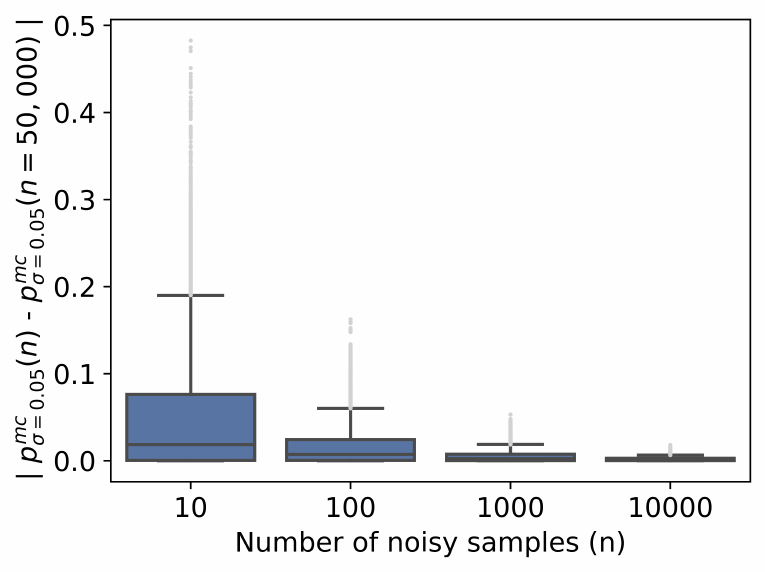}
    \end{subfigure}
    \begin{subfigure}{0.23\textwidth}
        \includegraphics[width=\linewidth]{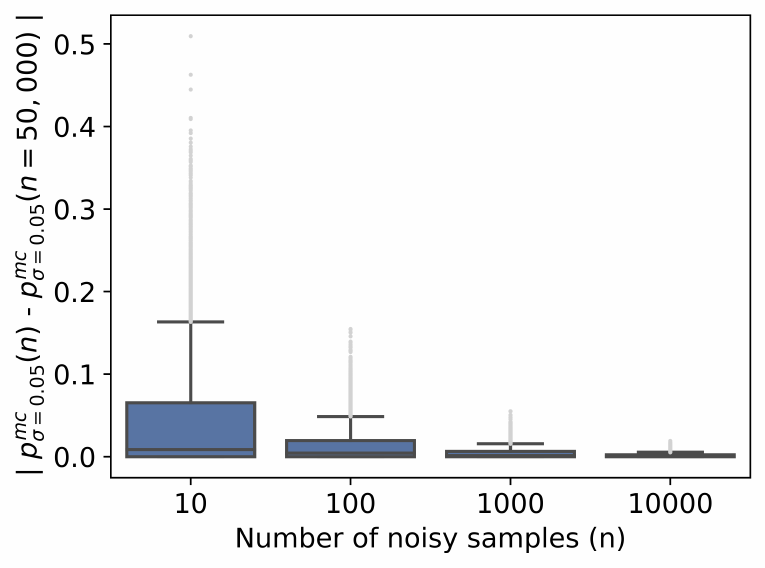}
    \end{subfigure}
    
    \begin{subfigure}{0.23\textwidth}
        \includegraphics[width=\linewidth]{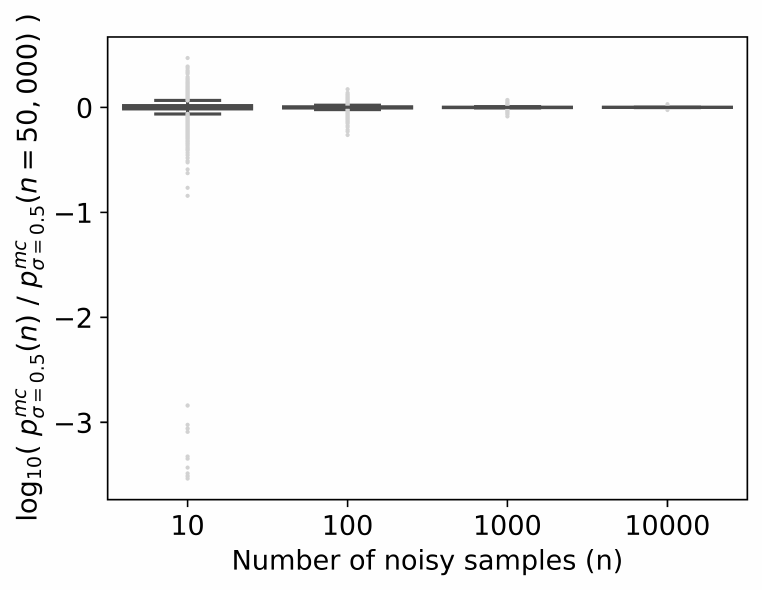}
    \end{subfigure}
    \begin{subfigure}{0.23\textwidth}
        \includegraphics[width=\linewidth]{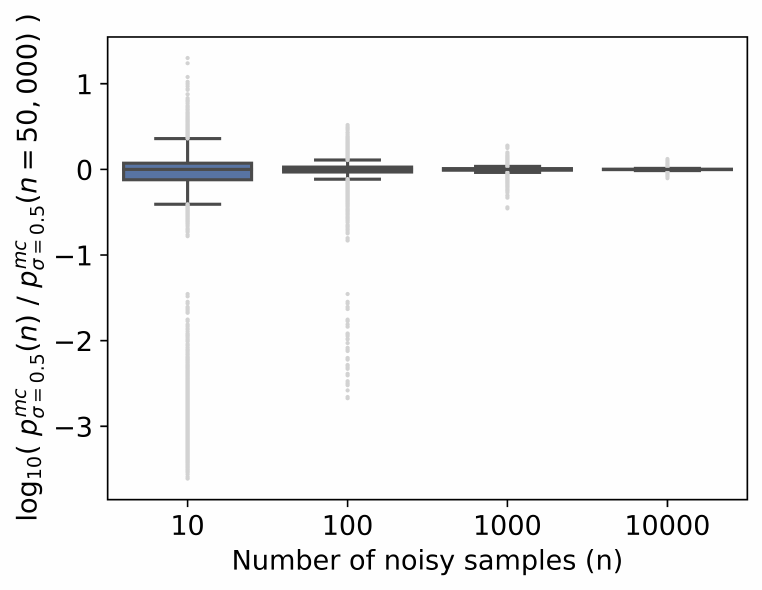}
    \end{subfigure}
    \begin{subfigure}{0.23\textwidth}
        \includegraphics[width=\linewidth]{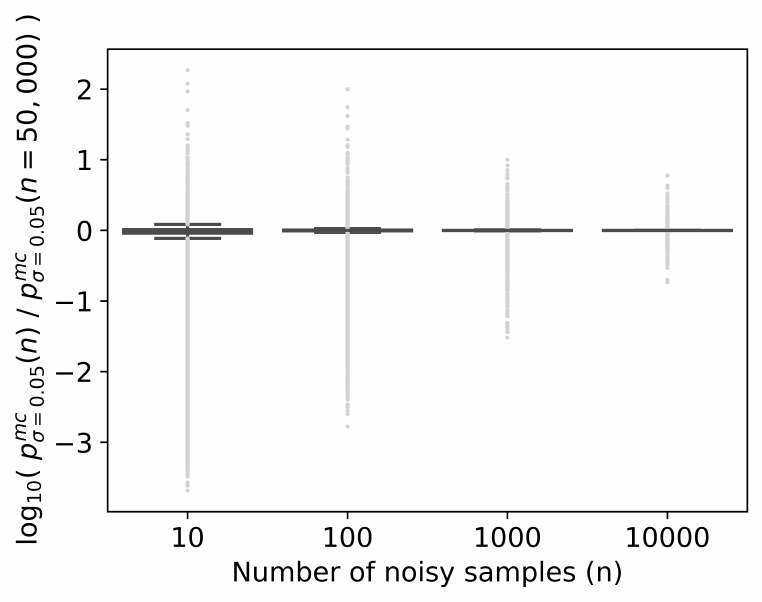}
    \end{subfigure}
    \begin{subfigure}{0.23\textwidth}
        \includegraphics[width=\linewidth]{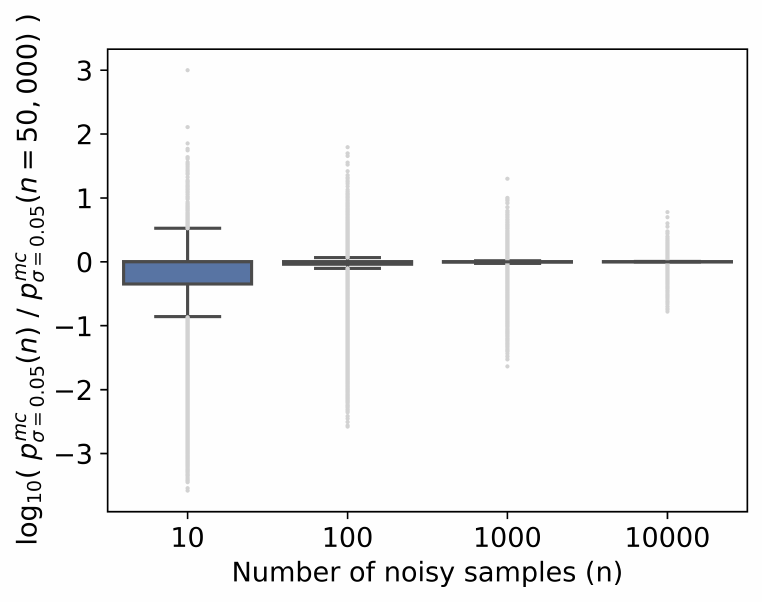}
    \end{subfigure}
    \caption{Convergence of \pmc{}. In practice, \pmc{} takes around $n=10,000$ samples to converge and is computationally inefficient.}\label{app:convergence}
\end{figure*}

\begin{figure*}[htbp!]
    \centering
    \begin{flushleft}
        \hspace{-0.1cm}\rotatebox{90}{\hspace{-5.8cm}Relative error \hspace{1.1cm}Absolute error}
        \hspace{1.4cm} MNIST CNN
        \hspace{1.9cm} FMNIST CNN
        \hspace{1.2cm} CIFAR10 ResNet18
        \hspace{0.9cm} CIFAR100 ResNet18
    \end{flushleft}
         
    \begin{subfigure}{0.23\textwidth}
        \includegraphics[width=\linewidth]{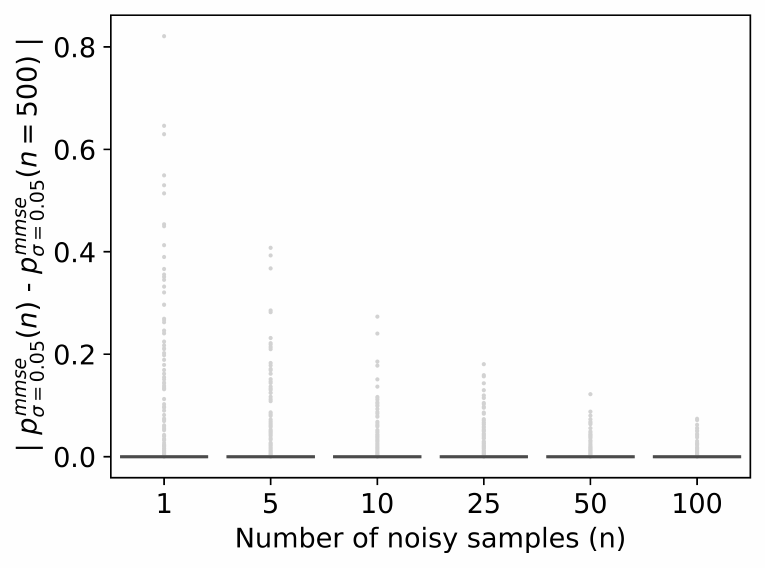}
    \end{subfigure}
    \begin{subfigure}{0.23\textwidth}
        \includegraphics[width=\linewidth]{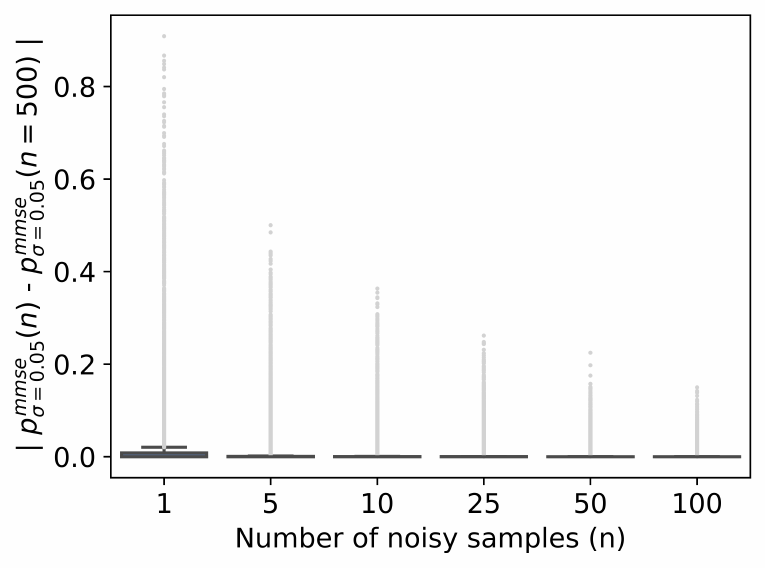}
    \end{subfigure}
    \begin{subfigure}{0.23\textwidth}
        \includegraphics[width=\linewidth]{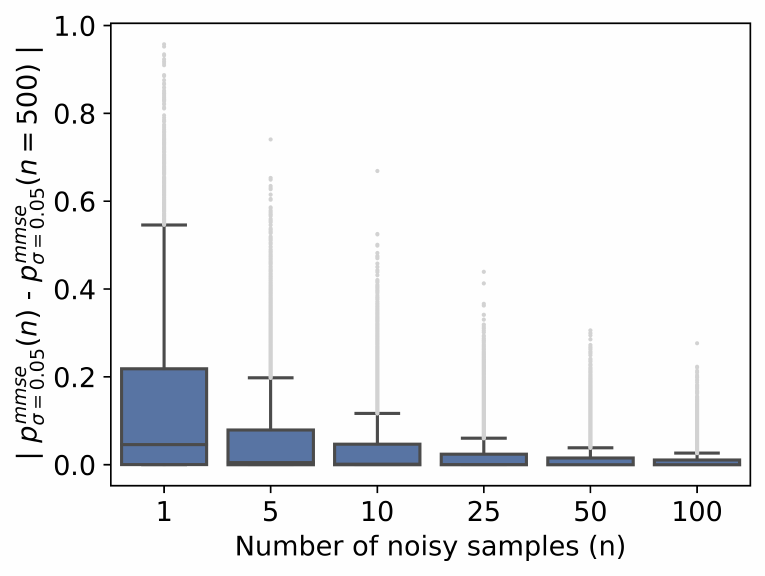}
    \end{subfigure}
    \begin{subfigure}{0.23\textwidth}
        \includegraphics[width=\linewidth]{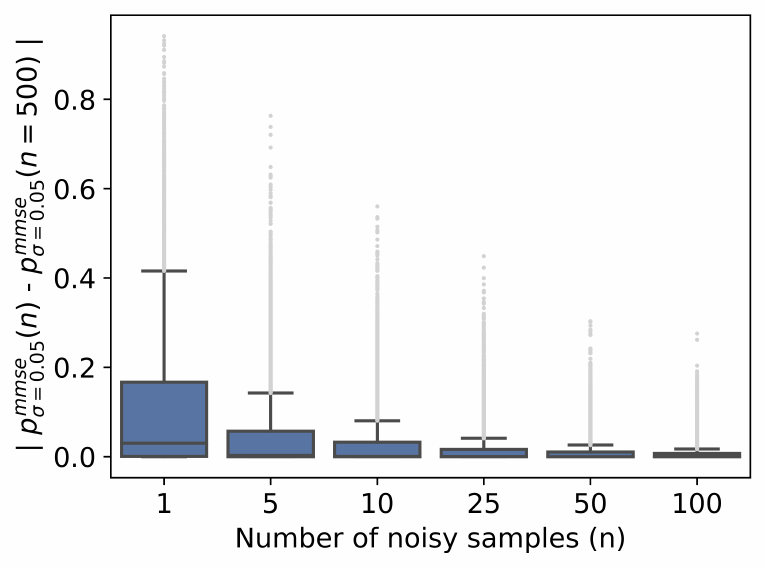}
    \end{subfigure}
    
    \begin{subfigure}{0.23\textwidth}
        \includegraphics[width=\linewidth]{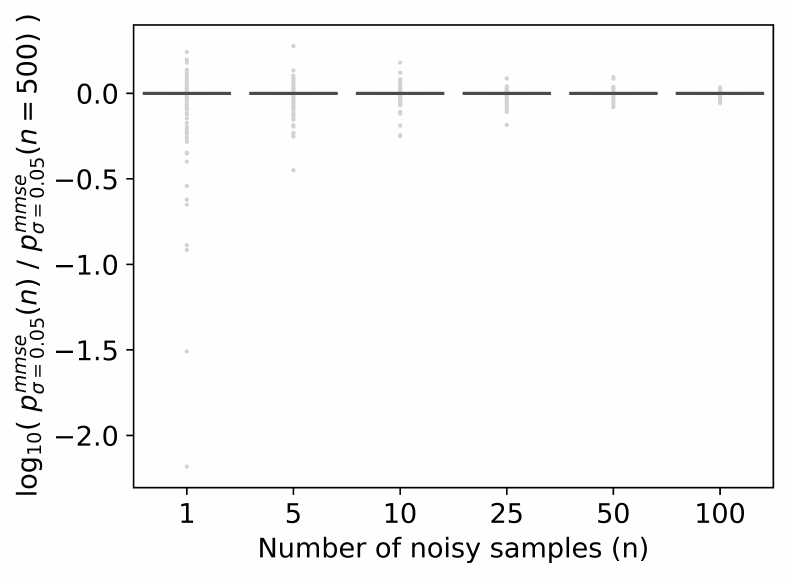}
    \end{subfigure}
    \begin{subfigure}{0.23\textwidth}
        \includegraphics[width=\linewidth]{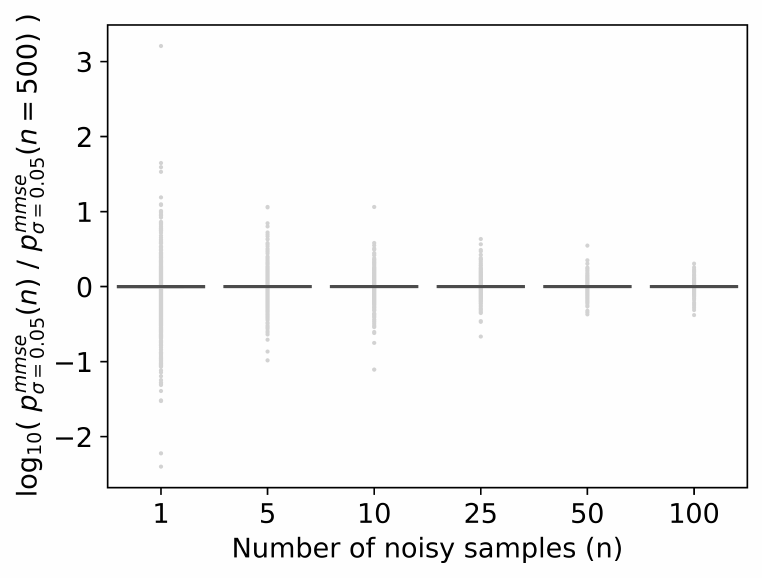}
    \end{subfigure}
    \begin{subfigure}{0.23\textwidth}
        \includegraphics[width=\linewidth]{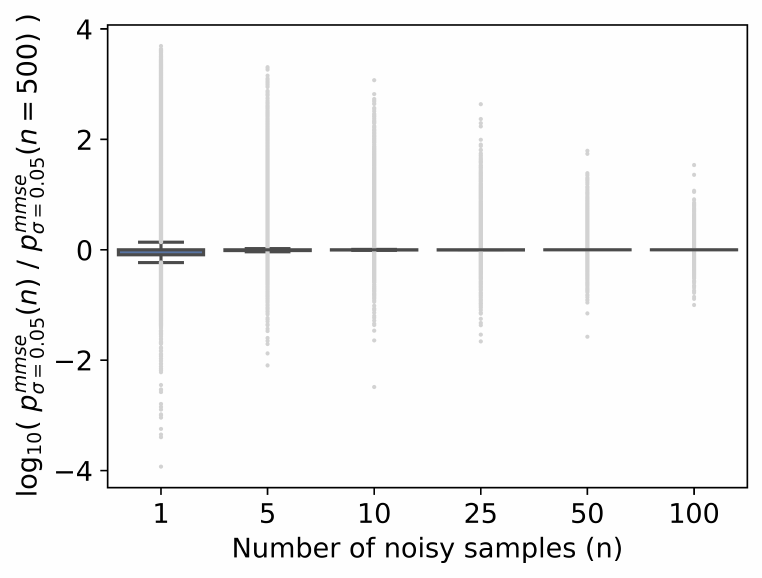}
    \end{subfigure}
    \begin{subfigure}{0.23\textwidth}
        \includegraphics[width=\linewidth]{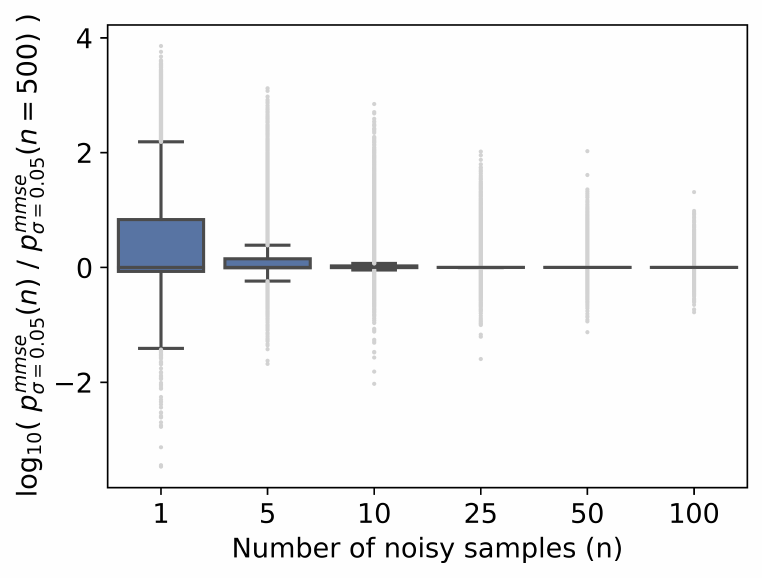}
    \end{subfigure}
    \caption{Convergence of \pmmse{}. In practice, \pmmse{} takes around $n=5-10$ samples to converge and is more computationally efficient than \pmc{}.} \label{app:convergence_mmse}
\end{figure*}



\begin{figure*}[htbp!]
    \centering
    \begin{subfigure}{0.3\textwidth}
        \includegraphics[width=\linewidth]{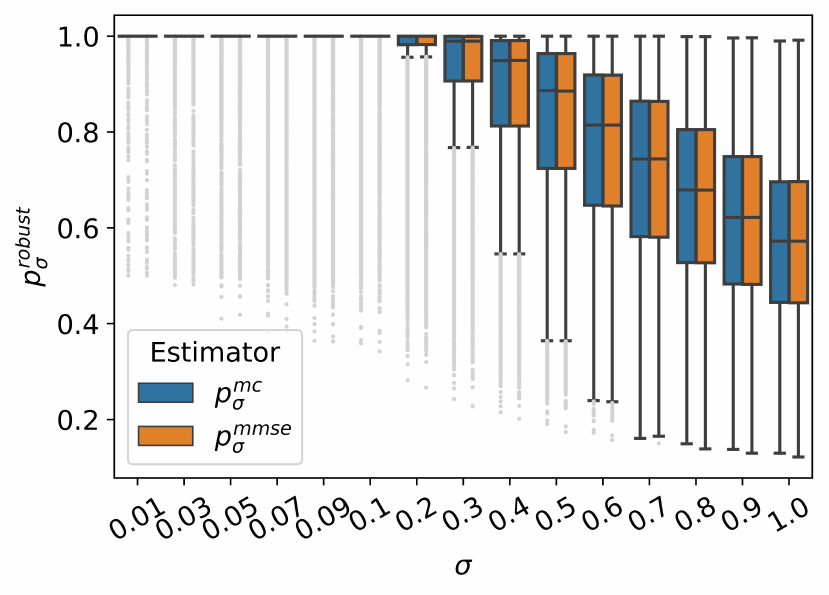}
        \caption{MNIST, Linear}
    \end{subfigure}
    \begin{subfigure}{0.3\textwidth}
        \includegraphics[width=\linewidth]{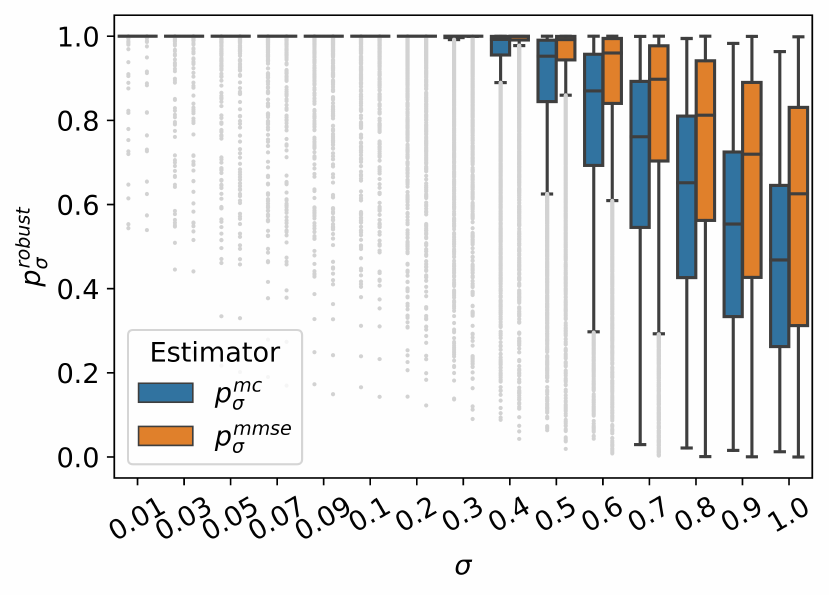}
        \caption{MNIST, CNN}
    \end{subfigure}
    \begin{subfigure}{0.3\textwidth}
        \includegraphics[width=\linewidth]{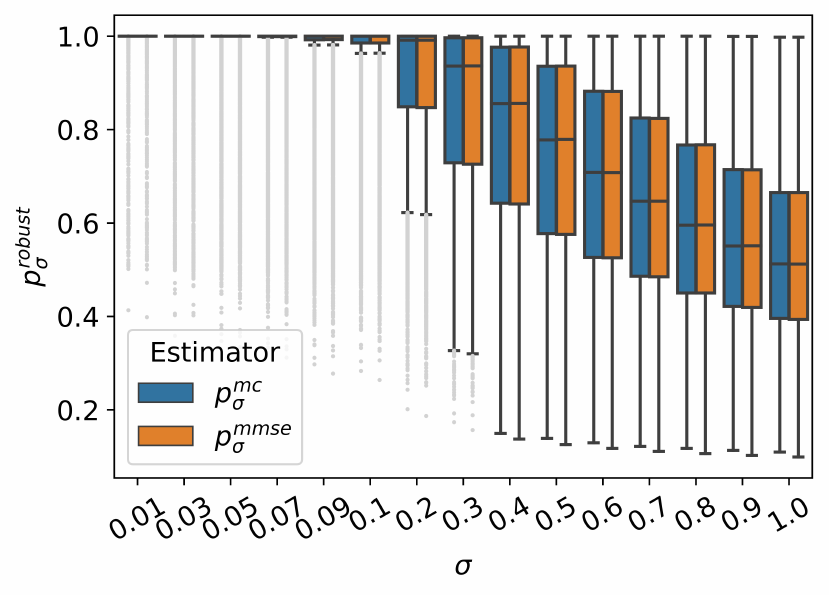}
        \caption{FMNIST, Linear}
    \end{subfigure}
    
    \begin{subfigure}{0.3\textwidth}
        \includegraphics[width=\linewidth]{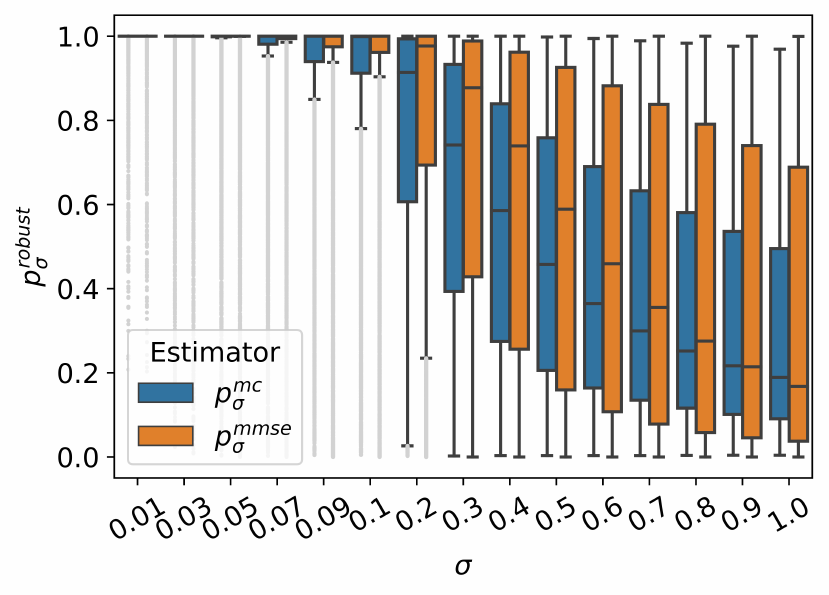}
        \caption{FMNIST, CNN}
    \end{subfigure}
    \begin{subfigure}{0.3\textwidth}
        \includegraphics[width=\linewidth]{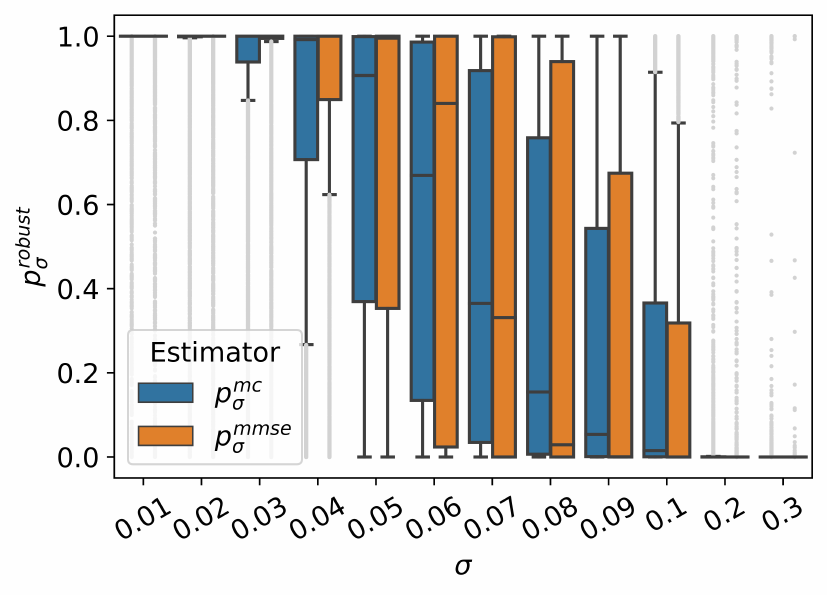}
        \caption{CIFAR10, ResNet18}
    \end{subfigure}
    \begin{subfigure}{0.3\textwidth}
        \includegraphics[width=\linewidth]{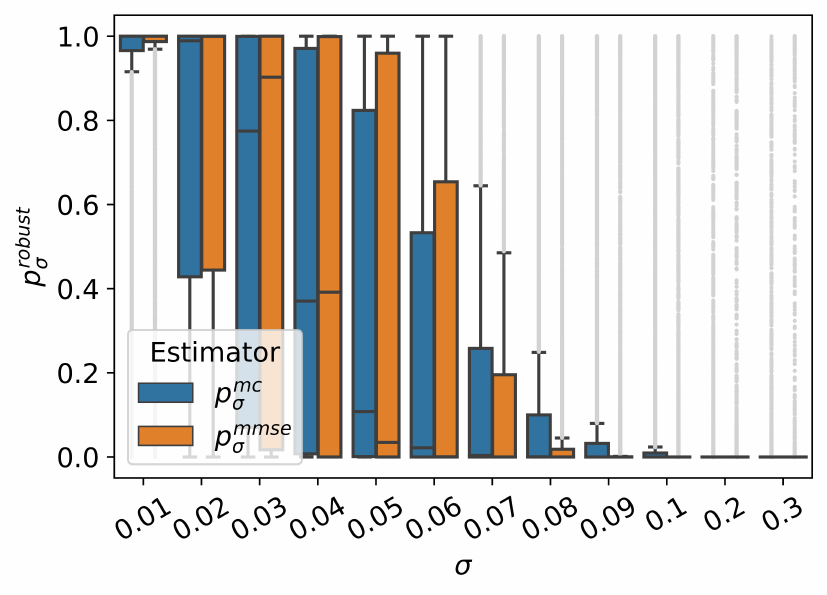}
        \caption{CIFAR100, ResNet18}
    \end{subfigure}

    \begin{subfigure}{0.3\textwidth}
        \includegraphics[width=\linewidth]{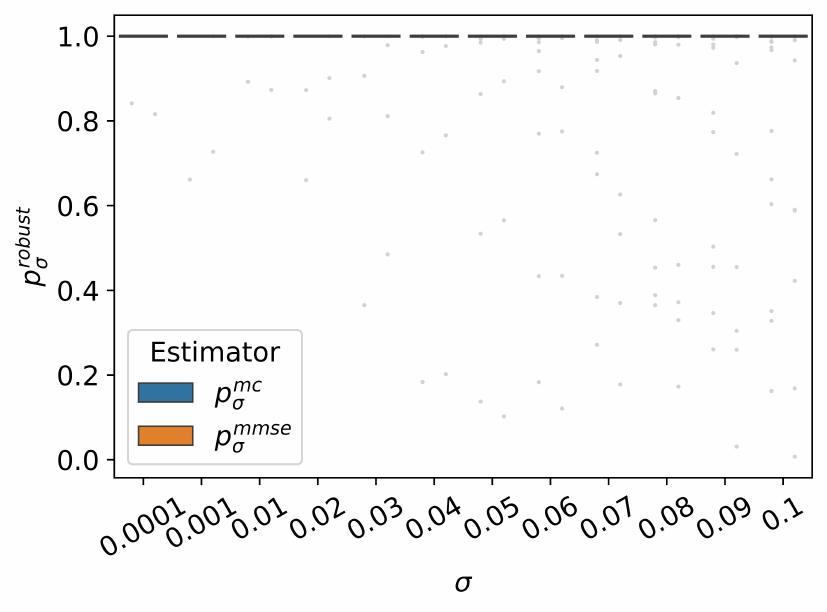}
        \caption{CIFAR10, Vision Transformer}
    \end{subfigure}
    \begin{subfigure}{0.3\textwidth}
        \includegraphics[width=\linewidth]{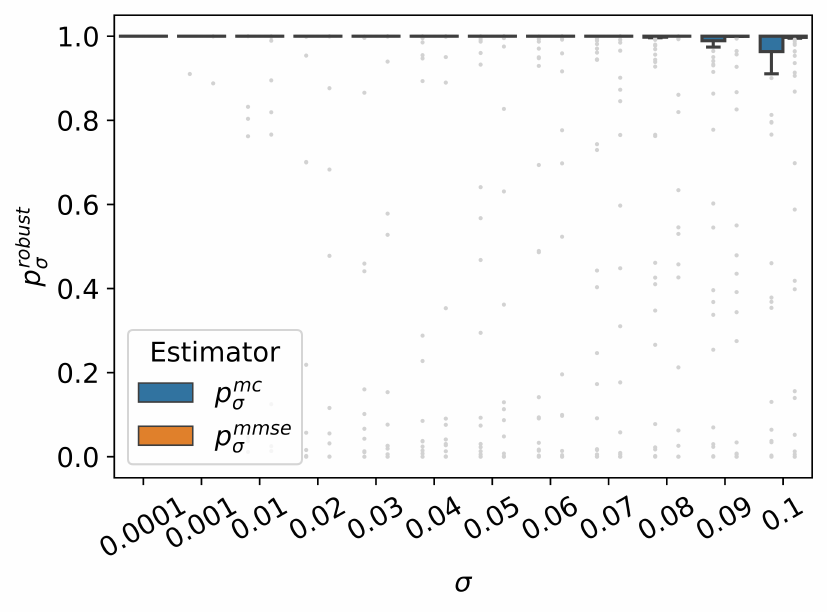}
        \caption{CIFAR100, Vision Transformer}
    \end{subfigure}
    \caption{Distribution of \probust{} over $\sigma$. As noise increases, \probust{} decreases. In addition, \pmmse{} accurately estimates \pmc{}.}\label{app:distribution_probust}
\end{figure*}

\begin{figure*}[htbp!]
    \centering
    \begin{subfigure}{0.45\textwidth}
        \includegraphics[width=\linewidth]{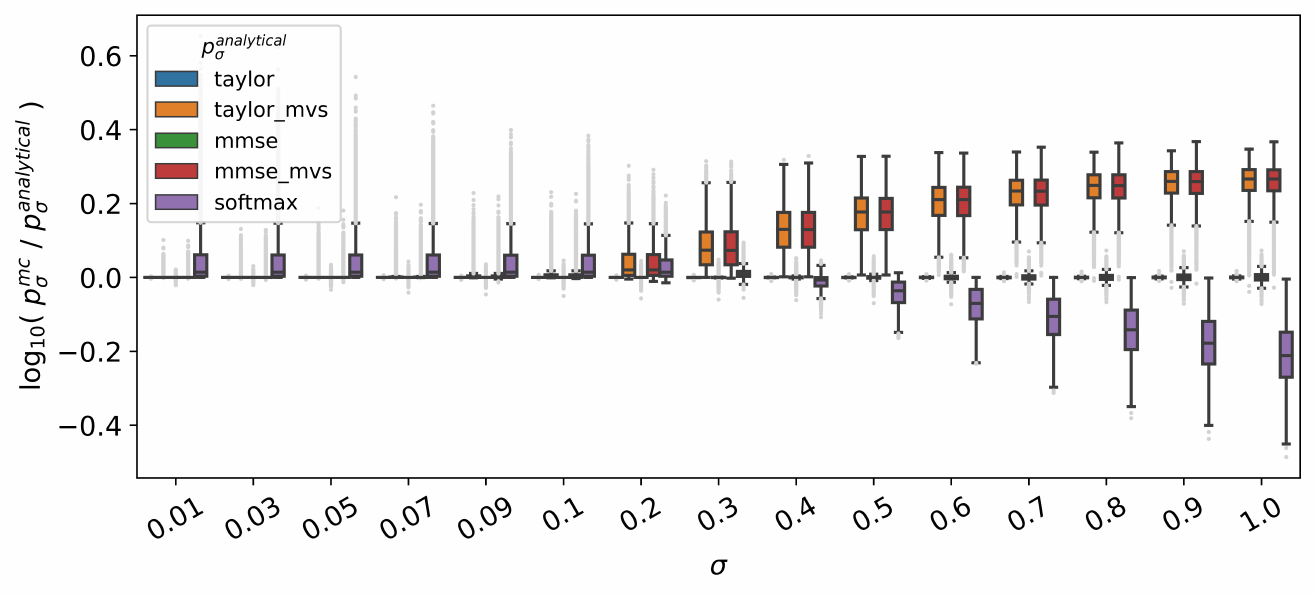}
        \caption{MNIST, Linear}
    \end{subfigure}
    \begin{subfigure}{0.45\textwidth}
        \includegraphics[width=\linewidth]{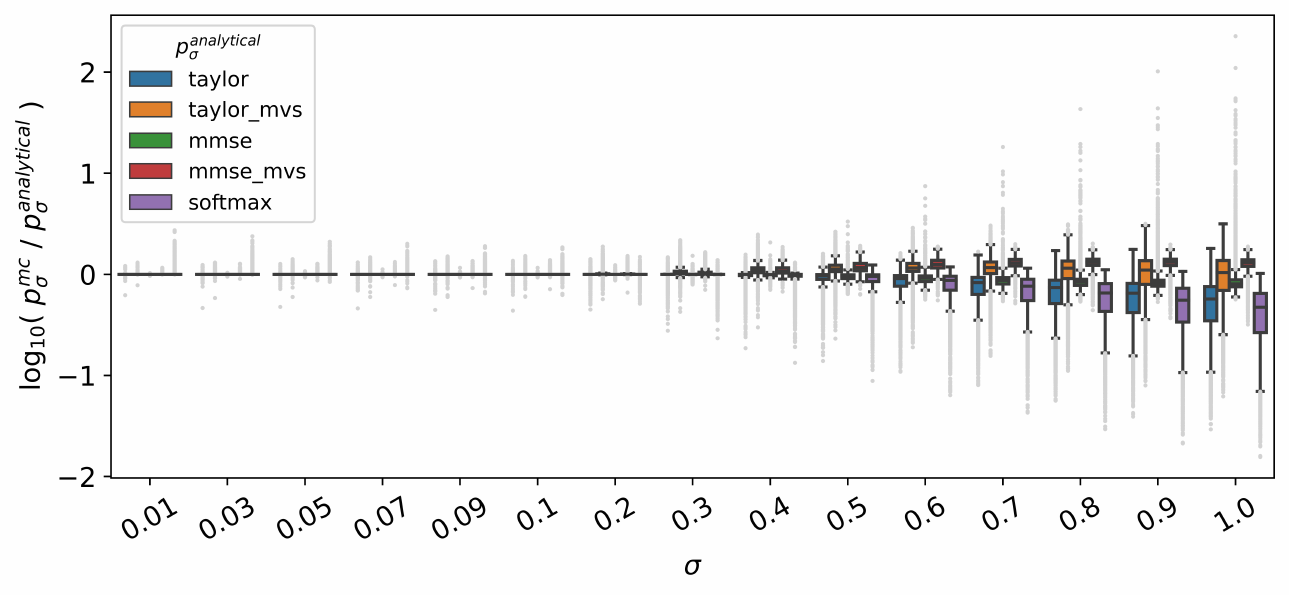}
        \caption{MNIST, CNN}
    \end{subfigure}

    \begin{subfigure}{0.45\textwidth}
        \includegraphics[width=\linewidth]{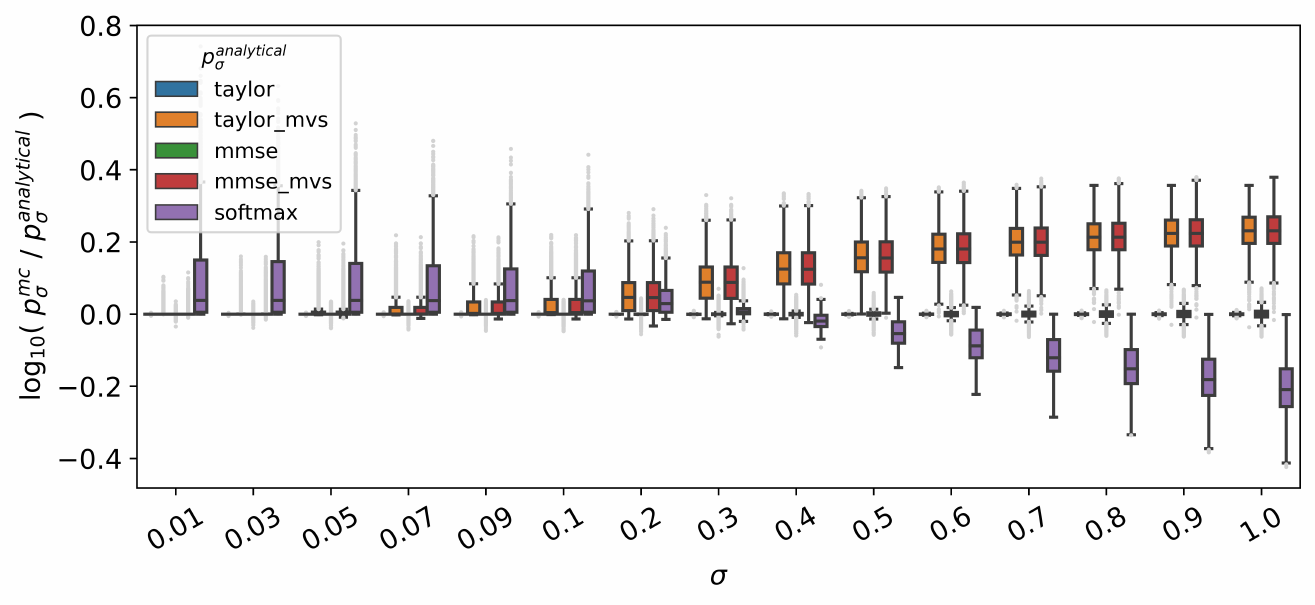}
        \caption{FMNIST, Linear}
    \end{subfigure}
    \begin{subfigure}{0.45\textwidth}
        \includegraphics[width=\linewidth]{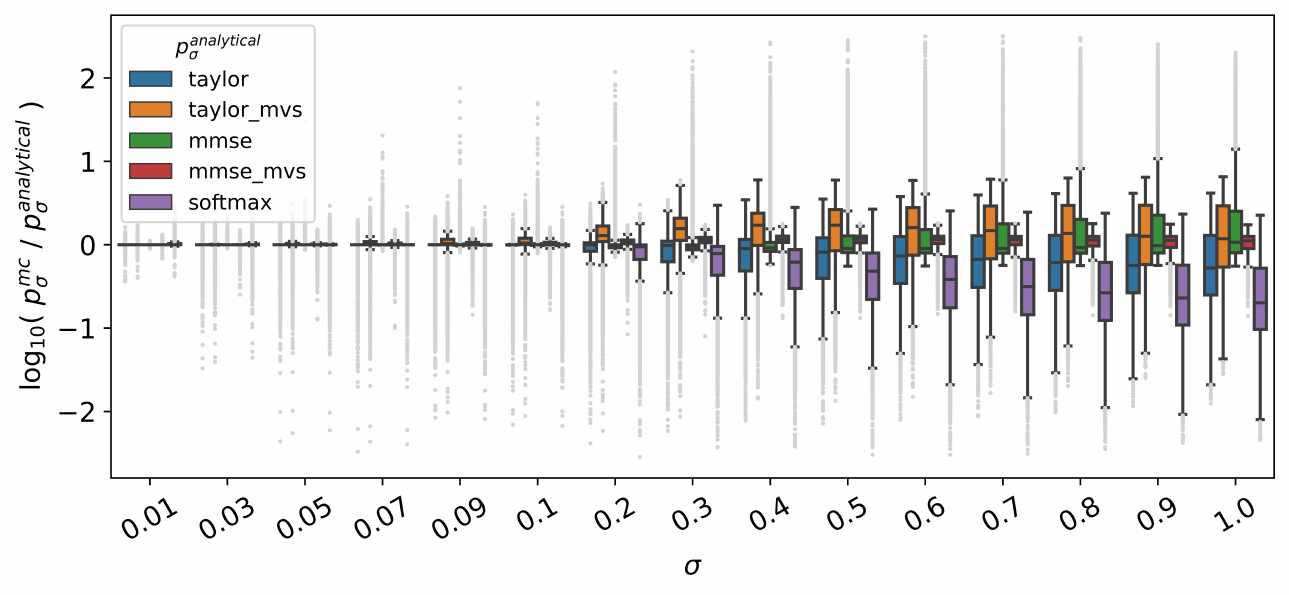}
        \caption{FMNIST, CNN}
    \end{subfigure}
    
    \begin{subfigure}{0.45\textwidth}
        \includegraphics[width=\linewidth]{figures/appendix/d_accuracy_of_estimators/cifar10_resnet18.pdf}
        \caption{CIFAR10, ResNet18}
    \end{subfigure}
    \begin{subfigure}{0.45\textwidth}
        \includegraphics[width=\linewidth]{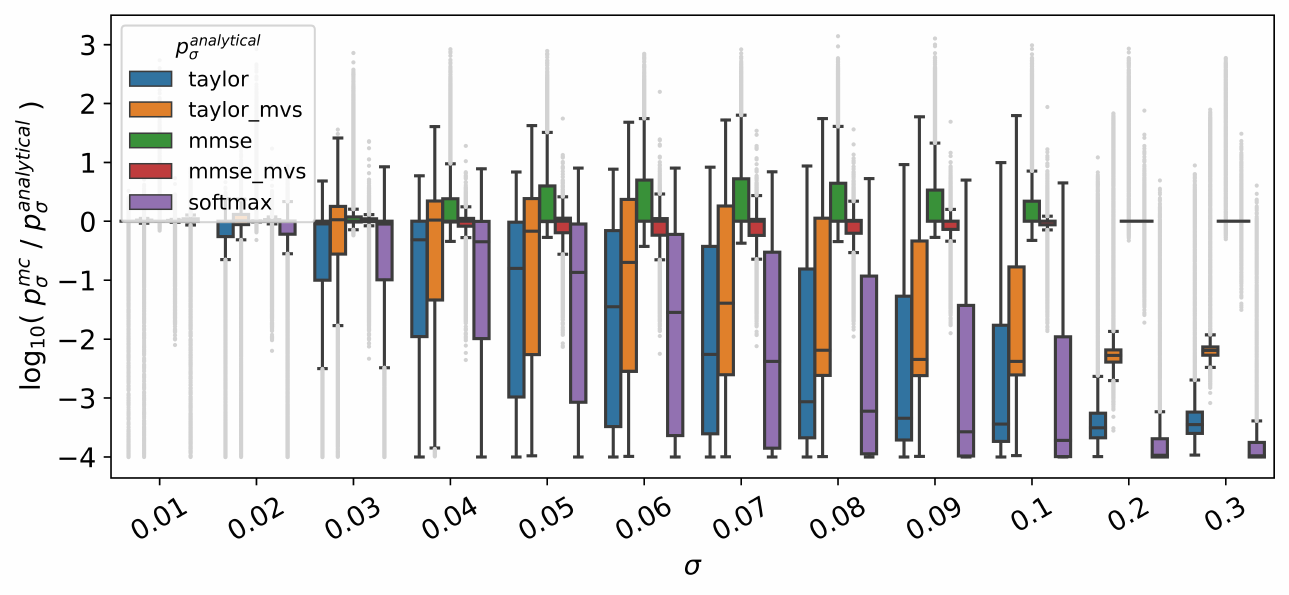}
        \caption{CIFAR100, ResNet18}
    \end{subfigure}

    \begin{subfigure}{0.45\textwidth}
        \includegraphics[width=\linewidth]{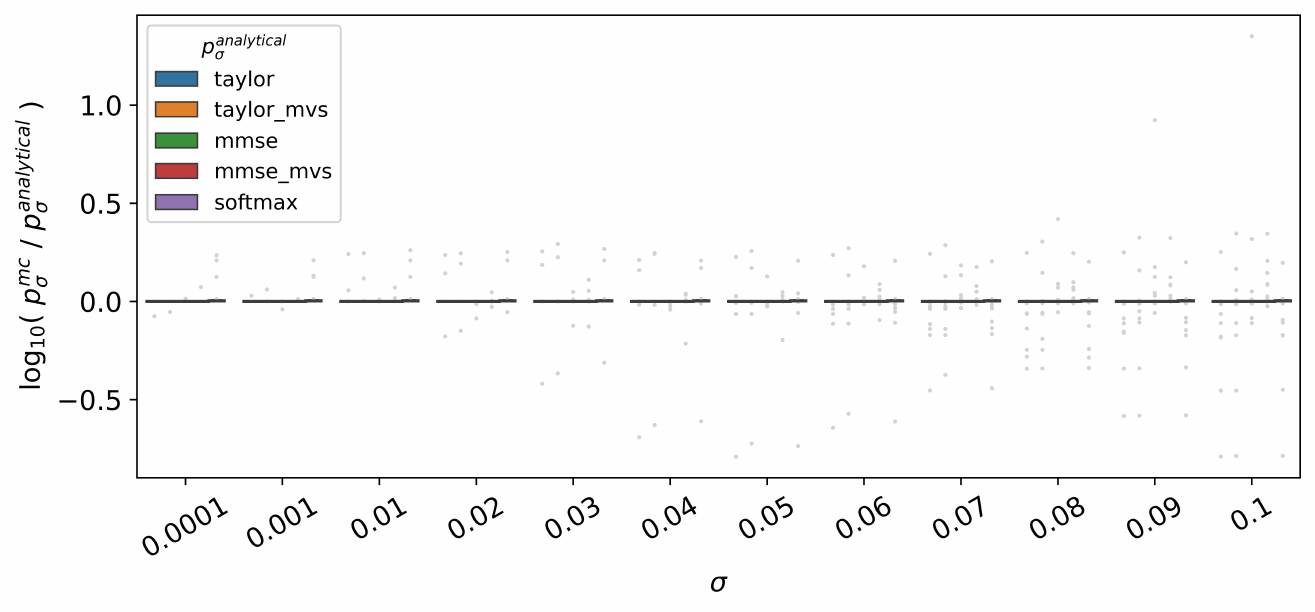}
        \caption{CIFAR10, Vision Transformer}
    \end{subfigure}
    \begin{subfigure}{0.45\textwidth}
        \includegraphics[width=\linewidth]{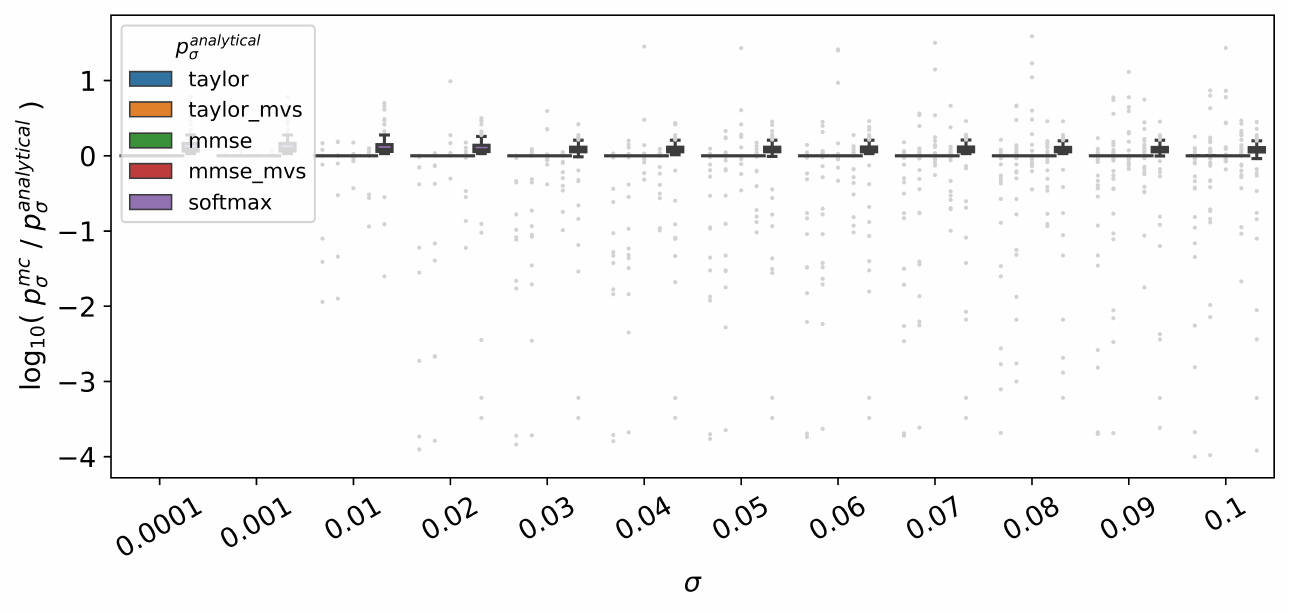}
        \caption{CIFAR100, Vision Transformer}
    \end{subfigure}
    \caption{Accuracy of \probust{} estimators over $\sigma$. The smaller the noise neighborhood $\sigma$, the more accurately the estimators compute \probust{}. \pmmse{} and \pmmsemvs{} are the best estimators of \probust{}, followed closely by \ptaylormvs{} and \ptaylor{}, trailed by \psoftmax{}.} \label{app:accuracy_probust}
\end{figure*}

\begin{figure*}[h]
    \centering
    \begin{subfigure}{0.3\textwidth}
        \includegraphics[width=\linewidth]{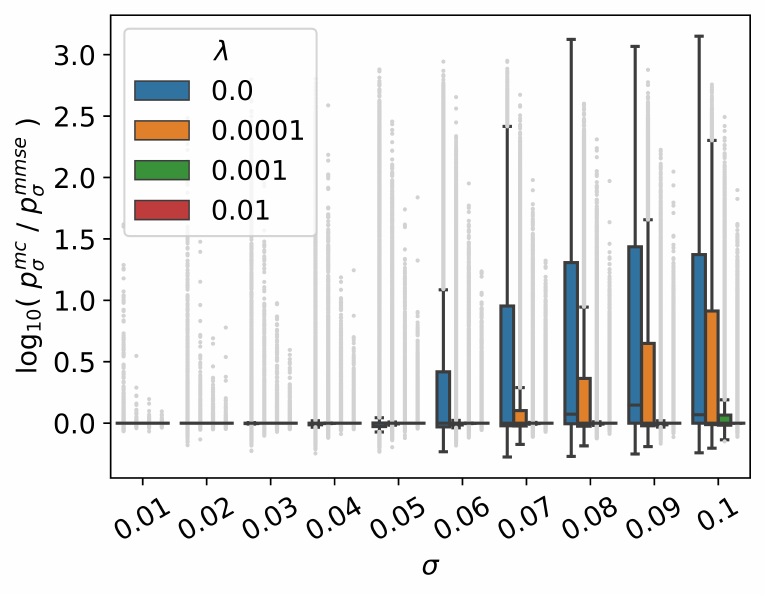}
        \caption{CIFAR10, ResNet18}
    \end{subfigure}
    \begin{subfigure}{0.3\textwidth}
        \includegraphics[width=\linewidth]{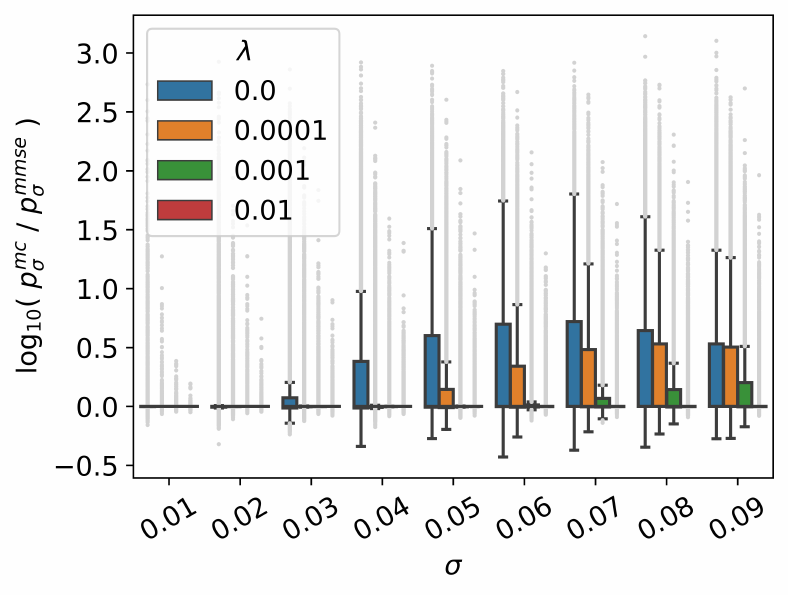}
        \caption{CIFAR100, ResNet18}
    \end{subfigure}
    \caption{Accuracy of \probust{} estimators over $\sigma$ for robust models. For more robust models, the estimators compute \probust{} more accurately over a larger $\sigma$.} \label{app:accuracy_robust}
\end{figure*}

\begin{figure}
  \centering
  \includegraphics[width=0.4\linewidth]{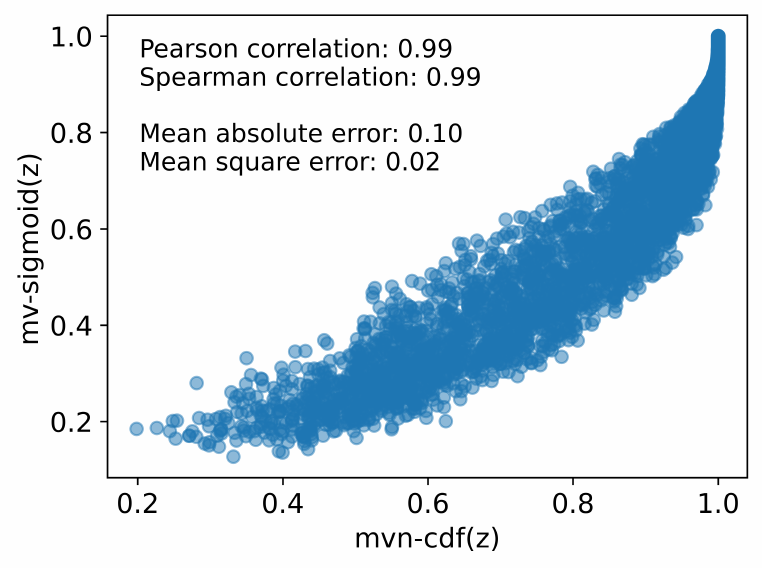}
  \caption{Correlation of \emph{mvn-cdf(z)} and \emph{mv-sigmoid(z)} for the CIFAR10 ResNet18 model. The formulation of $z$ is described in Section~\ref{sec:exp_correctness}. In practice, \emph{mv-sigmoid} approximates \emph{mvn-cdf} well.}
  \label{fig2:mvsig-mvncdf}
\end{figure}

\begin{figure*}[h]
    \centering
    \includegraphics[width=0.4\linewidth]{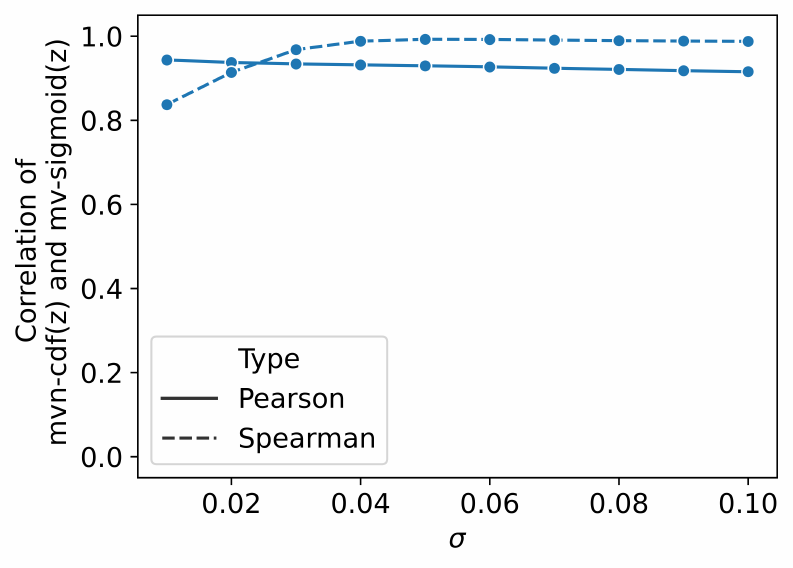}
    \caption{mv-sigmoid's approximation of mvn-cdf over $\sigma$. mv-sigmoid well-approximates mvn-cdf over $\sigma$.} \label{app:mvsigmoid}
\end{figure*}

\begin{figure*}[ht!]
    \centerline{
    \begin{subfigure}[h]{0.32\textwidth}
        \centering
        \includegraphics[width=\textwidth]{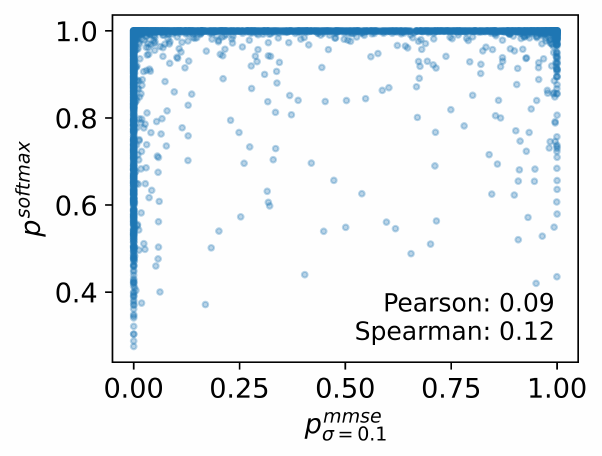}
        \captionsetup{justification=centering}
        \caption{CIFAR10 \\ Non-robust model ($\lambda=0$)}
        \label{fig4a:ps-nonrob-model}
    \end{subfigure}
    \begin{subfigure}[ht]{0.32\textwidth}
        \centering
        \includegraphics[width=\textwidth]{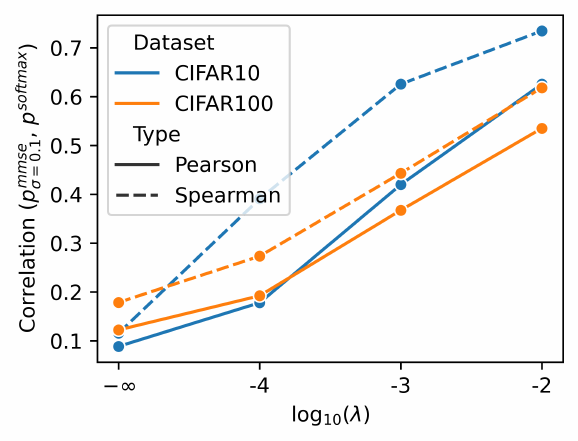}
        \captionsetup{justification=centering}
        \caption{CIFAR10 and CIFAR100 \\ Varying model robustness}
        \label{fig4b:ps-rob-models-lineplot}
    \end{subfigure}
    \begin{subfigure}[h]{0.32\textwidth}
        \centering
        \includegraphics[width=\textwidth]{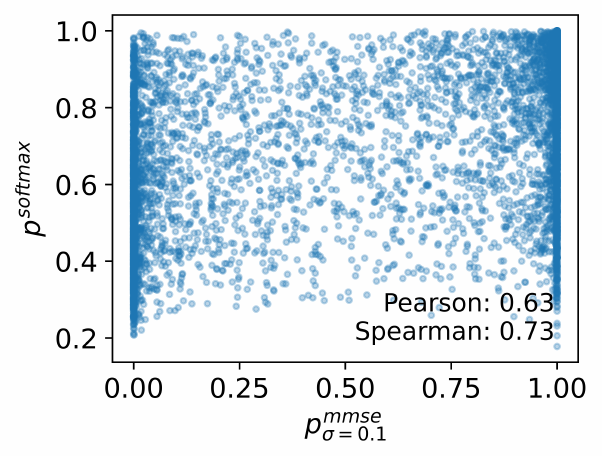}
        \captionsetup{justification=centering}
        \caption{CIFAR10 \\ Robust model ($\lambda=0.01$)}
        \label{fig4c:ps-rob-model}
    \end{subfigure}
    }
    \caption{Relationship between \probust{} and \psoftmax{} for CIFAR10 and CIFAR100 ResNet18 models. (a) For a non-robust model, \probust{} and \psoftmax{} are not strongly correlated. (b) As model robustness increases, the two quantities become more correlated. (c) However, even for robust models, the relationship between \probust{} and \psoftmax{} is mild. Together, these results indicate that, consistent with the theory in Section~\ref{sec:methods}, \psoftmax{} is not a good estimator for \probust{} in general settings.}
    \label{fig4:probust-and-psoftmax}
\end{figure*}

\begin{figure*}[h]
    \centering
         
    \begin{subfigure}{0.3\textwidth}
        \includegraphics[width=\linewidth]{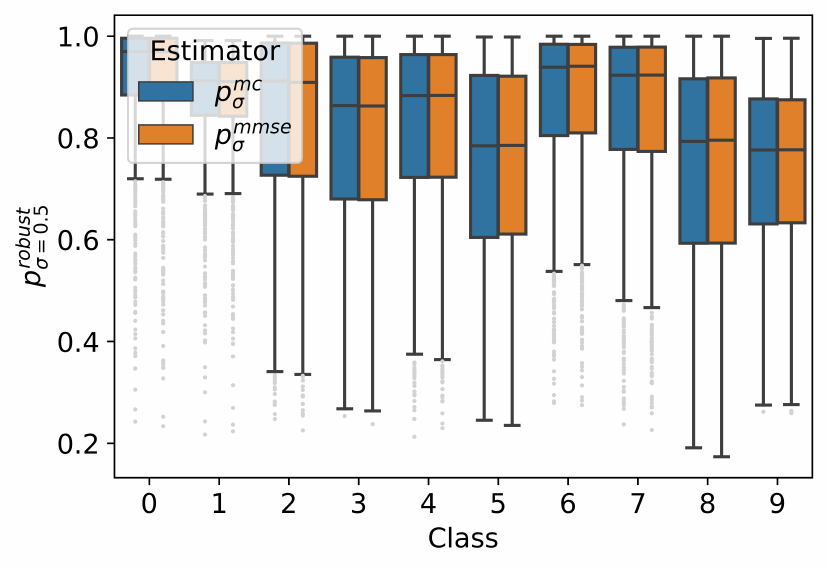}
        \caption{MNIST, Linear}
    \end{subfigure}
    \begin{subfigure}{0.3\textwidth}
        \includegraphics[width=\linewidth]{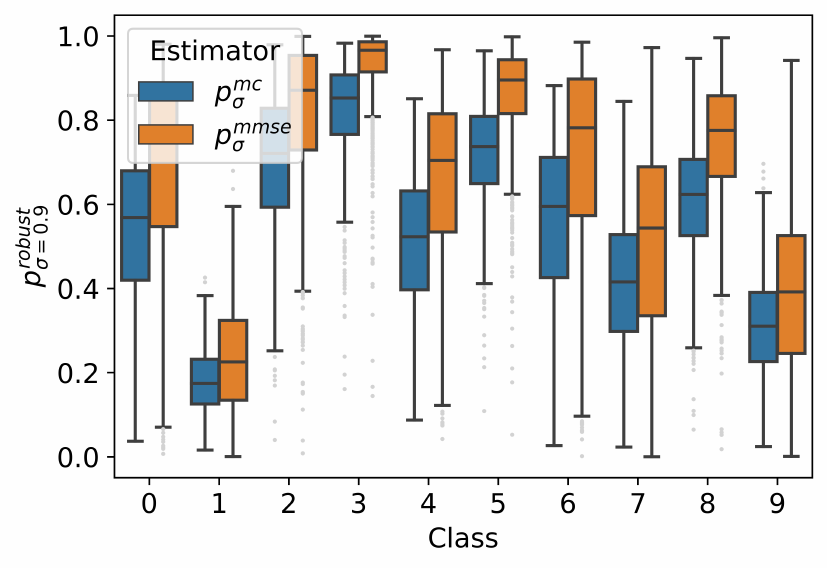}
        \caption{MNIST, CNN}
    \end{subfigure}
    \begin{subfigure}{0.3\textwidth}
        \includegraphics[width=\linewidth]{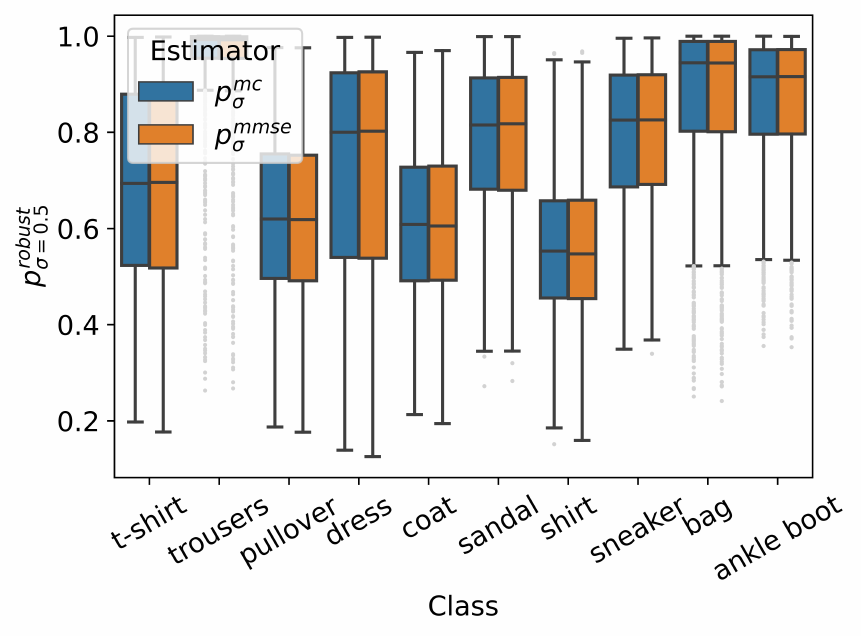}
        \caption{FMNIST, Linear}
    \end{subfigure}
    
    \begin{subfigure}{0.3\textwidth}
        \includegraphics[width=\linewidth]{figures/appendix/g_robustness_bias/fmnist_cnn_sigma0.9.pdf}
        \caption{FMNIST, CNN}
    \end{subfigure}
    \begin{subfigure}{0.3\textwidth}
        \includegraphics[width=\linewidth]{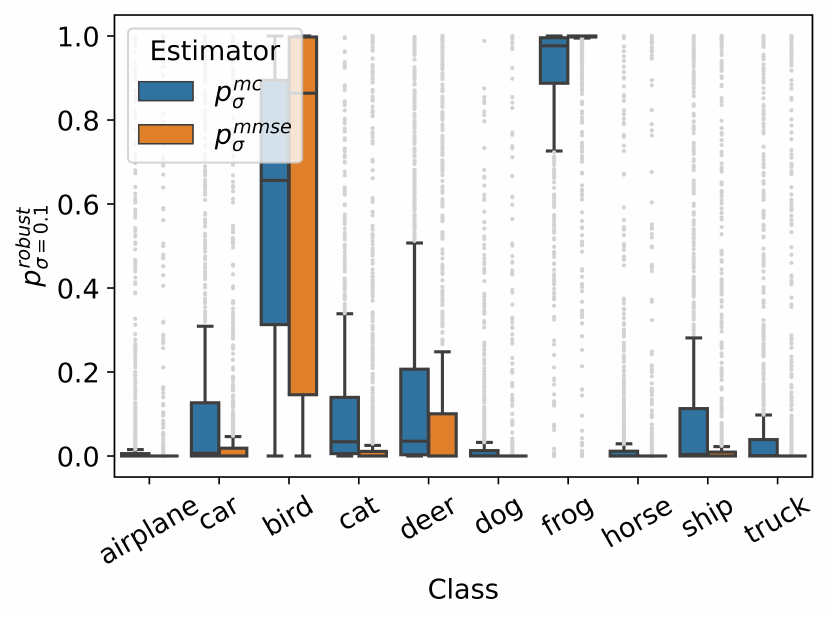}
        \caption{CIFAR10, ResNet18}
    \end{subfigure}
    \begin{subfigure}{0.3\textwidth}
        \includegraphics[width=\linewidth]{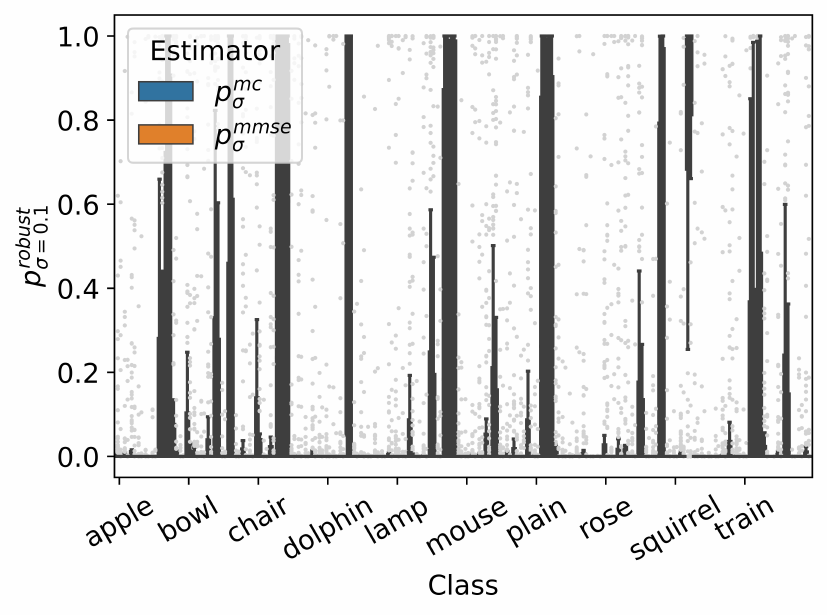}
        \caption{CIFAR100, ResNet18}
    \end{subfigure}
    \caption{Local robustness bias among classes. \probust{} reveals that the model is less locally robust for some classes than for others. The analytical estimator \pmmse{} properly captures this model bias.} \label{app:robustness_bias}
\end{figure*}

\begin{table*}[h]
\centering
\begin{tabular}{l|l|l|l|l|l}
    \multicolumn{2}{c}{}   & \multicolumn{2}{|c|}{CPU: Intel x86\_64}   & \multicolumn{2}{|c}{GPU: Tesla V100-PCIE-32GB} \\
    \midrule
    Estimator   & \# samples ($n$)   & Serial   & Batched   & Serial   & Batched \\
    \midrule
    \pmc{}   & \begin{tabular}[c]{@{}l@{}}$n=100$\\ $n=1000$\\ $n=10000$\end{tabular}               
             & \begin{tabular}[c]{@{}l@{}}0:00:59\\ 0:09:50\\ \textit{1:41:11}\end{tabular}                                               
             & \begin{tabular}[c]{@{}l@{}}0:00:42\\ 0:07:22\\ \textit{1:14:38}\end{tabular}                                                
             & \begin{tabular}[c]{@{}l@{}}0:00:12\\ 0:02:00\\ \textit{0:19:56}\end{tabular}                                                
             & \begin{tabular}[c]{@{}l@{}}0:00:01\\ 0:00:04\\ \textit{0:00:35}\end{tabular} \\
    \midrule
    \ptaylor{}   & N/A
                 & 0:00:08                                                                                                                      
                 & 0:00:07                                                                                                                      
                 & 0:00:02                                                                                                                      
                 & $<$ 0:00:01 \\
    \midrule
    \ptaylormvs{}   & N/A
                    & 0:00:08                                                                                                                   
                    & 0:00:07                                                                                                                  
                    & 0:00:01                                                                                                                   
                    & $<$ 0:00:01 \\
    \midrule
    \pmmse{}   & \begin{tabular}[c]{@{}l@{}}$n=1$\\ $n=5$\\ $n=10$\\ $n=25$\\ $n=50$\\ $n=100$\end{tabular} 
               & \begin{tabular}[c]{@{}l@{}}0:00:08\\ \textit{0:00:41}\\ 0:01:21\\ 0:03:21\\ 0:06:47\\ 0:13:57\end{tabular} 
               & \begin{tabular}[c]{@{}l@{}}0:00:10\\ \textit{0:00:31}\\ 0:01:02\\ 0:02:44\\ 0:05:38\\ 0:11:31\end{tabular} 
               & \begin{tabular}[c]{@{}l@{}}0:00:02\\ \textit{0:00:06}\\ 0:00:11\\ 0:00:26\\ 0:00:51\\ 0:01:42\end{tabular} 
               & \begin{tabular}[c]{@{}l@{}}0:00:02\\ \textit{0:00:02}\\ 0:00:02\\ 0:00:03\\ 0:00:04\\ 0:00:06\end{tabular} \\
    \midrule
    \pmmsemvs{}   & \begin{tabular}[c]{@{}l@{}}$n=1$\\ $n=5$\\ $n=10$\\ $n=25$\\ $n=50$\\ $n=100$\end{tabular} 
                  & \begin{tabular}[c]{@{}l@{}}0:00:08\\ \textit{0:00:41}\\ 0:01:21\\ 0:03:24\\ 0:06:47\\ 0:13:28\end{tabular} 
                  & \begin{tabular}[c]{@{}l@{}}0:00:08\\ \textit{0:00:32}\\ 0:01:00\\ 0:02:37\\ 0:05:35\\ 0:11:32\end{tabular} 
                  & \begin{tabular}[c]{@{}l@{}}0:00:01\\ \textit{0:00:05}\\ 0:00:10\\ 0:00:25\\ 0:00:51\\ 0:01:42\end{tabular} 
                  & \begin{tabular}[c]{@{}l@{}}0:00:01\\ \textit{0:00:01}\\ 0:00:02\\ 0:00:02\\ 0:00:03\\ 0:00:06\end{tabular} \\
    \midrule
    \psoftmax{}   & N/A                                                                             
                  & 0:00:01                                                                                                                              
                  & $<$ 0:00:01                                                                                                                              
                  & $<$ 0:00:01                                                                                                                              
                  & $<$ 0:00:01                                                                                                                             
\end{tabular}
\caption{Runtimes of each \probust{} estimator. Each estimator computes \probustwsigma{0.1} for the CIFAR10 ResNet18 model for 50 data points. For estimators that use sampling, the row with the minimum number of samples necessary for convergence is italicized. Runtimes are in the format of hour:minute:second. The analytical estimators (\ptaylor{}, \ptaylormvs{}, \pmmse{}, and \pmmsemvs{}) are more efficient than the naïve estimator (\pmc{}).} \label{app:runtimes}
\end{table*}

\subsubsection{\probust{} identifies images that are robust to and images
that are vulnerable to random noise}

For each dataset, we train a simple CNN to distinguish between images with high and low \pmmse{}. We train the same CNN to also distinguish between images with high and low \psoftmax{}. The CNN consists of two convolutional layers and two fully-connected feedforward layers with a total of 21,878 parameters. For a given dataset, for each class, we take the images with the top-25 and bottom-25 \pmmse{} values. This yields 500 images for CIFAR10 (10 classes x 50 images per class) and 5,000 images for CIFAR100 (100 classes x 50 images per class). We also perform the same steps using \psoftmax{}, yielding another 500 images for CIFAR10 and another 5,000 images for CIFAR100. For each dataset, the train/test split is 90\%/10\% of points. 

Then, we compare the performance of the two models. For CIFAR10, the test set accuracy for the \pmmse{} CNN is 0.92 while that for the \psoftmax{} CNN is 0.58. For CIFAR100, the test set accuracy for the \pmmse{} CNN is 0.74 while that for the \psoftmax{} CNN is 0.55. The higher the test set accuracy of a CNN, the better the CNN distinguishes between images. Thus, the results indicate that \probust{} better identifies images that are robust to and vulnerable to random noise than \psoftmax{}.

We also provide additional visualizations of images with the highest and lowest \probust{} and images with the highest and lowest \psoftmax{}.

\subsubsection{Softmax probability is not a good proxy for average-case robustness}

To examine the relationship between \probust{} and \psoftmax{}, we calculate \pmmse{} and \psoftmax{} for CIFAR10 and CIFAR100 models of varying levels of robustness, and measure the correlation of their values and ranks using Pearson and Spearman correlations. Results are in Appendix~\ref{app:experiments} (Figure~\ref{fig4:probust-and-psoftmax}). For a non-robust model, \probust{} and \psoftmax{} are not strongly correlated (Figure~\ref{fig4a:ps-nonrob-model}). As model robustness increases, the two quantities become more correlated (Figures~\ref{fig4b:ps-rob-models-lineplot} and~\ref{fig4c:ps-rob-model}). However, even for robust models, the relationship between the two quantities is mild (Figure~\ref{fig4c:ps-rob-model}). That \probust{} and \psoftmax{} are not strongly correlated is consistent with the theory in Section~\ref{sec:methods}: in general settings, \psoftmax{} is not a good estimator for \probust{}.

\newpage
While working on the paper, we hypothesized that \probust{} (e.g., \pmmse{}) might be correlated with model accuracy. However, we did not find this in practice. Instead, what we find is that \probust{} succeeds in identifying canonical data points of a class, and does so much better than \psoftmax{}. We first assess this finding through visual inspection, finding that images with higher \probust{} tend to be more canonical and clear images, and that this distinction is less apparent for \psoftmax{} (Figures~\ref{fig4:topk-vs-bottomk-main} and \ref{fig-supp:topk-vs-bottomk}). We then use a model to classify these images as an additional, more objective assessment of this pattern (as discussed in Section~\ref{subsec:case-studies}).

\begin{figure*}[htbp!]
    \vspace{1cm}
    \centering
    \begin{flushleft}
        \hspace{-0.1cm}\rotatebox{90}{\hspace{-6.5cm}Car \hspace{3cm}Boat}
        \hspace{1.3cm}Lowest \pmmsewsigma{0.1}
        \hspace{1.6cm} Highest \pmmsewsigma{0.1}
        \hspace{1.5cm} Lowest \psoftmax{}
        \hspace{1.5cm} Highest \psoftmax{}
    \end{flushleft}
         
    \begin{subfigure}{0.23\textwidth}
        \includegraphics[width=\linewidth, trim={0.2cm, 0.2cm, 0.2cm, 0.2cm}]{figures/appendix/h_topk_bottomk/cifar10_resnet18_p_mmse_sigma0.1_class8_bottomk.pdf}
    \end{subfigure}
    \begin{subfigure}{0.23\textwidth}
        \includegraphics[width=\linewidth, trim={0.2cm, 0.2cm, 0.2cm, 0.2cm}]{figures/appendix/h_topk_bottomk/cifar10_resnet18_p_mmse_sigma0.1_class8_topk.pdf}
    \end{subfigure}
    \begin{subfigure}{0.23\textwidth}
        \includegraphics[width=\linewidth, trim={0.2cm, 0.2cm, 0.2cm, 0.2cm}]{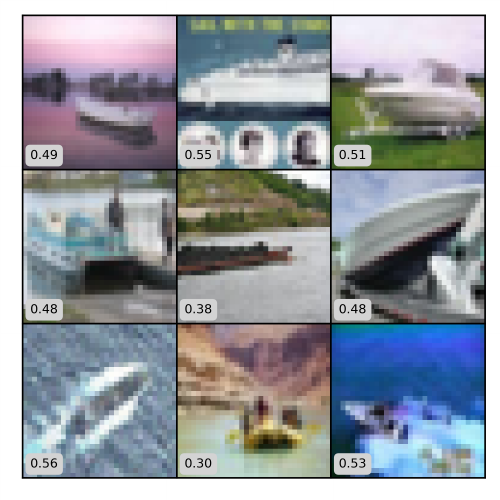}
    \end{subfigure}
    \begin{subfigure}{0.23\textwidth}
        \includegraphics[width=\linewidth, trim={0.2cm, 0.2cm, 0.2cm, 0.2cm}]{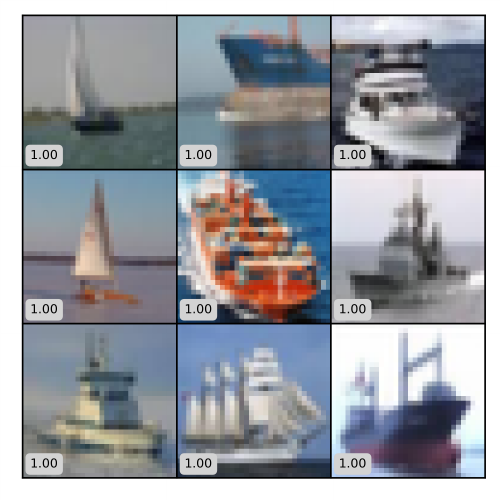}
    \end{subfigure}
    
    \begin{subfigure}{0.23\textwidth}
        \includegraphics[width=\linewidth, trim={0.2cm, 0.2cm, 0.2cm, 0.2cm}]{figures/appendix/h_topk_bottomk/cifar10_resnet18_p_mmse_sigma0.1_class1_bottomk.pdf}
    \end{subfigure}
    \begin{subfigure}{0.23\textwidth}
        \includegraphics[width=\linewidth, trim={0.2cm, 0.2cm, 0.2cm, 0.2cm}]{figures/appendix/h_topk_bottomk/cifar10_resnet18_p_mmse_sigma0.1_class1_topk.pdf}
    \end{subfigure}
    \begin{subfigure}{0.23\textwidth}
        \includegraphics[width=\linewidth, trim={0.2cm, 0.2cm, 0.2cm, 0.2cm}]{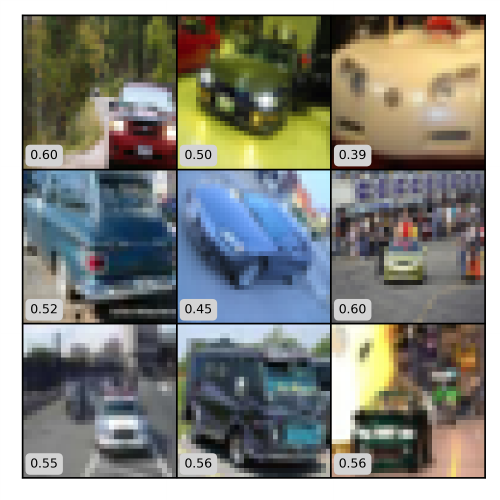}
    \end{subfigure}
    \begin{subfigure}{0.23\textwidth}
        \includegraphics[width=\linewidth, trim={0.2cm, 0.2cm, 0.2cm, 0.2cm}]{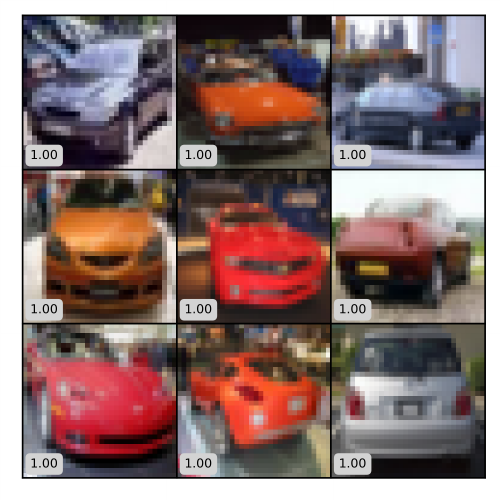}
    \end{subfigure}
    \caption{Additional images with the lowest and highest \probust{} and \psoftmax{} values among CIFAR10 classes. Images with high \probust{} tend to be brighter and have stronger object-background contrast (making them more robust to random noise) than those with low \probust{}. The difference between images with high and low \psoftmax{} is less clear. Thus, \probust{} better captures the model's local robustness with respect to an input than \psoftmax{}.}
    \label{fig-supp:topk-vs-bottomk}
\end{figure*}

\begin{figure*}[htbp!]
    \vspace{1cm}
    \centering
    \begin{flushleft}
        \hspace{-0.1cm}\rotatebox{90}{\hspace{-6.7cm}Cloud \hspace{2.8cm}Bicycle}
        \hspace{1.2cm}Lowest \pmmsewsigma{0.05}
        \hspace{1.5cm} Highest \pmmsewsigma{0.05}
        \hspace{1.5 cm} Lowest \psoftmax{}
        \hspace{1.5cm} Highest \psoftmax{}
    \end{flushleft}
         
    \begin{subfigure}{0.23\textwidth}
        \includegraphics[width=\linewidth, trim={0.2cm, 0.2cm, 0.2cm, 0.2cm}]{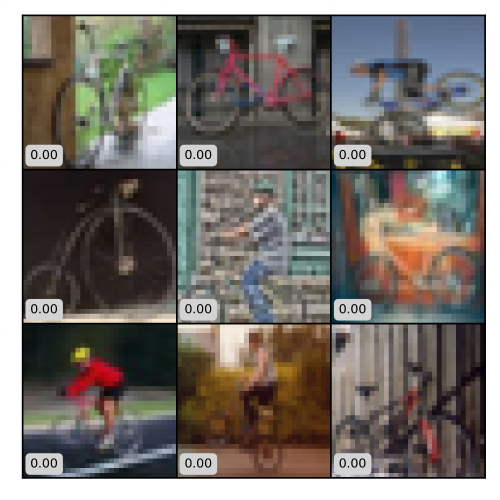}
    \end{subfigure}
    \begin{subfigure}{0.23\textwidth}
        \includegraphics[width=\linewidth, trim={0.2cm, 0.2cm, 0.2cm, 0.2cm}]{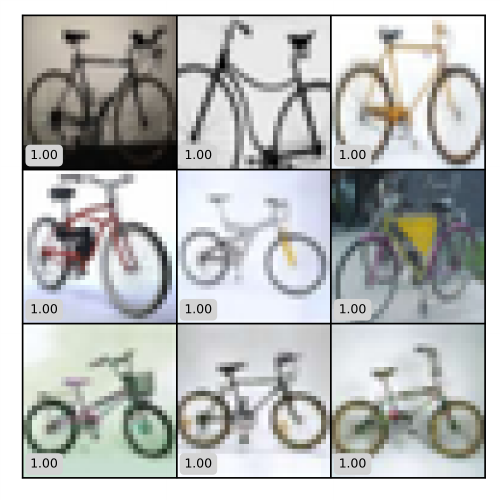}
    \end{subfigure}
    \begin{subfigure}{0.23\textwidth}
        \includegraphics[width=\linewidth, trim={0.2cm, 0.2cm, 0.2cm, 0.2cm}]{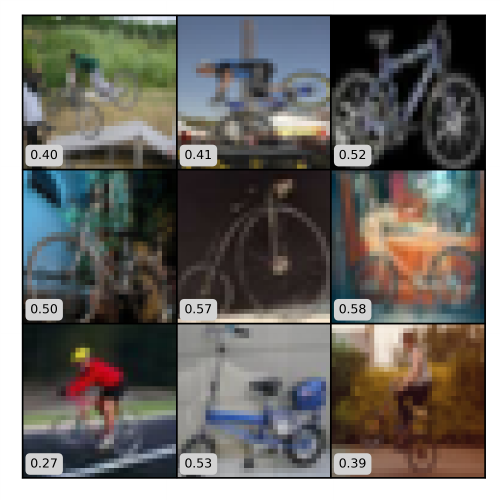}
    \end{subfigure}
    \begin{subfigure}{0.23\textwidth}
        \includegraphics[width=\linewidth, trim={0.2cm, 0.2cm, 0.2cm, 0.2cm}]{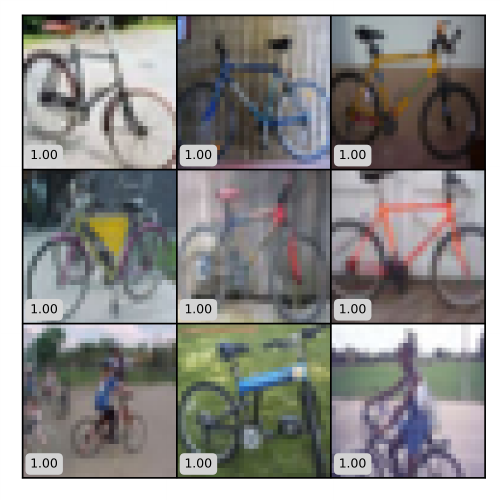}
    \end{subfigure}
    
    \begin{subfigure}{0.23\textwidth}
        \includegraphics[width=\linewidth, trim={0.2cm, 0.2cm, 0.2cm, 0.2cm}]{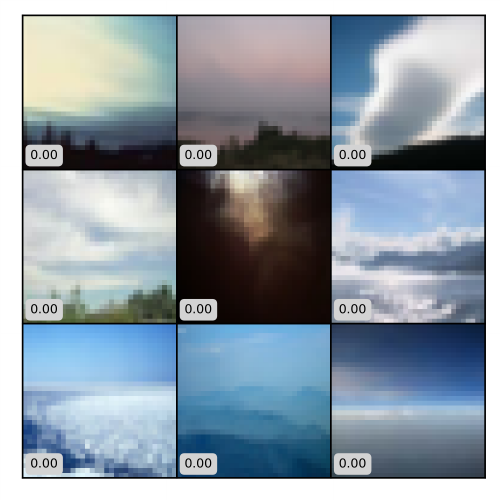}
    \end{subfigure}
    \begin{subfigure}{0.23\textwidth}
        \includegraphics[width=\linewidth, trim={0.2cm, 0.2cm, 0.2cm, 0.2cm}]{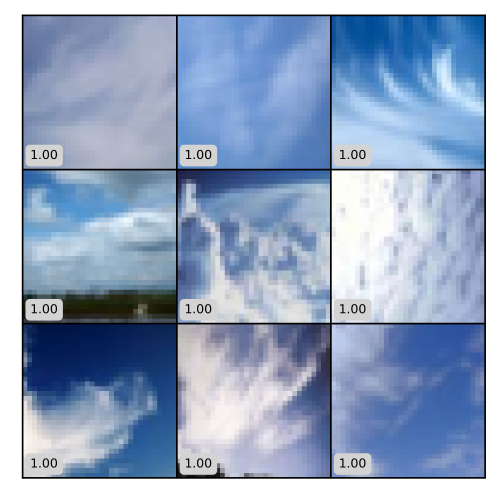}
    \end{subfigure}
    \begin{subfigure}{0.23\textwidth}
        \includegraphics[width=\linewidth, trim={0.2cm, 0.2cm, 0.2cm, 0.2cm}]{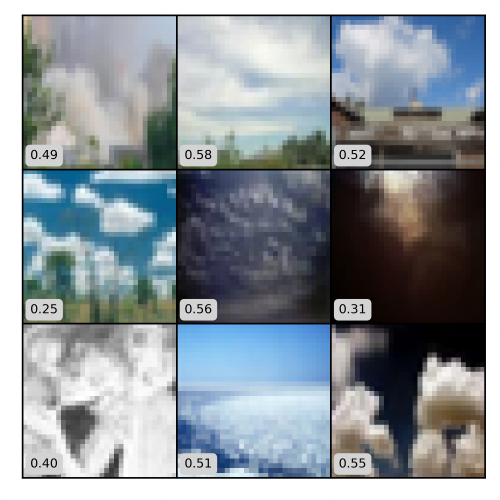}
    \end{subfigure}
    \begin{subfigure}{0.23\textwidth}
        \includegraphics[width=\linewidth, trim={0.2cm, 0.2cm, 0.2cm, 0.2cm}]{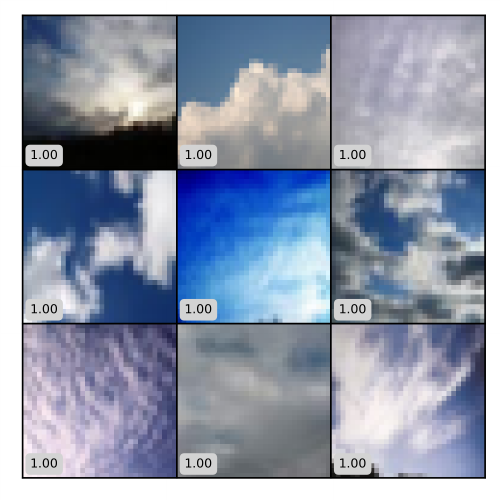}
    \end{subfigure}
    \caption{Images with the lowest and highest \probust{} and \psoftmax{} values among CIFAR100 classes. Images with high \probust{} tend to be brighter and have stronger object-background contrast (making them more robust to random noise) than those with low \probust{}. The difference between images with high and low \psoftmax{} is less clear. Thus, \probust{} better captures the model's local robustness with respect to an input than \psoftmax{}.}
    \label{fig6:topk-vs-bottomk}
\end{figure*}

\begin{figure*}[htpb!]
    \vspace{1cm}
    \centering
    \begin{flushleft}
        \hspace{-0.1cm}\rotatebox{90}{\hspace{-7.7cm}Digit 2 \hspace{1.6cm}Digit 1 \hspace{1.3cm} Digit 0}
        \hspace{1.3cm}Original 
        \hspace{1.1cm} $\sigma=0.2$ 
        \hspace{1.2cm} $\sigma=0.4$
        \hspace{1.2cm} $\sigma=0.6$
        \hspace{1.2cm} $\sigma=0.8$
        \hspace{1.2cm} $\sigma=1.0$
    \end{flushleft}

    \begin{subfigure}{0.9\textwidth}
        \includegraphics[width=\linewidth]{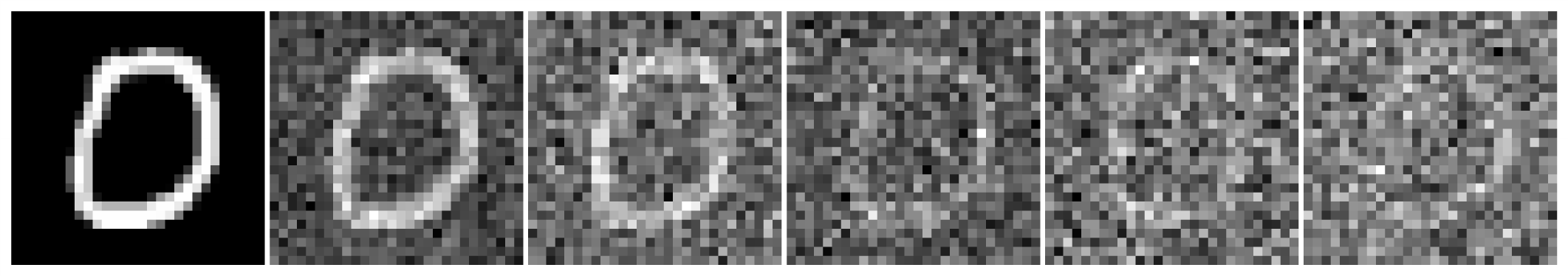}
    \end{subfigure}
    
    \begin{subfigure}{0.9\textwidth}
        \includegraphics[width=\linewidth]{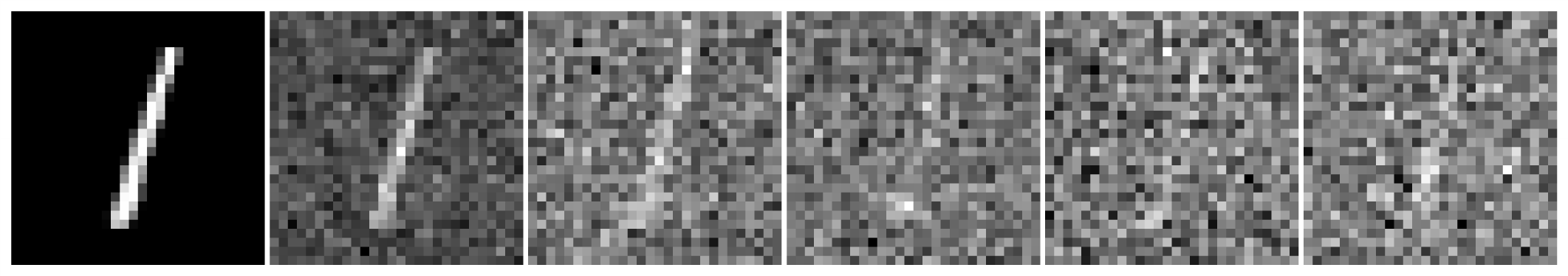}
    \end{subfigure}
    
    \begin{subfigure}{0.9\textwidth}
        \includegraphics[width=\linewidth]{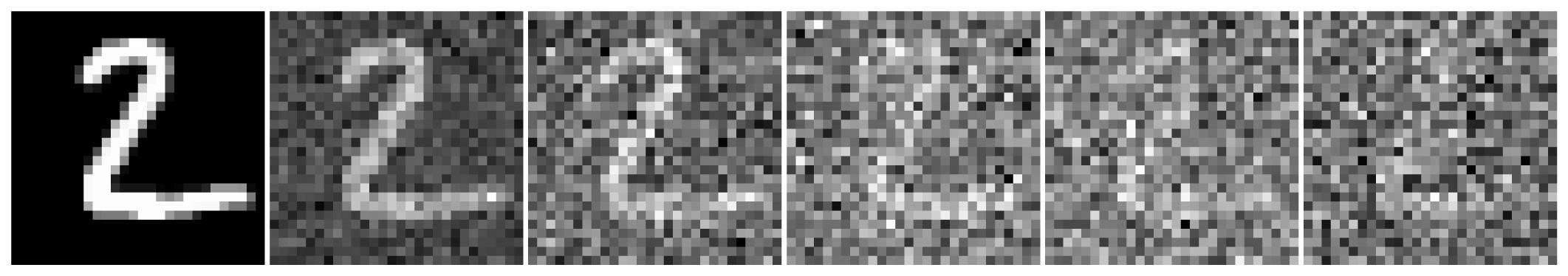}
    \end{subfigure}
    \caption{Examples of noisy images for MNIST.}
    \label{app:noisy_mnist}
\end{figure*}

\begin{figure*}[htbp!]
    \vspace{1cm}
    \centering
    \begin{flushleft}
        \hspace{-0.1cm}\rotatebox{90}{\hspace{-8cm}Ankle boot \hspace{1.1cm}Trousers \hspace{1.5cm} Shirt}
        \hspace{1.3cm}Original 
        \hspace{1.1cm} $\sigma=0.2$ 
        \hspace{1.2cm} $\sigma=0.4$
        \hspace{1.2cm} $\sigma=0.6$
        \hspace{1.2cm} $\sigma=0.8$
        \hspace{1.2cm} $\sigma=1.0$
    \end{flushleft}

    \begin{subfigure}{0.9\textwidth}
        \includegraphics[width=\linewidth]{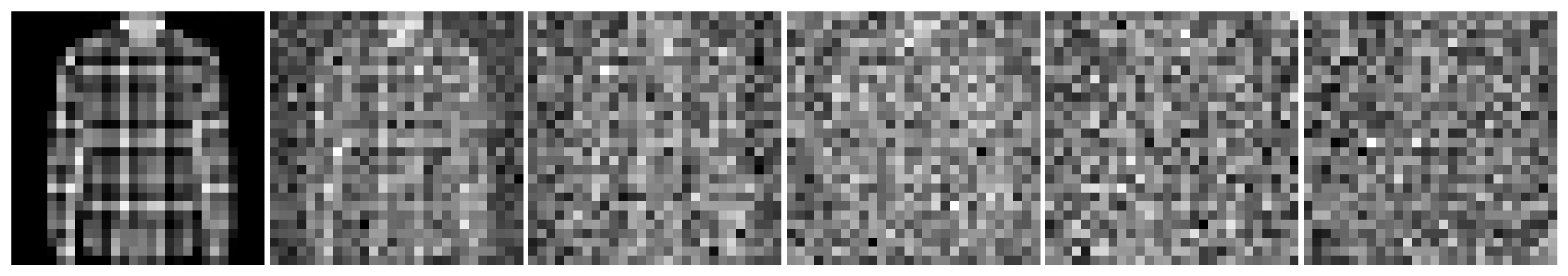}
    \end{subfigure}
    
    \begin{subfigure}{0.9\textwidth}
        \includegraphics[width=\linewidth]{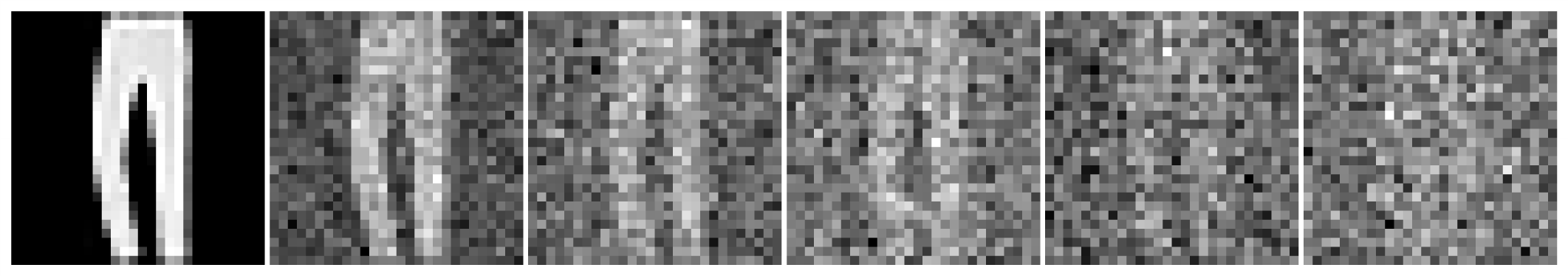}
    \end{subfigure}
    
    \begin{subfigure}{0.9\textwidth}
        \includegraphics[width=\linewidth]{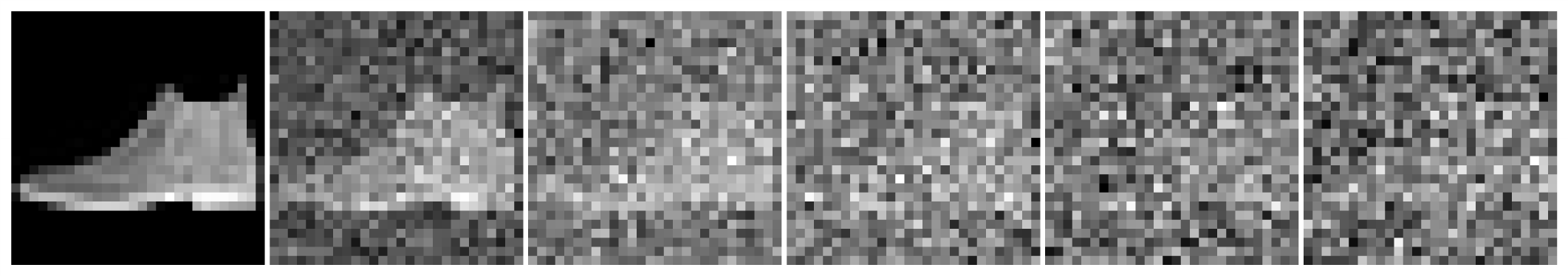}
    \end{subfigure}
    \caption{Examples of noisy images for FMNIST.}
    \label{app:noisy_fmnist}
\end{figure*}

\begin{figure*}[htbp!]
    \vspace{1cm}
    \centering
    \begin{flushleft}
        \hspace{-0.1cm}\rotatebox{90}{\hspace{-7.5cm}Ship \hspace{1.7cm}Airplane \hspace{1.5cm} Deer}
        \hspace{1.3cm}Original 
        \hspace{1cm} $\sigma=0.02$ 
        \hspace{1cm} $\sigma=0.04$
        \hspace{1cm} $\sigma=0.06$
        \hspace{1cm} $\sigma=0.08$
        \hspace{1.1cm} $\sigma=0.1$
    \end{flushleft}

    \begin{subfigure}{0.9\textwidth}
        \includegraphics[width=\linewidth]{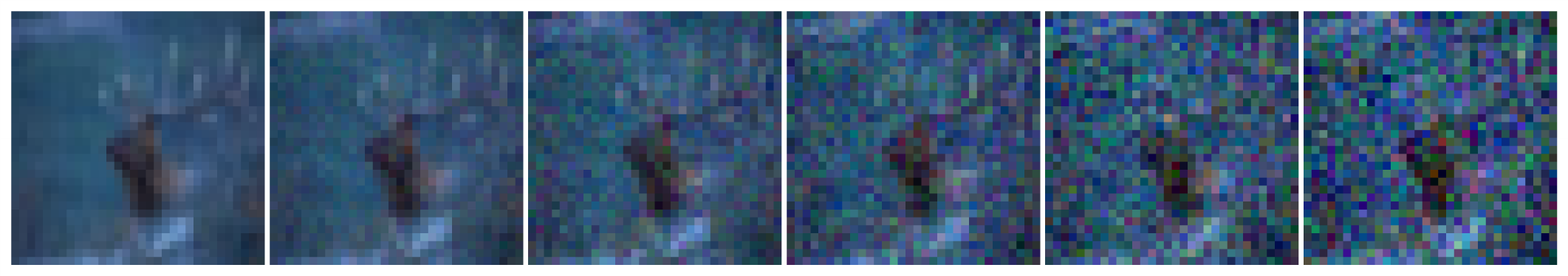}
    \end{subfigure}
    
    \begin{subfigure}{0.9\textwidth}
        \includegraphics[width=\linewidth]{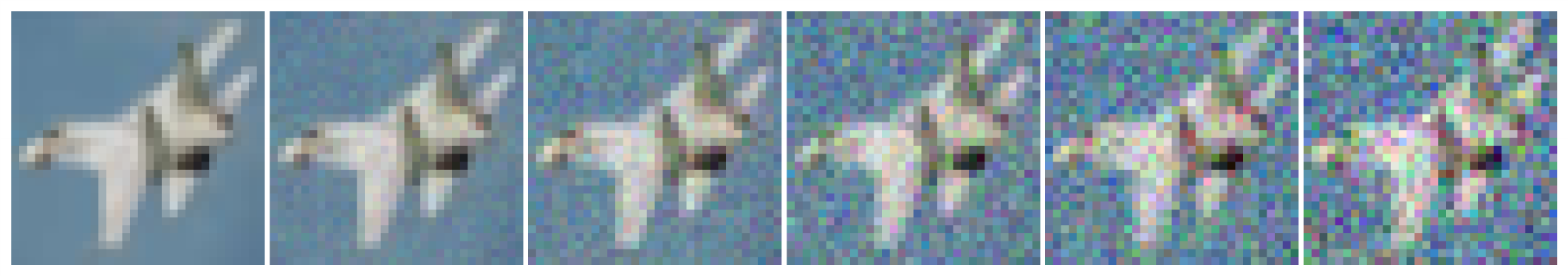}
    \end{subfigure}
    
    \begin{subfigure}{0.9\textwidth}
        \includegraphics[width=\linewidth]{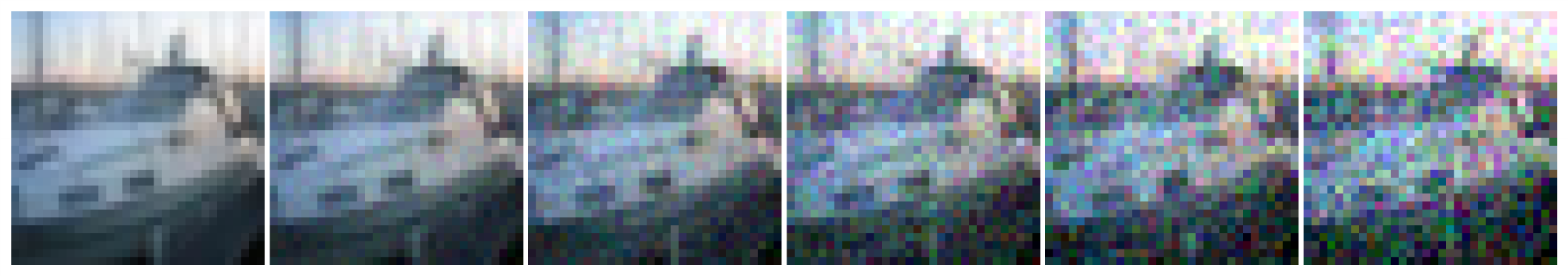}
    \end{subfigure}
    \caption{Examples of noisy images for CIFAR10.}
    \label{app:noisy_cifar10}
\end{figure*}

\begin{figure*}[htbp!]
    \vspace{1cm}
    \centering
    \begin{flushleft}
        \hspace{-0.1cm}\rotatebox{90}{\hspace{-7.5cm}Lion \hspace{1.8cm}Cloud \hspace{1.7cm} Sea}
        \hspace{1.3cm}Original 
        \hspace{1cm} $\sigma=0.02$ 
        \hspace{1cm} $\sigma=0.04$
        \hspace{1cm} $\sigma=0.06$
        \hspace{1cm} $\sigma=0.08$
        \hspace{1.1cm} $\sigma=0.1$
    \end{flushleft}

    \begin{subfigure}{0.9\textwidth}
        \includegraphics[width=\linewidth]{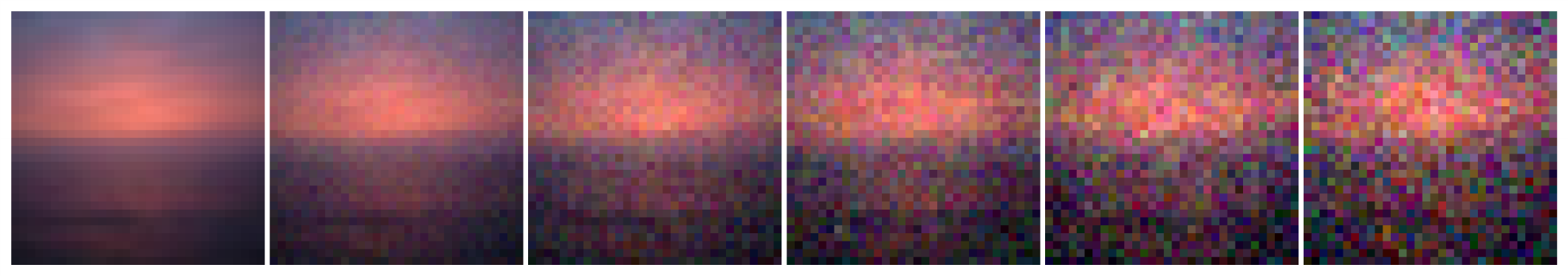}
    \end{subfigure}
    
    \begin{subfigure}{0.9\textwidth}
        \includegraphics[width=\linewidth]{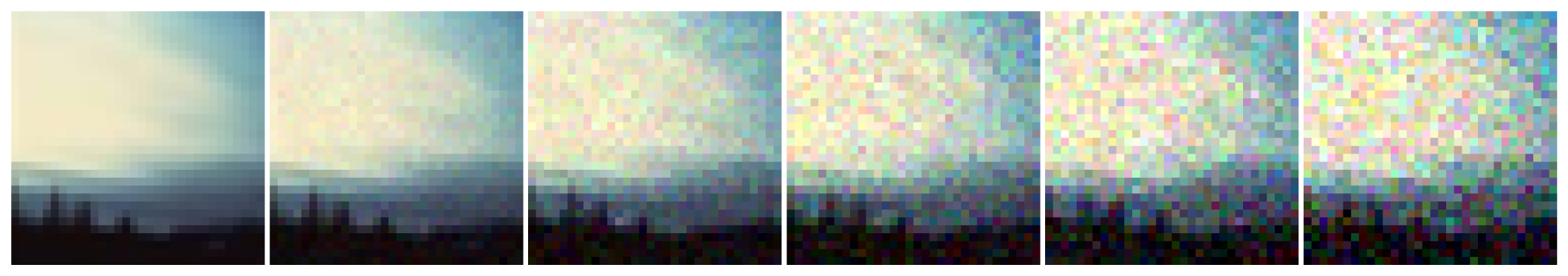}
    \end{subfigure}
    
    \begin{subfigure}{0.9\textwidth}
        \includegraphics[width=\linewidth]{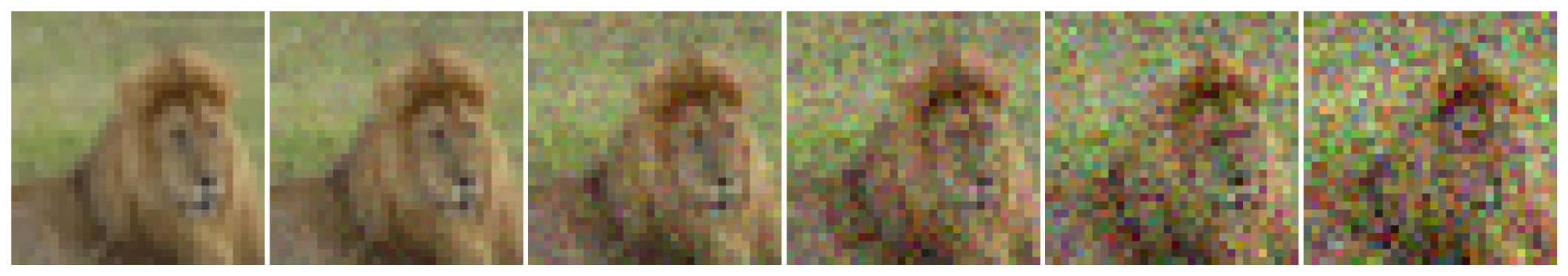}
    \end{subfigure}
    \caption{Examples of noisy images for CIFAR100.}
    \label{app:noisy_cifar100}
\end{figure*}


\end{document}